\documentclass{article} 
\usepackage{iclr2026_conference,times}

\usepackage{amsmath,amsfonts,bm}

\def\eqref#1{equation~\ref{#1}}

\def\1{\bm{1}}

\DeclareMathAlphabet{\mathsfit}{\encodingdefault}{\sfdefault}{m}{sl}
\SetMathAlphabet{\mathsfit}{bold}{\encodingdefault}{\sfdefault}{bx}{n}

\usepackage[utf8]{inputenc} 
\usepackage[T1]{fontenc}    
\usepackage{hyperref}       
\usepackage{url}            
\usepackage{booktabs}       
\usepackage{amsfonts}       
\usepackage{nicefrac}       
\usepackage{microtype}      
\usepackage{xcolor}         
\usepackage{graphicx}
\usepackage{subcaption}
\usepackage{wrapfig}
\usepackage{enumitem}
\usepackage{amsthm}
\usepackage{listings}
\usepackage{float}
\usepackage[font=small,skip=4pt]{caption}

\lstdefinestyle{promptstyle}{
    basicstyle=\ttfamily\small,
    breaklines=true,
    frame=single, 
    rulecolor=\color{black!30},
    backgroundcolor=\color{gray!5}, 
    aboveskip=1em,
    belowskip=1em,
    moredelim=**[is][\bfseries]{**}{**},
    literate={—}{{---}}1
}
\newcommand{\red}[1]{\textcolor{red}{#1}}
\newcommand{\blue}[1]{\textcolor{blue}{#1}}
\newtheorem{theorem}{Theorem}[section]
\newtheorem{corollary}[theorem]{Corollary}
\newtheorem{lemma}[theorem]{Lemma}
\DeclareUnicodeCharacter{2460}{\textcircled{\scriptsize 1}}
\DeclareUnicodeCharacter{2461}{\textcircled{\scriptsize 2}}
\DeclareUnicodeCharacter{2462}{\textcircled{\scriptsize 3}}
\DeclareUnicodeCharacter{2463}{\textcircled{\scriptsize 4}}

\title{DIVA-GRPO: Enhancing Multimodal Reasoning through Difficulty-Adaptive Variant Advantage}

\renewcommand{\thefootnote}{\fnsymbol{footnote}}

\author{
\textbf{Haowen Gao\textsuperscript{1,2$\dagger$},
Zhenyu Zhang\textsuperscript{3$\dagger$},
Liang Pang\textsuperscript{1$*$},
Fangda Guo\textsuperscript{1},
Hongjian Dou\textsuperscript{3},} \\
\textbf{Guannan Lv\textsuperscript{3},
Shaoguo Liu\textsuperscript{3$*$},
Tingting Gao\textsuperscript{3},
Huawei Shen\textsuperscript{1},
Xueqi Cheng\textsuperscript{1}} \\
\\
\textsuperscript{1} State Key Laboratory of AI Safety, Institute of Computing Technology, CAS, Beijing, China \\
\textsuperscript{2} University of Chinese Academy of Sciences, Beijing, China \\
\textsuperscript{3} Kuaishou Technology, Beijing, China \\
\\
\texttt{gaohaowen23s@ict.ac.cn, zhangzhenyu06@kuaishou.com} \\
\texttt{pangliang@ict.ac.cn} \\
\\

}
\iclrfinalcopy 

\begin{document}

\maketitle

\renewcommand{\thefootnote}{\fnsymbol{footnote}}
\footnotetext[2]{Equal contribution} 
\footnotetext[1]{Corresponding authors}
\renewcommand{\thefootnote}{\arabic{footnote}}

\begin{abstract}
Reinforcement learning (RL) with group relative policy optimization (GRPO) has become a widely adopted approach for enhancing the reasoning capabilities of multimodal large language models (MLLMs). While GRPO enables long-chain reasoning without a traditional critic model, it often suffers from sparse rewards, arising from the scarcity of positive feedback on difficult problems, and from advantage vanishing, which occurs when group-level rewards exhibit high consistency for problems that are too easy or too hard. Existing solutions fall into three categories: sample enhancement and expansion, which may aggravate vanishing advantage due to poor control of difficulty distribution; selective sample utilization, which fails to fully leverage the value of all data; and indirect reward design, which may introduce biased optimization directions due to misalignment between reasoning and the final outcome. However, these approaches overlook a fundamental question: \textbf{\textit{for a given problem, how can we ensure that the within-group reward distribution of responses exhibits enough variance to yield clear optimization signals for each response?}} To address these issues, we propose DIVA-GRPO, a difficulty-adaptive variant augmentation advantage method that dynamically adjusts the difficulty distribution of variants for each problem from a global perspective. Our method dynamically assesses problem difficulty, samples variants with appropriate difficulty levels, and calculates advantages within both local and global (a problem and its variants) groups using difficulty-weighted and normalized scaling. This design alleviates reward sparsity and advantage vanishing, minimizes data waste, and improves training stability. Extensive experiments on six reasoning benchmarks demonstrate that DIVA-GRPO outperforms existing approaches in both training efficiency and reasoning performance. Code is available at \url{https://github.com/Siaaaaaa1/DIVA-GRPO}.
\end{abstract}

\section{Introduction}

Multimodal large language models (MLLMs) \citep{chen2024far,hurst2024gpt,laurenccon2024building,liu2023visual,yin2024survey,wu2024deepseek,alayrac2022flamingo,zhu2023minigpt,li2023blip} have demonstrated remarkable ability to integrate textual and visual information \citep{pang2024mmaf} for complex reasoning tasks, such as visual question answering \citep{antol2015vqa, xiao2024logicvista} and multimodal logical reasoning \citep{huang2022towards,guo2025geovlmath}. Nevertheless, the heterogeneous nature of text and visual modalities makes long-chain reasoning challenging, requiring both careful observation and stepwise problem-solving \citep{xu2024search}. To address these challenges, recent studies have explored multimodal chain-of-thought approaches \citep{zhou2024miceval,chen2024m,wang2025multimodal}, including Video-of-Thought \citep{fei2024video}, Det-CoT \citep{wu2024dettoolchain}, CoI \citep{meng2023chain}, and Grounded-RL \citep{sarch2025grounded}, which aim to improve reasoning by decomposing problems or leveraging structured strategies. Complementary to these advances, reinforcement learning (RL) has emerged as a powerful framework for further enhancing MLLMs: proximal policy optimization (PPO) \citep{schulman2017proximal} and direct preference optimization (DPO) \citep{rafailov2023direct} are widely used for alignment, while GRPO \citep{shao2024deepseekmath} advances long-chain reasoning by combining rewards with group-relative advantage estimation, significantly boosting both alignment and reasoning efficiency.

\begin{figure}
    \centering
    \includegraphics[width=0.98\textwidth]{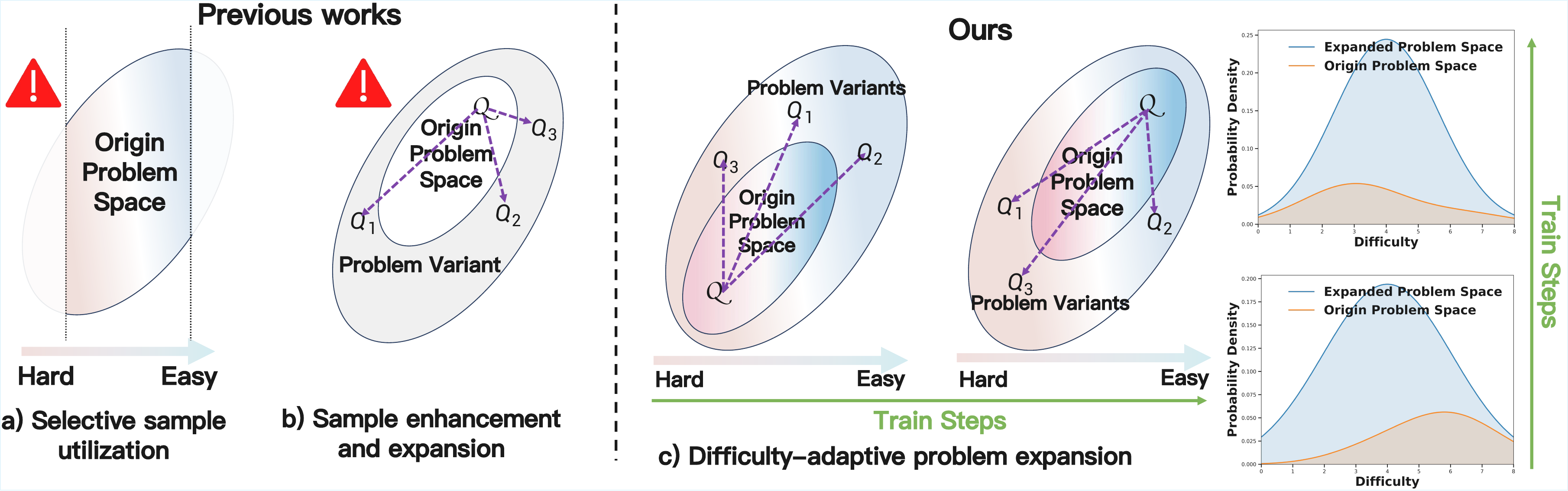}
    \caption{(a) Selective sample utilization relies on only a subset of data, leading to underuse. 
    (b) Sample enhancement expands data without difficulty awareness, causing even severe advantage sparsity. 
    (c) Our method adaptively expands the sample space by problem difficulty, ensuring a stable difficulty distribution.}
    \label{fig:main}
    \vspace{-0.2cm}
\end{figure}

However, directly applying GRPO to MLLMs faces a critical challenge---advantage vanishing \citep{huang2025boosting,park2025deepvideo,yao2025r1,wang2025vl,meng2025mm,zhang2025r1,zhang2025grpo}. GRPO computes relative advantages by comparing sampled responses for a given problem and then standardizes the advantages across all samples. 
When a problem is too easy (or too difficult) for the current model, obtaining too few positive (or negative) rewards can lead to excessively large advantage values, causing high gradient noise and unstable training. Under extreme conditions, all responses may be entirely correct or incorrect, yielding zero relative advantage. 
This slows learning and weakens the model's reasoning performance across varying difficulties. Existing approaches attempt to address this issue in several ways. One approach is sample enhancement and expansion, which enlarges the problem space by adding prompts to difficult instances or generating diverse text and image variants, though it may exacerbate advantage vanishing due to poor control over the difficulty distribution \citep{huang2025boosting,park2025deepvideo,yao2025r1}and introduce hidden biases \citep{gao2025generative}. Another approach is selective sample utilization, which prioritizes highly effective samples or disregards those with low learning contribution to improve efficiency; however, this may limit exposure to complex problems or reduce data diversity \citep{wang2025vl,meng2025mm}. Finally, indirect reward design provides finer-grained feedback signals to mitigate reward sparsity, but these rewards may not perfectly align with ultimate task objectives, potentially guiding the model toward suboptimal directions \citep{zhang2025r1,zhang2025grpo}.

Although the above methods can mitigate certain instances of advantage vanishing, they all overlook a critical issue: \textbf{\textit{as shown in Figure~\ref{fig:main}(c), as training progresses, problem difficulty continuously decreases, advantage vanishing intensifies, and model training efficiency steadily declines.}} To address this, we propose DIVA-GRPO, an adaptive method that dynamically adjusts problem difficulty and variant distribution while preserving semantic consistency, and integrates both local (individual problem) and global (multiple variants derived from the original problem) reward computation. During training, problem difficulty is iteratively assessed based on model responses: simple problems are augmented with complex text and image perturbations, moderately difficult ones with diverse text variants, and hard ones with “think-step” reasoning variants, ensuring effective sample utilization and mitigating advantage vanishing. To balance feedback across different difficulty levels, DIVA-GRPO applies batch z-score normalization and difficulty-weighted scaling, and further introduces reward-range-based rescaling to prevent inflated advantages from minor reward differences. This enables stable policy optimization, encourages exploration, and enhances long-chain reasoning, while remaining broadly applicable to GRPO-based methods for improved optimization efficiency.

Extensive experiments validate the effectiveness and efficiency of \textsc{DIVA-GRPO}. Our model excels across six challenging benchmarks, achieving state-of-the-art performance at the 7B scale, approaching the accuracy of much larger models and proprietary commercial systems. Moreover, \textsc{DIVA-GRPO} substantially accelerates training. Using the same set of original data, it reaches the optimal performance with over a 2.55$\times$ reduction in required steps and delivers more than a 1.76$\times$ end-to-end speedup in wall-clock time, all while preserving stable and informative advantage signals across difficulty levels. These results highlight the substantial improvements in both performance and training efficiency offered by our model.

Overall, DIVA-GRPO overcomes the limitations of conventional GRPO by combining adaptive difficulty assessment, variant generation, and robust advantage estimation. This framework alleviates reward sparsity and advantage vanishing, while enhancing reasoning stability and performance across a wide range of multimodal tasks, from visual question answering to complex logical reasoning. Our core contributions are as follows:

\begin{itemize}[left=10pt, labelsep=5pt, topsep=0pt, partopsep=0pt, itemsep=2pt, parsep=0pt]
\item Propose DIVA-GRPO, based on dynamic difficulty assessment, using difficulty-adaptive variant generation and advantage sharing to alleviate reward sparsity and advantage vanishing.
\item Introduce joint estimation of local and global advantages, combined with batch z-score normalization and difficulty-weighted scaling, to improve training stability.
\item Present a reward-range-based advantage rescaling method, effectively preventing unreasonable advantage inflation and accelerating convergence.
\item Conduct experiments on six mainstream multimodal reasoning benchmarks, demonstrating the effectiveness and superiority of DIVA-GRPO.
\end{itemize}

\section{Challenge: Reward Sparsity and Advantage Vanishing}

We begin by reviewing the key concepts of Group Relative Policy Optimization (GRPO), which form the foundation of our difficulty-adaptive variant strategy.
\vspace{-0.8em}
\paragraph{Group Relative Policy Optimization (GRPO).}  
Let $\mathcal{Q} = \{q_1, q_2, \dots, q_N\}$ denote the set of training problems. For a given problem $q$, the model $\pi_\theta$ generates $k$ candidate responses $\mathcal{Y}_q = \{y_1, y_2, \dots, y_k\}$ through rollouts. Each response $y_i$ receives a scalar reward $r(y_i) \in \mathbb{R}$ determined by a reward function, which can be either a rule-based evaluator (e.g., answer correctness) or a learned reward model. GRPO computes the \emph{relative advantage} of each response within its group by standardizing its reward:
\[
A(y_i) = \frac{r(y_i) - \mu_r}{\sigma_r + \epsilon}, \quad 
\mu_r = \frac{1}{k}\sum_{j=1}^k r(y_j), \quad 
\sigma_r = \sqrt{\frac{1}{k}\sum_{j=1}^k (r(y_j)-\mu_r)^2},
\]
where $\epsilon$ is a small constant for numerical stability. The policy gradient is then estimated as
\[
\nabla_\theta \mathcal{L}(\theta) = \mathbb{E}_{q \sim \mathcal{Q},\, y \sim \pi_\theta}\big[ A(y) \, \nabla_\theta \log \pi_\theta(y|q)\big].
\]
\vspace{-1.8em}
\paragraph{Reward Sparsity and Advantage Vanishing.}  

GRPO and its extensions often face \textbf{reward sparsity}: when the base multimodal model has limited capability or the problem is difficult, only a few reasoning paths obtain positive rewards, especially in early training. They also suffer from \textbf{advantage vanishing}: since relative advantages are computed within response groups, overly hard or easy problems lead to all-correct or all-wrong outputs, yielding zero advantages. As training proceeds, this issue worsens, reducing optimization efficiency and sample utilization.

\vspace{-0.8em}
\paragraph{Motivation.} Existing solutions address these problems through (i) \emph{sample enhancement and expansion}, which enlarges the problem space without adjusting intrinsic difficulty; (ii) \emph{sample selection and utilization}, which improves efficiency but risks discarding valuable hard cases and reducing diversity; and (iii) \emph{indirect reward design}, which enriches supervision but may bias optimization. While partially effective, these methods cannot guarantee stable reward variance within each group.  

This motivates our central question: \textbf{\textit{how can we preserve stable and informative reward variance regardless of problem difficulty?}} We argue that the key lies in dynamically assessing problem difficulty and adaptively adjusting the sampling of variants, ensuring every problem produces both positive and negative feedback.  

\section{Difficulty-Adaptive Variant Advantage with GRPO}

To address this challenge, we design a difficulty-adaptive method, \textbf{DIVA-GRPO}. Its core idea is to dynamically assess problem difficulty, adaptively sample semantically consistent variants of different difficulties, and enhance the diversity of rewards within the problem and its variant space. By computing difficulty-weighted local (for the original problem) and global (for the problem and its variants) group advantages, the framework mitigates reward sparsity and vanishing advantage issues. Figure~\ref{fig:example} illustrates the overall pipeline.

\begin{figure}
    \centering
    \includegraphics[width=0.98\textwidth]{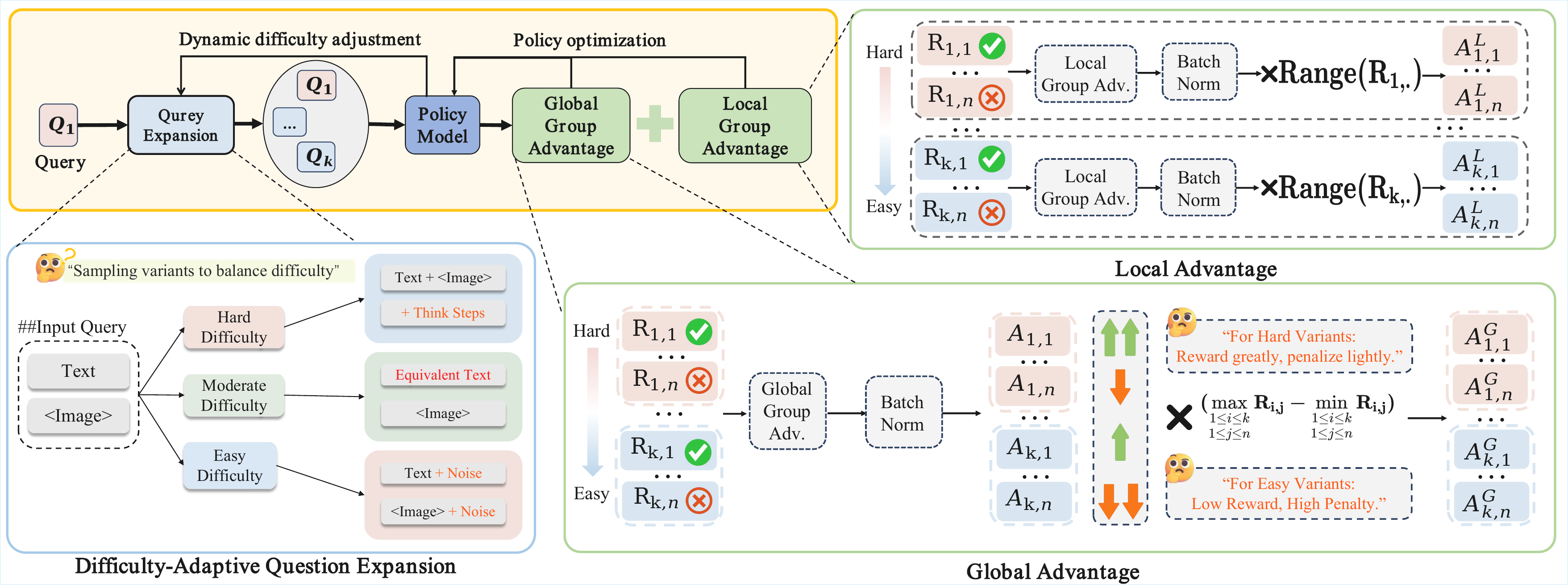}
    \captionsetup{skip=2pt}
    \caption{Overview of the proposed \textbf{DIVA-GRPO} method. For a given question, we dynamically assess its difficulty based on past rollout rewards and adaptively sample variants of different difficulty levels. As shown, when the original question is hard, easier variants are sampled to ensure reward diversity. We then compute local (the question itself) and global (the question with its variants) advantages, and obtain the final advantage through difficulty-aware reweighting and reward-range rescaling to update the policy model.}
    \label{fig:example}
\end{figure}

\subsection{Difficulty Assessment of Samples}

A prerequisite for implementing adaptive strategies is to assign an appropriate difficulty level to each problem. Notably, problem difficulty is neither inherent nor static; rather, it should be dynamically assessed in line with the evolving capability of the model. To this end, DIVA-GRPO introduces a difficulty assessment mechanism based on historical rollout outcomes: if most rollouts for a problem are correct, it is regarded as relatively easy for the current model; otherwise, it is considered relatively difficult. This assessment is recalibrated at every training epoch, ensuring that perceived difficulty adapts as the model improves. The resulting difficulty estimates are then used to guide variant generation and sampling strategies, enabling the model to balance exploration and exploitation while maintaining meaningful advantage signals continuously.

We define problem difficulty within the range \( [D_{\min}, D_{\max}] \), where \( D_{\min} \) represents the easiest level and \( D_{\max} \) the most difficult. At the beginning of training, all problems are initialized at the midpoint of this range: \( D_\text{mid} = \frac{D_{\min} + D_{\max}}{2} \).
Let each original problem contain \( m \) variants, and let each variant be rolled out \( k \) times. We denote the empirical accuracy of these rollouts as \( \alpha = \frac{1}{mk} \sum_{i=1}^m \sum_{j=1}^k \mathbb{I}[y_{i,j} \text{ is correct}] \), where \( \mathbb{I}[\cdot] \) is the indicator function.
The difficulty is then updated according to the following rule:
\[
D^{\text{new}} = \operatorname{clip}\Big(D^{\text{old}} + \eta \cdot (0.5 - \alpha), \, D_{\min}, \, D_{\max}\Big),
\]
where \( \eta > 0 \) is a learning rate controlling the adjustment magnitude, and \( \operatorname{clip}(\cdot, D_{\min}, D_{\max}) \) ensures that the difficulty remains within \( [D_{\min}, D_{\max}] \). This update rule guarantees that:
\begin{itemize}[left=10pt, labelsep=5pt, topsep=0pt, partopsep=0pt, itemsep=2pt, parsep=0pt]
\item if most rollouts are correct (\( \alpha \to 1 \)), the difficulty decreases;
\item if most rollouts fail (\( \alpha \to 0 \)), the difficulty increases;
\item if the accuracy is around \( 50\% \), the difficulty remains stable.
\end{itemize}

In this way, difficulty levels evolve smoothly with model performance, allowing more nuanced adjustments than the simple “all-correct/all-wrong” rule, and ensuring continuous calibration of sampling strategies throughout training. For the hyperparameter \( \eta \), ablation experiments and recommendations for its selection can be found in Appendix \ref{appendixD:eta}.

\subsection{Difficulty-Adaptive Variant Generation}

We aim to provide each problem with an appropriate advantage. To this end, we introduce a method for generating adaptive variants based on the difficulty assessment of each problem derived above. Let each original problem be denoted as \( q = (I_q, T_q) \), where \( I_q \) is the associated image and \( T_q \) is the textual description and question. After evaluating the difficulty \( D_q \in [D_{\min}, D_{\max}] \) of each problem, we generate a set of difficulty-specific consistent variants \( \mathcal{V}_q = \{ q^{(1)}, q^{(2)}, \dots \} \) for training, ensuring that each variant preserves the original answer \( y_q^* \) while selectively adjusting its difficulty to optimize the model’s learning. By dynamically adjusting the characteristics of each variant according to \( D_q \), we aim to ensure that the reward distribution for each problem provides meaningful advantage signals while maintaining a more balanced profile. The sampling method is described in Appendix \ref{appendixD:eta}, and several case studies of the variants are presented in Appendix \ref{appendix:casestudy}.

\begin{itemize}[left=10pt, labelsep=5pt, topsep=0pt, partopsep=0pt, itemsep=2pt, parsep=0pt]
    \item \textbf{Simpler problems (\( D_q < D_{\text{mid}} \))}: Variants are generated by perturbing both the text and the image: \( q^{(i)} = (I_q^{(i)}, T_q^{(i)}) \). Textual variants \( T_q^{(i)} \) are created by rephrasing, restructuring, or introducing slight linguistic perturbations to the original question, ensuring the answer remains unchanged. These modifications enhance the model's sensitivity to subtle differences, thereby improving robustness. Image variants \( I_q^{(i)} \) are generated through perturbation functions such as rotation, Gaussian noise, salt-and-pepper noise, speckle noise, and blurring. Stronger or multiple perturbations increase difficulty while preserving correctness. Alternatively, the textual information \( T_q \) can be embedded within the images in the variant set \( \mathcal{V}_q \) and provided as images to convey the problem information.
    
    \item \textbf{Moderate problems (\( D_q \approx D_{\text{mid}} \))}: Variants are generated by creating semantically equivalent textual versions: \( \mathcal{V}_q = \{(I_q, T_q^{(i)}) \mid T_q^{(i)} \text{ is a paraphrase of } T_q \} \). These variants maintain the original difficulty while diversifying expression. This allows the model to experience multiple formulations of the same problem, improving generalization to unseen expressions.

    \item \textbf{Difficult problems (\( D_q > D_{\text{mid}} \))}: Variants incorporate partial reasoning guidance: \( q^{(i)} = (I_q, T_q \oplus R_q^{(i)}) \), where \( R_q^{(i)} \) is a sequence of intermediate reasoning steps generated and verified by a closed-source model. For more difficult problems, additional reasoning steps are provided as hints. This ensures that even challenging problems produce meaningful advantage signals, mitigating gradient vanishing and promoting gradual mastery of complex reasoning.
\end{itemize}

\subsection{Difficulty-Weighted and Normalized Advantage Balancing}

Before introducing our balancing strategy, we first review the notion of local and global advantages used in GRPO-based training. While GRPO computes advantages within a single problem, recent work \cite{yao2025r1} introduces semantically consistent variants to enable broader comparisons. Let an original problem be denoted as $q = (I_q, T_q)$, with associated image $I_q$ and textual description $T_q$. For each $q$, a set of variants $\mathcal{V}_q = \{q^{(1)}, \dots, q^{(N)}\}$ is constructed, modifying only text or image while preserving the ground-truth answer. Two types of advantages are defined:
\begin{itemize}[left=10pt, labelsep=5pt, topsep=0pt, partopsep=0pt, itemsep=2pt, parsep=0pt]
    \item \textbf{Local advantage:} $A_{\text{local}}(y_i^{(i)})$, computed for each problem using the standard GRPO formula.
    \item \textbf{Global advantage:} $A_{\text{global}}(y_i^{(j)})$, computed across all responses in the variant set: 
    $A_{\text{global}}(y_i^{(j)}) = \frac{r(y_i^{(j)}) - \mu_q}{\sigma_q + \epsilon}$, 
    where $\mu_q$ and $\sigma_q$ are the mean and standard deviation of rewards across all responses in $\mathcal{V}_q$.
\end{itemize}

To ensure that both local and global advantages contribute effectively while accounting for varying problem difficulty, we propose a two-step balancing strategy: (1) \emph{normalization} to make local and global signals comparable, and (2) \emph{difficulty-weighted scaling} to adaptively rescale advantages according to each problem’s dynamic difficulty coefficient. This design stabilizes optimization by preventing dominance of global advantages and aligning reward signals with problem difficulty.  

Formally, after sampling multiple variants for each problem, we compute both local and global advantages. However, two issues arise in this process. 
\vspace{-1em}
\paragraph{(1) Local–Global imbalance.}  
The magnitudes of local and global advantages are unequal: local advantages are computed from $k$ rollouts, whereas global advantages are based on $m \times k$ samples. Consequently, global advantages tend to be larger and dominate the optimization, while local signals are underweighted. A detailed analysis of the distributions of the original local and global advantages can be found in Appendix \ref{appendixD:effective}. To address this issue, we apply batch-level $z$-score normalization separately:
\[
\tilde{A}_{\text{local}}(y) = \frac{A_{\text{local}}(y) - \mu_{\text{local}}}{\sigma_{\text{local}} + \epsilon}, \quad 
\tilde{A}_{\text{global}}(y) = \frac{A_{\text{global}}(y) - \mu_{\text{global}}}{\sigma_{\text{global}} + \epsilon},
\] 
where $\mu_{\text{local}}, \sigma_{\text{local}}$ and $\mu_{\text{global}}, \sigma_{\text{global}}$ are the mean and std of local and global advantages within a batch. This ensures that both signals remain comparable and contribute fully to training.

\vspace{-1em}
\paragraph{(2) Difficulty-weighted scaling.}  
\label{tex:weighted}
Existing methods treat easy and difficult problems equally when computing advantages, without accounting for their varying difficulty levels. To encourage the model to tackle harder problems, we introduce difficulty-weighted scaling after normalization.

Let $\{D_{q}^{(i)}\}_{i=1}^N$ denote the difficulty coefficients of the $N$ variants in a problem group $\mathcal{V}_q$, and let $\bar{D}_q = \tfrac{1}{N}\sum_{i=1}^N D_q^{(i)}$ be the group-wise mean difficulty.  
For each response $y_i$ associated with variant $q^{(i)}$, the rescaled advantage is computed as
\[
\hat{A}(y_i \mid q^{(i)}) = \exp\!\Big(k \cdot (D_q^{(i)} - \bar{D}_q)\cdot \operatorname{sgn}(\tilde{A}(y_i))\Big)\cdot \tilde{A}(y_i),
\]
where $\tilde{A}(y_i)$ is the normalized advantage, $\operatorname{sgn}(\cdot)$ is the sign function, and $k>0$ is a sensitivity. The reason for using relative difficulty weighting is analyzed and supported by ablation experiments in Appendix \ref{appendixD:absolute}, while the analysis of the sensitivity parameter \( k \) are presented in Appendix \ref{appendixD:k}.

Intuitively, when a variant is harder than the group average ($D_q^{(i)} > \bar{D}_q$), correct answers ($\tilde{A}>0$) are amplified while incorrect ones are softened. Conversely, when a variant is easier than average, correct answers are down-weighted and incorrect ones are penalized more heavily.  

In this way, the training process achieves difficulty-adaptive optimization, effectively balancing the contributions of both the relative difficulty within a problem group and the magnitudes of advantages. The theoretical validity of this balancing strategy is rigorously established in Appendix~\ref{appendix:B}, where Theorem~\ref{thm:variance-control} demonstrates that reducing gradient variance accelerates convergence, and Corollary~\ref{cor:normalized-advantage} shows that our normalization and weighting strategy significantly decreases variance while preserving unbiased gradient estimates. Moreover, Appendix~\ref{appendix:C} mathematically confirms that the optimization signal is strongest when the ratio of correct to incorrect samples is approximately 1:1, providing a solid theoretical foundation for our dynamic difficulty adjustment mechanism.

\begin{table}
\centering
\caption{Performance comparison across multimodal mathematical benchmarks. Bold denotes the best performance among 7B models, and underline marks the best overall performance. Evaluation is conducted with VLMEvalKit~\citep{duan2024vlmevalkit}, while results for other models are taken from \citet{meng2025mm} and \citet{yao2025r1}. For each entry, the score before “/” is our re-evaluation using the officially released checkpoints, and the score after “/” is reported in the original paper.}
\label{tab:multimodal_math_benchmarks}
\resizebox{0.92\textwidth}{!}{%
\begin{tabular}{lcccccc|c}
\toprule
\textbf{Model} & \textbf{MathVista} & \textbf{MathVerse} & \textbf{MathVision} & \textbf{OlympiadBench} & \textbf{WeMath} & \textbf{MMK12test} & \textbf{Avg.} \\
\midrule
Claude3.7-Sonnet & 66.8 & 52.0 & 41.3 & \emph{48.9} & 72.6 & 55.3 & 56.15 \\
GPT-4o & 63.8 & 50.2 & 30.4 & \emph{35.0} & 68.8 & 49.9 & 49.68 \\
o1 & 73.9 & 57.0 & \underline{60.3} & \underline{68.0} & \underline{98.7} & \underline{73.9} & \underline{72.30} \\
Gemini2-flash & 70.4 & \underline{59.3} & 41.3 & 51.0 & 71.4 & 65.2 & 59.77 \\
\midrule
InternVL2.5-VL-8B & 64.4 & 39.5 & 19.7 & 12.3 & 53.5 & 45.6 & 39.17 \\
Qwen-2.5-VL-7B & 68.2 & 47.9 & 25.4 & 20.2 & 62.1 & 53.6 & 46.23 \\
InternVL2.5-VL-38B & 71.9 & 49.4 & 31.8 & 32.0 & 67.5 & 58.0 & 51.77 \\
Qwen-2.5-VL-32B &  71.7/74.7 & 49.9 & 40.1 & 30.0 & 69.1 & 66.8 & 54.6 \\
InternVL2.5-VL-78B & 72.3 & 51.7 & 32.2 & 31.1 & 66.3 & 61.6 & 52.53 \\
Qwen-2.5-VL-72B & \underline{74.8} & 57.6 & 38.1 & 40.4 & 72.4 & 70.5 & 59.0 \\
\midrule
InternVL2.5-8B-MPO & 68.9 & 35.5 & 21.5 & 7.8 & 53.5 & 34.5 & 36.95 \\
InternVL2.5-38B-MPO & 73.8 & 46.5 & 32.3 & 25.6 & 66.2 & 48.3 & 48.78 \\
QVQ-72B-Preview & 71.4 & 48.2 & 35.9 & 33.2 & 65.4 & 61.5 & 52.60 \\
Adora-7B & 73.5 & 50.1 & 23.0 & 20.1 & 64.2 & 58.1 & 48.17 \\
R1-Onevision-7B & 64.1 & 47.1 & 23.5/29.9 & 17.3 & 61.8 & 39.8 & 42.27 \\
OpenVLThinker-7B & 70.2 & 47.9 & 25.3 & 20.1 & 64.3 & 60.6 & 48.07 \\
MM-Eureka-7B & 71.7/73.0 & 50.3 & 26.9 & 20.1 & 66.1 & 64.5 & 49.93 \\
R1-ShareVL-7B & 73.5/75.4 & 52.8 & 29.5 & 21.3 & 67.9 & 68.8 & 52.30 \\
SFT-7B & 66.4 & 49.6 & 21.3 & 16.2 & 57.6 & 54.3 & 44.23 \\
DIVA-GRPO-7B (Ours) & \textbf{74.2} & \textbf{57.6} & \textbf{32.1} & \textbf{23.1} & \textbf{69.3} & \textbf{70.2} & \textbf{54.58} \\
\bottomrule
\end{tabular}
}
\end{table}

\subsection{Reward-Range-Based Advantage Rescaling (RRB-Rescaling)}

In GRPO-based reinforcement learning, we observed that advantage estimation can become unreliable when the reward range within a rollout group is small. If rewards are tightly clustered, standard $z$-score normalization may exaggerate minor differences, producing misleading optimization signals. As a result, the model may over-reward trivial gains while overlooking substantial ones.

Consider a reward scheme where correctly formatted responses receive $0.1$ and fully correct responses receive $0.9$. Suppose we have two samples, each rolled out $5$ times: Sample A with rewards $[0, 0, 0, 0, 0.1]$ and Sample B with rewards $[0, 0, 0, 0, 1]$. After applying standard $z$-score normalization, both groups assign the same advantage ($-0.45$) to the zero-reward rollouts and the same high advantage ($1.79$) to the last rollout. This overestimates the gain from minor formatting correctness in Sample A and underestimates the greater achievement in Sample B.

To address this issue, we propose \emph{reward-range-based advantage rescaling}. Let 
\(\mathcal{R}_q = \{r_1, r_2, \dots, r_k\}\) denote the rewards of all rollouts for a problem \(q\), 
with the maximum possible reward range \(R_{\max}\). We define the reward range and the rescaled advantage as
\[
\Delta r_q = (\max(\mathcal{R}_q) - \min(\mathcal{R}_q)) / R_{\max}, \quad
\hat{A}_{\text{range}}(y_i) = \Delta r_q \cdot \tilde{A}(y_i)
\]
where \(\tilde{A}(y_i)\) is the normalized advantage obtained via standard z-score. Intuitively, this method ensures that the magnitude of the advantage reflects the actual variability in rewards, preventing minor differences from being exaggerated and providing a more reliable optimization signal. This rescaling method can be applied independently of difficulty-weighted scaling and is compatible with any GRPO-based framework, improving the stability and efficiency of policy optimization.

\begin{figure}
    \centering
    \begin{minipage}{0.28\textwidth}
        \centering
        \includegraphics[width=\textwidth]{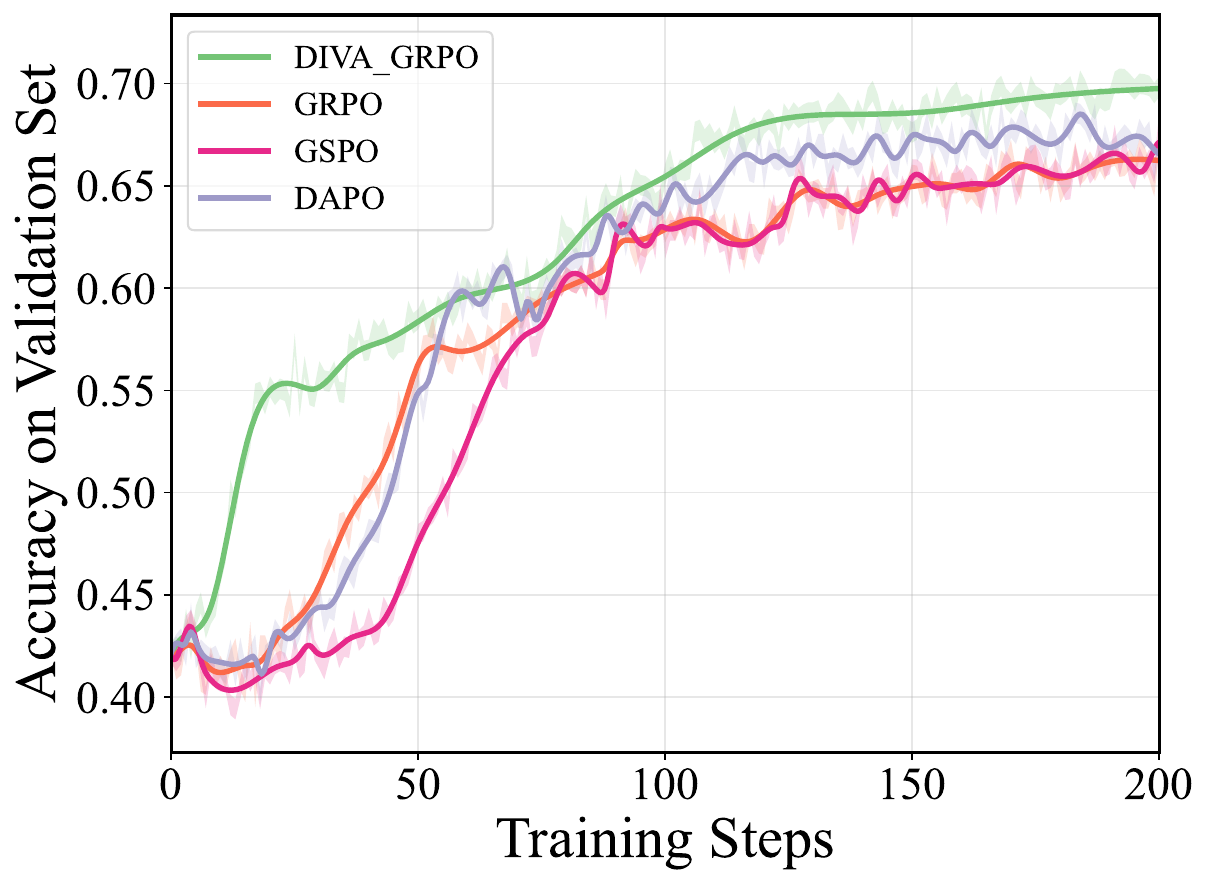}
        \subcaption{RQ1: Effectiveness of the DIVA-GRPO} 
        \label{fig:val_2}
    \end{minipage}
    \hfill
    \begin{minipage}{0.28\textwidth}
        \centering
        \includegraphics[width=\textwidth]{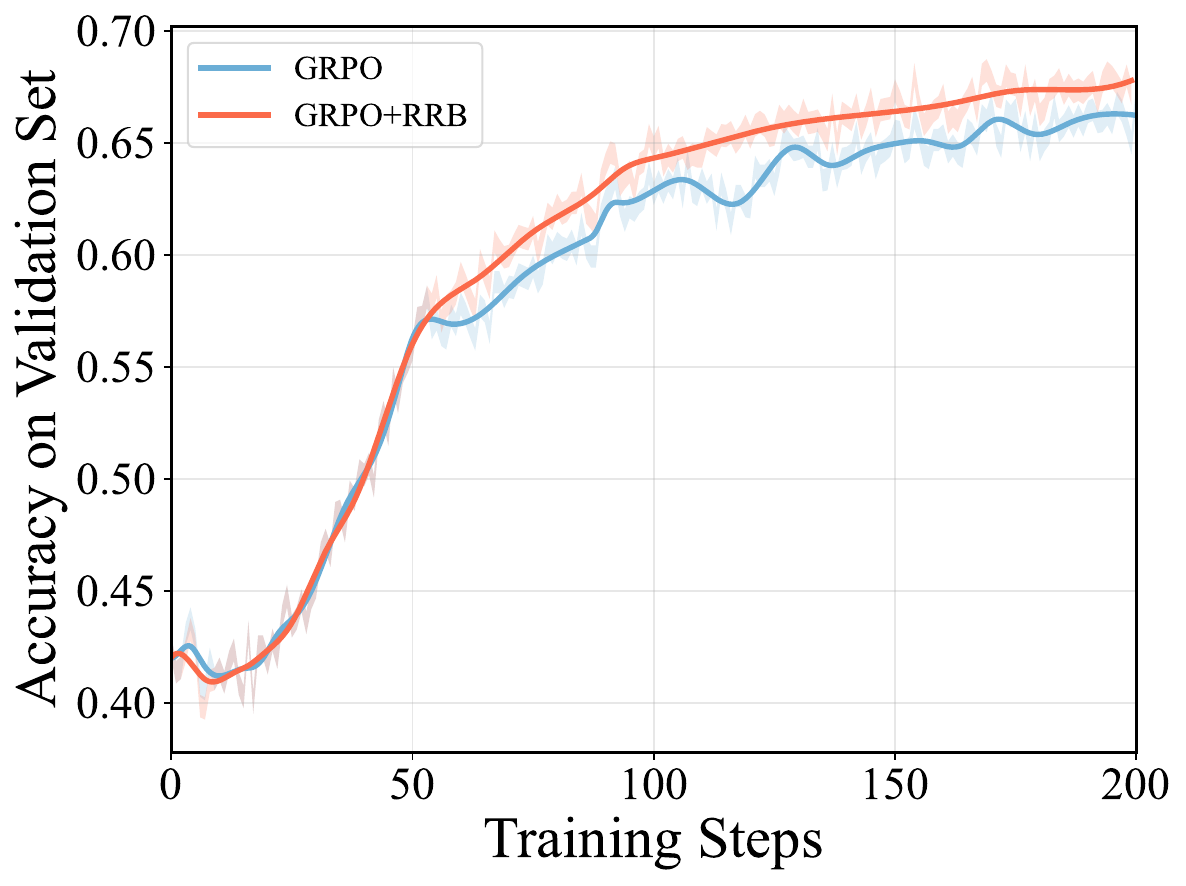}
        \subcaption{RQ3: Effectiveness of RRB on General GRPO Methods} 
        \label{fig:val_3}
    \end{minipage}
    \hfill
    \begin{minipage}{0.28\textwidth}
        \centering
        \includegraphics[width=\textwidth]{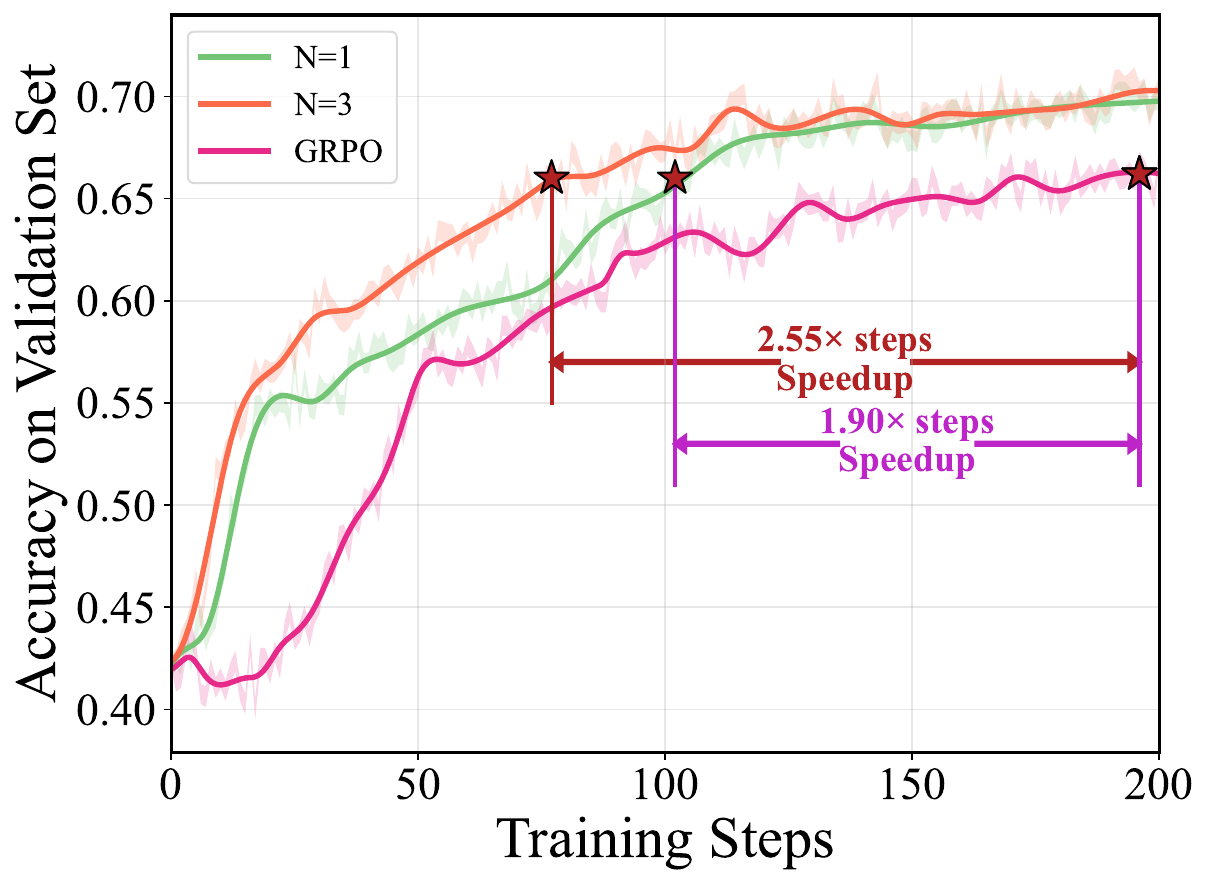}
        \subcaption{RQ4: Impact of DIVA-GRPO on Efficiency and Speed} 
        \label{fig:val_4}
    \end{minipage}
    \captionsetup{skip=2pt}
    \caption{Effect of Training Steps on Model Performance on the Validation Set.}
    \label{fig:val_steps}
\end{figure}

\section{Experiments}
\label{sec:experiments}

In this section, we conduct extensive experiments to evaluate the effectiveness of \textsc{DIVA-GRPO}, complemented by ablation studies to assess the contributions of its key components. Our evaluation is driven by the following research questions:

\begin{itemize}[left=2pt, labelsep=5pt, topsep=0pt, partopsep=0pt, itemsep=2pt, parsep=0pt]
\item \textbf{RQ1 (Effectiveness)}: How does \textsc{DIVA-GRPO} compare to recent advanced systems?
\item \textbf{RQ2 (Completeness)}: What impact does removing key components have on model performance?
\item \textbf{RQ3 (Generalization)}: How well does RRB-Rescaling generalize to standard GRPO?
\item \textbf{RQ4 (Efficiency)}: What impact does \textsc{DIVA-GRPO} have on training efficiency?
\end{itemize}

\subsection{Experimental Setup}

\noindent\textbf{Benchmarks.} 
Our evaluation covers six diverse benchmarks: \emph{MathVista}\citep{lu2023mathvista}, \emph{MathVerse}\citep{zhang2024mathverse}, \emph{MathVision}\citep{wang2024measuring}, \emph{OlympiadBench}\citep{he2024olympiadbench}, \emph{WeMath}\citep{qiao2024we}, and \emph{MMK12-test}\citep{meng2025mm}.

\noindent\textbf{Baselines.} 
Our comparisons cover three categories of models:
\emph{(i) closed-source proprietary MLLMs};
\emph{(ii) open-source base MLLMs};
\emph{(iii) open-source fine-tuned MLLMs}.

\noindent\textbf{Implementation.}
Our method is trained on \textbf{Qwen2.5-VL-7B-Instruct} \citep{bai2025qwen2} with AdamW at a learning rate of $10^{-6}$. We employ the latest EasyR1 framework for training. Specifically, for methods that do not generate variants, we set $k=10$, while for methods that do generate variants, we set $k=5$, using both $N=1$ and $N=3$ settings to ensure the fairness of the experiment. This setup guarantees a rigorous and fair comparison across all baselines. Difficulty scores $D_q$ are initialized to $5$ ($D_{\min}=1$, $D_{\max}=9$) with $\eta=4$. The detailed parameter analysis of can be found in Appendix \ref{appendix:keta}. Textual reasoning hints are pre-generated offline, while image perturbations are applied online. All textual variants and reasoning sequences are generated offline using GPT-o3.


\subsection{Main Results (\textbf{RQ1: Effectiveness})}
We report results on 7B-scale models, with the main experiments conducted on the R1-ShareVL-52K dataset. \textsc{DIVA-GRPO} demonstrates superior performance compared to both the base models and other multimodal training approaches. As shown in Table~\ref{tab:multimodal_math_benchmarks}, we provide a comprehensive comparison against state-of-the-art systems across five widely used and challenging benchmarks.

Our method achieves state-of-the-art performance across all datasets at the \textbf{7B scale}, delivering consistently strong results on both Chinese and English benchmarks with an average score of 54.58. Remarkably, on \emph{MathVista}, \emph{MathVerse}, and \emph{WeMath}, its performance is already on par with the much larger Qwen2.5-VL-72B, while substantially outperforming several open-source base models (e.g., Qwen2.5-VL-32B, InternVL2.5-VL-78B) and proprietary systems (e.g., GPT-4o, Claude 3.7-Sonnet), showcasing notable efficiency and cost-effectiveness. Nevertheless, on more challenging competition-level mathematics tasks, our method still falls short of ultra-large open-source models and cutting-edge proprietary systems, likely due to inherent limitations in model capacity and training data coverage. We further explored applying direct SFT to the backbone model using reasoning traces generated by GPT-o3 and found that such direct training not only failed to improve performance but even underperformed the base model—further underscoring the effectiveness of our approach. Compared with its backbone model, Qwen2.5-VL-7B, our method consistently improves results across all benchmarks, achieving an average accuracy gain of 8.23 points, which highlights its clear advantages over existing multimodal training frameworks. The discussion of different external models can be found in Appendix \ref{appendixD:external}.

Figure~\ref{fig:3d_distribution} further illustrates the training dynamics: in the early stage, samples have not yet formed a clear difficulty distribution, with most concentrated around medium difficulty and exhibiting only small advantages. As training progresses, the distribution gradually expands toward both easier and harder problems, and advantage signals become more distinct across difficulty levels. In later stages, the model maintains informative and balanced advantage signals even on high-difficulty samples, rather than collapsing into trivial “all-correct” or “all-wrong’’ states. These dynamics highlight the effectiveness of DIVA-GRPO in sustaining stable and efficient optimization.

\vspace{-1em}

\begin{figure}
    \centering
    \begin{minipage}{0.3\textwidth}
        \centering
        \includegraphics[width=\textwidth]{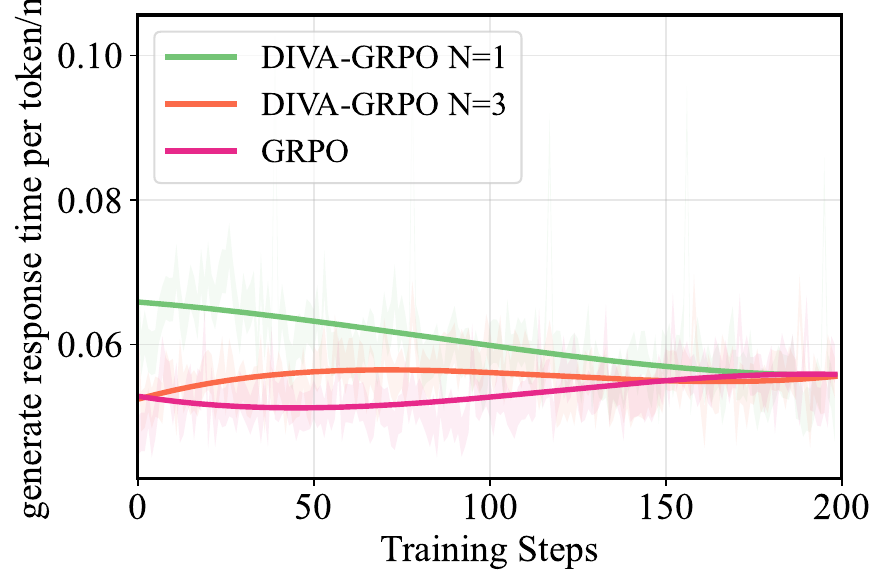}
        \subcaption{Time per token generation} 
        \label{fig:val_5}
    \end{minipage}
    \hfill
    \begin{minipage}{0.3\textwidth}
        \centering
        \includegraphics[width=\textwidth]{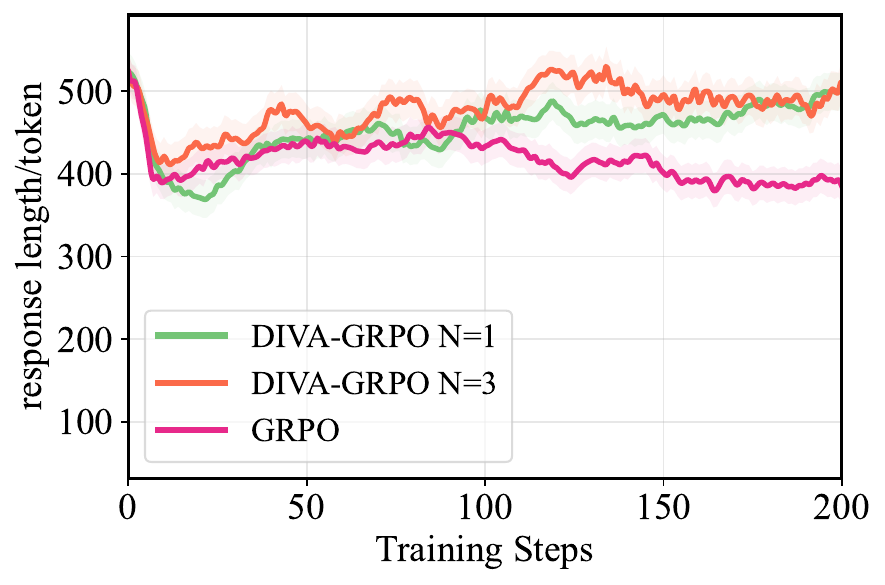}
        \subcaption{Average length of response} 
        \label{fig:val_6}
    \end{minipage}
    \hfill
    \begin{minipage}{0.3\textwidth}
        \centering
        \includegraphics[width=\textwidth]{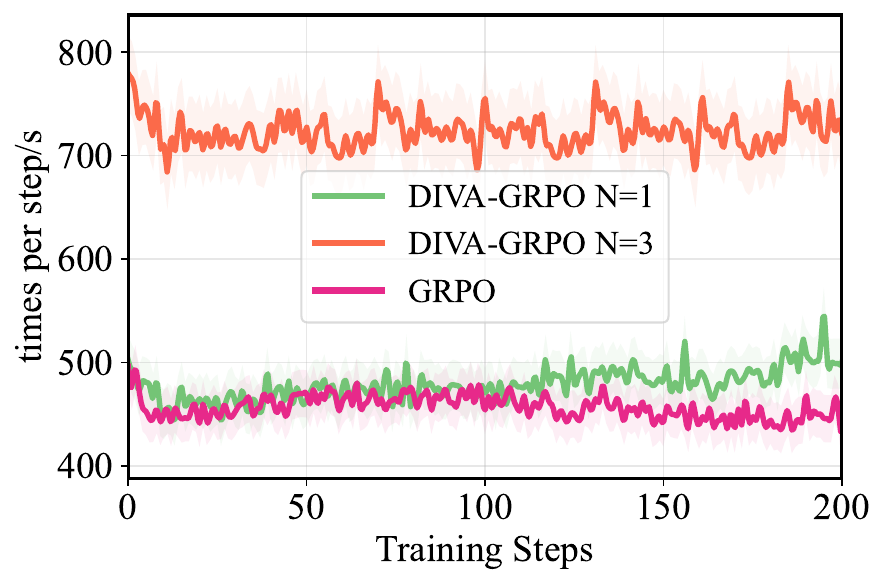}
        \subcaption{Time per step} 
        \label{fig:val_7}
    \end{minipage}
    \captionsetup{skip=2pt}
    \caption{Statistics of time and training speed during training and testing, showing that our method only leads to a slight increase in training time per step, primarily due to the increased output length.}
    \label{fig:times}
\end{figure}

\subsection{Ablation Study}

\textbf{RQ2 (Completeness)}: Component Ablation of \textsc{DIVA-GRPO}

We perform ablation studies on the three core components of \textsc{DIVA-GRPO} at the 7B scale using representative benchmarks (\emph{MathVista}, \emph{MathVerse}, and \emph{MMK12test}). Experiments are conducted on 5,000 randomly sampled instances from the \emph{MMK12} dataset to ensure timely and fair comparisons. Each 20 steps constitutes one epoch, with a total of 10 epochs of training. We evaluate the impact of removing Adaptive Variant Generation (including Global-Local Advantage and Balance), Difficulty Weighting, Reward-Range-Based Advantage Rescaling (RRB-Rescaling), and Global-Local Balance (G-L Balance), and compare each variant with the full model. 

As reported in Table~\ref{tab:ablation_7b_new}, removing any single component consistently decreases performance, with the full \textsc{DIVA-GRPO} model achieving the highest accuracy across all benchmarks. These results indicate that all components contribute complementary gains and none can be omitted without performance degradation, highlighting the necessity of the complete model design.

\begin{wraptable}{r}{0.55\textwidth}
\centering
\caption{Ablation study of \textsc{DIVA-GRPO}, showing that each component provides gains and the full model achieves the best performance (accuracy, \%).}
\label{tab:ablation_7b_new}
\setlength{\tabcolsep}{6pt}
\resizebox{\linewidth}{!}{
\begin{tabular}{lccc|c}
\toprule
\textbf{Method} & \textbf{MathVista} & \textbf{MathVerse} & \textbf{MMK12test} & \textbf{Avg.} \\
\midrule
w/o Variant Generation     & 70.0 & 53.7 & 61.1 & 61.6 \\
w/o Difficulty-Weighting   & 69.9 & 55.7 & 66.5 & 64.0 \\
w/o RRB-Rescaling          & 71.5 & 55.2 & 64.7 & 63.8 \\
w/o G-L Balance            & 70.8 & 55.4 & 66.0 & 64.1 \\
\textbf{Full \textsc{DIVA-GRPO}} & \textbf{73.2} & \textbf{56.3} & \textbf{68.8} & \textbf{66.1} \\
\bottomrule
\end{tabular}
}
\end{wraptable}

Furthermore, as shown in \ref{fig:val_2}, we compare DIVA-GRPO with GRPO, GSPO~\cite{zheng2025group}, and DAPO~\cite{yu2025dapo} throughout the entire training process. Experimental results demonstrate that, under strict control of time and total response size, our method outperforms GRPO, GSPO, and DAPO in terms of both training speed and stability. Additionally, we have evaluated the performance of our method on Qwen3-VL-4B and 8B, with the detailed results presented in Appendix \ref{appendixD:scales}.

\textbf{RQ3 (Generalization)}: Generalization of RRB-Rescaling

\begin{wraptable}{r}{0.55\textwidth} 
\centering
\caption{Evaluation of RRB-Rescaling on standard GRPO.}
\label{tab:rq3_reward_range_epoch5}
\resizebox{\linewidth}{!}{%
\begin{tabular}{lccc|c}
\toprule
\textbf{Method} & \textbf{MathVista} & \textbf{MathVerse} & \textbf{MMK12test} & \textbf{Avg.}\\
\midrule
GRPO (base) & 69.3 & 52.8 & 58.6 & 60.23 \\
+ RRB & \textbf{70.0} & \textbf{53.6} & \textbf{62.9} & \textbf{62.17} \\
\bottomrule
\end{tabular}%
}
\end{wraptable}

To verify the broader applicability of Reward-Range-Based Advantage Rescaling (RRB-Rescaling), we incorporate it into standard GRPO without any other modifications. Table~\ref{tab:rq3_reward_range_epoch5} shows that adding RRB-Rescaling increases the average accuracy from 60.23 to 62.17, with improvements observed across all three benchmarks. This is further illustrated in Figure~\ref{fig:val_3}. This demonstrates that RRB-Rescaling enhances both stability and performance in a general GRPO setting, confirming that this component is not limited to \textsc{DIVA-GRPO} but can benefit other GRPO-style training frameworks.

\textbf{RQ4 (Efficiency)}: Training Efficiency and Stability

\begin{figure}
    \centering
    \begin{minipage}{0.26\textwidth}
        \centering
        \includegraphics[width=\textwidth]{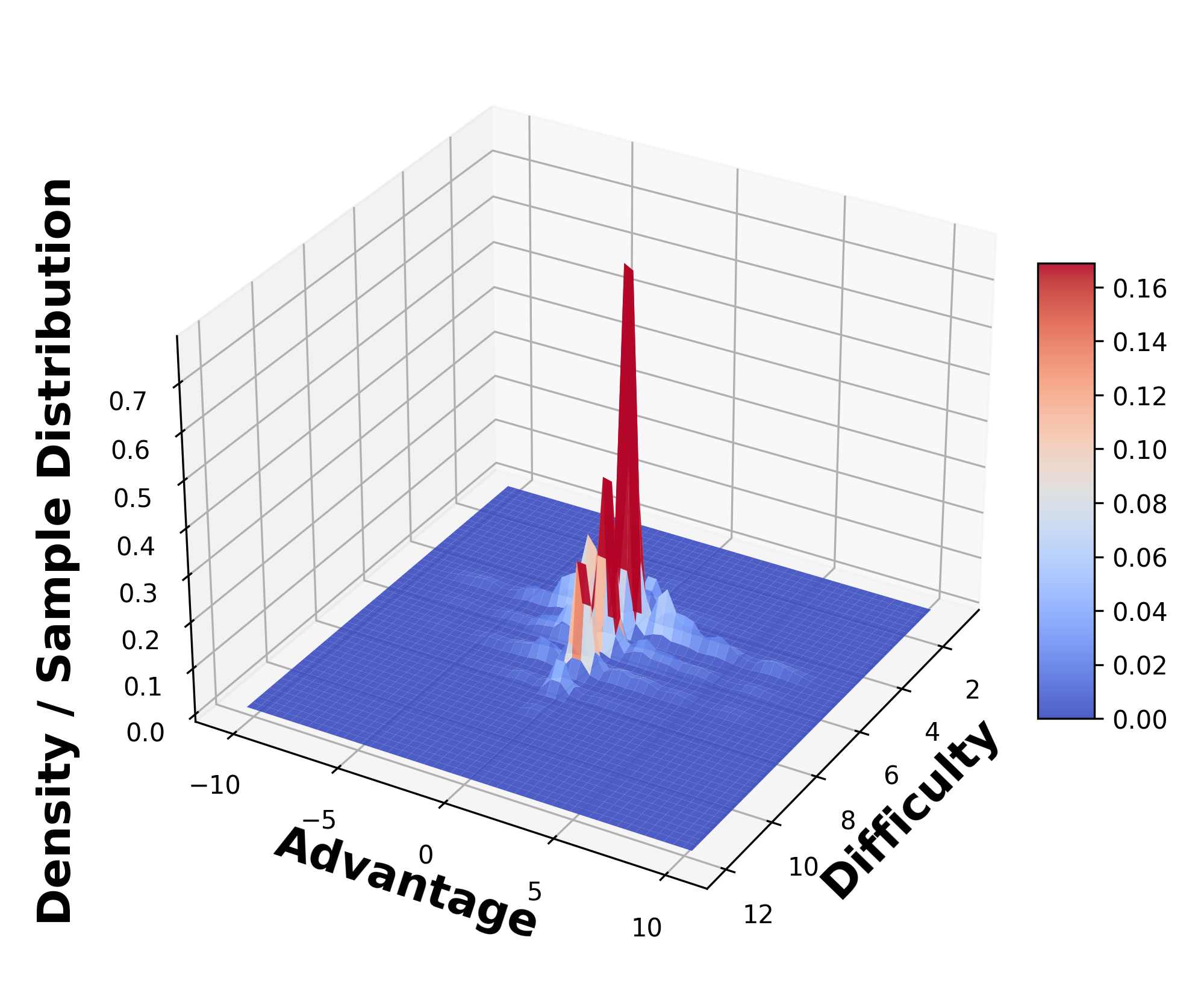}
        \subcaption{Train stage 1} 
        \label{fig:3d_1}
    \end{minipage}
    \hfill
    \begin{minipage}{0.26\textwidth}
        \centering
        \includegraphics[width=\textwidth]{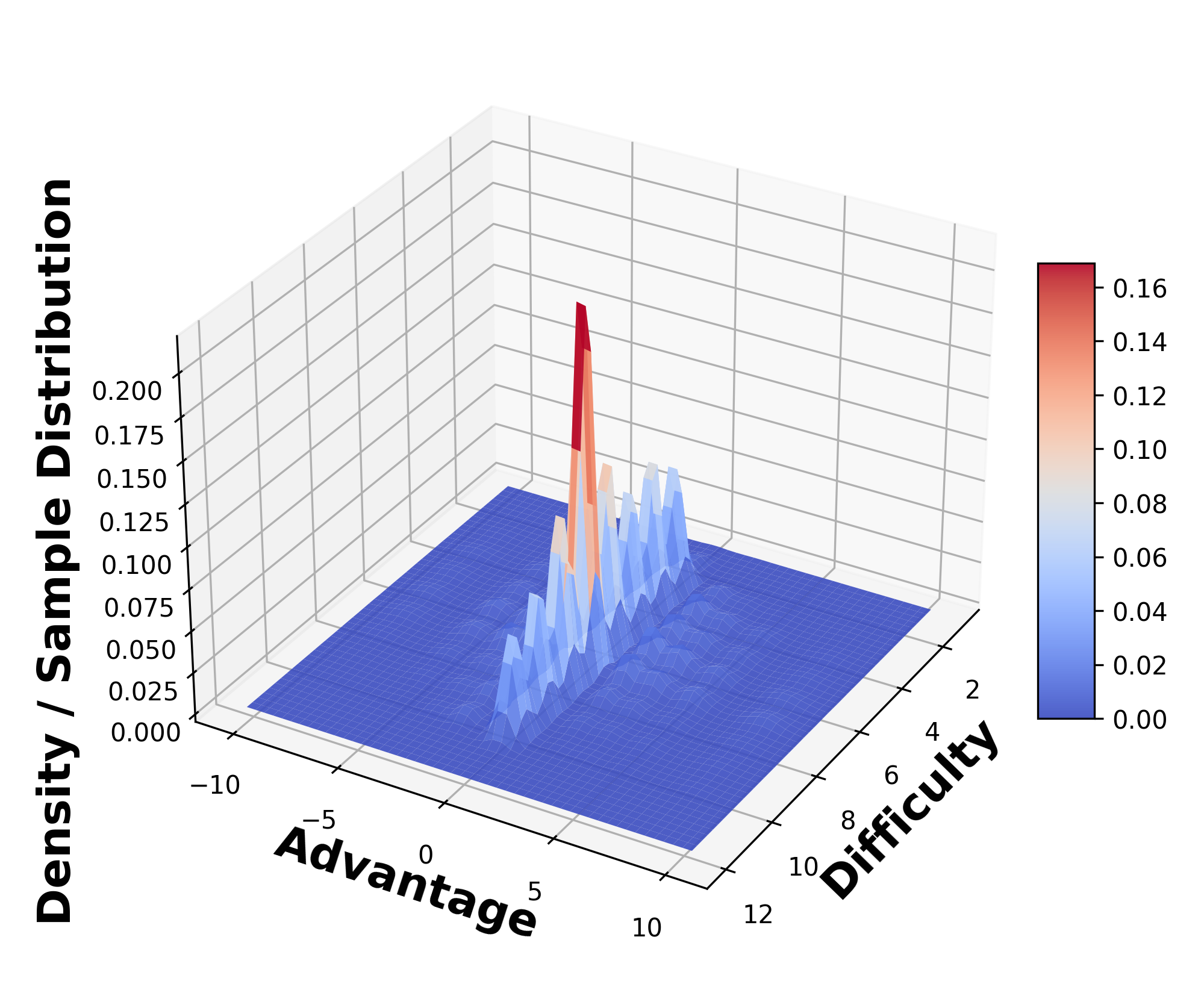}
        \subcaption{Train stage 2} 
        \label{fig:3d_2}
    \end{minipage}
    \hfill
    \begin{minipage}{0.26\textwidth}
        \centering
        \includegraphics[width=\textwidth]{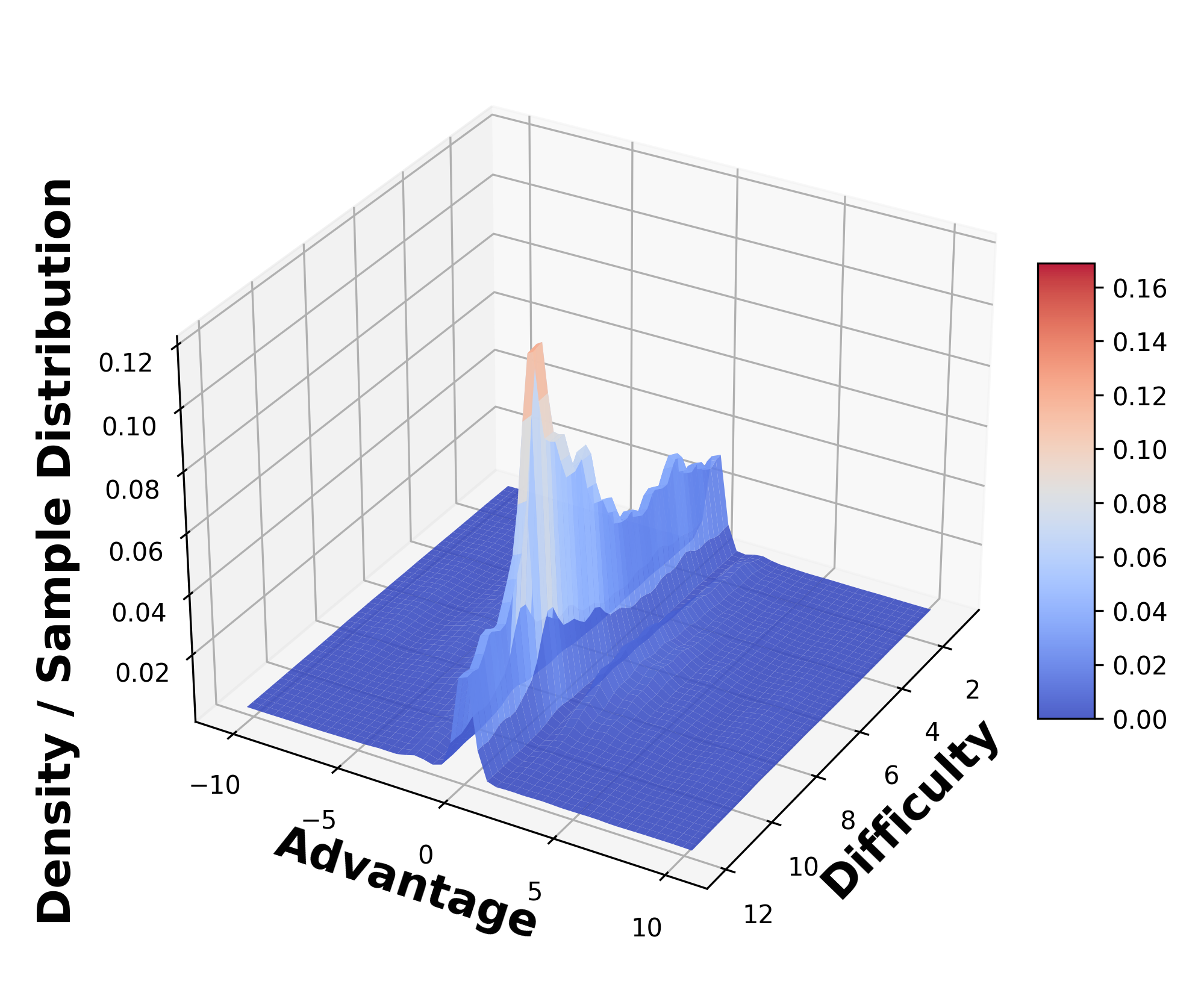}
        \subcaption{Train stage 3} 
        \label{fig:3d_3}
    \end{minipage}
    \caption{3D kernel density estimation (KDE) surfaces of the joint distribution between problem difficulty and global advantage across different training stages. The surface height reflects the sample density, illustrating how the model’s learning dynamics evolve with respect to difficulty and advantage.}
    \label{fig:3d_distribution}
    \vspace{-0.1cm}
\end{figure}

We compare the training performance of DIVA-GRPO and GRPO on Qwen2.5-VL-7B. As shown in Figure~\ref{fig:val_4}, we measure the number of steps required by DIVA-GRPO to reach the test-set optimum within 10 epochs of GRPO training. When GRPO uses $n=10$ and DIVA-GRPO uses $n=5$, we ensure a constant number of sampled sentences when $N=1$ and a consistent amount of input data when $N=3$. Figure~\ref{fig:val_5} shows the time consumed per token during generation, Figure~\ref{fig:val_6} illustrates the average response length, and Figure~\ref{fig:val_7} depicts the time per step. Next, we calculate the speedup of DIVA-GRPO in terms of both steps and time.

\begin{itemize}[left=2pt, labelsep=5pt, topsep=0pt, partopsep=0pt, itemsep=2pt, parsep=0pt]
\item \textbf{For $N=1$}, we ensure the consistency of the response count, meaning the total number of samples remains equal, while DIVA-GRPO accelerates the steps by a factor of 1.90$\times$. According to the statistics, while the time per token increases by only 13\% and the time per step by just 4\%, the total time comparison shows that our method consumes an average of 7.98 minutes per step, compared to 7.61 minutes per step for the GRPO method. For generating variants and think steps, approximately 32 minutes are required for every 5000 samples, while the generation of visual variants is almost instantaneous and does not incur additional time costs, as it is already included in the training time. Even when factoring in the time for generating variants, our method achieves a 1.76$\times$ speedup in terms of time.

\item \textbf{For $N=3$}, we ensure that the data requirements remain unchanged, while DIVA-GRPO accelerates the steps by a factor of 2.55$\times$. In this case, the average time per step is 12.03 minutes. Even when accounting for the time required to generate variants, our method achieves a 1.56$\times$ speedup in terms of time. This clearly demonstrates the efficiency of our approach.
\end{itemize}

\vspace{-0.5em}
\section{Related Work}

With the increasing application of GRPO in MLLMs to enhance reasoning, several challenges remain unresolved, particularly sparse rewards and advantage vanishing. Existing solutions can be grouped into three main approaches: sample augmentation and problem-space expansion, sample selection and utilization, and reward design.

\textbf{Sample augmentation and expansion} methods aim to enlarge the problem space using difficulty-aware mechanisms or diverse sample generation to improve exploration and generalization. Noisy Rollout \citep{liu2025noisyrollout} uses image noising to facilitate reinforcement learning, introducing visual diversity to prevent overfitting and improve robustness. Hint-GRPO \citep{huang2025boosting} adapts hints based on task difficulty, enhancing data efficiency for hard samples. DeepVideo-R1 \citep{park2025deepvideo} adjusts difficulty to help variants regain advantage, though it does not reshape single-sample difficulty distributions, leaving sparse rewards in extreme cases. R1-shareVL \citep{yao2025r1} expands the space with multiple variants and applies global advantage to shift difficulty, but lacks a complete mechanism to mitigate sparsity.

\textbf{Sample selection and utilization} methods focus computation on effective data while ignoring unhelpful samples, improving efficiency but risking premature abandonment of hard cases. VL-Rethinker \citep{wang2025vl} uses selective replay to alleviate reward sparsity and preserves advantage diversity by replaying strong past samples. MM-Eureka \citep{meng2025mm} selects prompts with partial correctness, filtering zero-advantage cases to stabilize training. While helpful, these methods do not maximize overall sample utilization or provide a global optimization view.

\textbf{Reward design} methods aim to provide denser, continuous signals to reduce sparsity and stabilize reasoning. MM-Eureka introduces a hyperparameter $\lambda$ to weight format rewards. R1-VL \citep{zhang2025r1} proposes StepGRPO, incorporating reasoning-accuracy and step-wise validity rewards for denser feedback. The introduced indirect rewards may therefore deviate from the primary objective and potentially mislead the model. DeepVideo-R1 introduces Reg-GRPO, transforming GRPO into a regression loss, while GRPO-LEAD \citep{zhang2025grpo} weights hard problems in advantage computation via dynamic difficulty awareness. By removing the variance term in advantage estimation, Dr. GRPO \citep{liu2025understanding} seeks to reduce variance and stabilize the learning signal. This shares the motivation with our RRB method, as both aim to provide more stable learning signals. Although these methods mitigate convergence and stability issues, they fail to address the fundamental challenges of sparse rewards and diminishing advantages.

\section{Conclusion}

In this work, we identified and addressed a fundamental limitation of GRPO-based reinforcement learning for multimodal large language models: reward sparsity and vanishing advantages, which hinder effective long-chain reasoning. To overcome this, we proposed \textsc{DIVA-GRPO}, a difficulty-adaptive variant advantage framework that dynamically assesses problem difficulty, generates tailored variants, and computes both local and global advantages with difficulty-aware normalization and reward-range-based rescaling. This design ensures stable and informative optimization signals across problems of varying difficulty, mitigates reward sparsity, and enhances training stability. Extensive experiments on six multimodal reasoning benchmarks demonstrated that \textsc{DIVA-GRPO} consistently outperforms existing GRPO-based methods in reasoning accuracy, convergence speed, and training efficiency. Overall, our approach provides a generalizable framework for improving reinforcement learning in MLLMs, offering a principled way to balance exploration, sample utilization, and advantage estimation for complex multimodal reasoning tasks.

\section*{Acknowledgments}
This work was supported by the Strategic Priority Research Program of the CAS under Grants No.XDB0680302, the National Natural Science Foundation of China (NSFC) under Grants No. 62276248, the Key Research and Development Program of Xinjiang Uyghur Autonomous Region Grant No. 2024B03026, the Beijing Nova Program under Grants No. 20250484765, and the Youth Innovation Promotion Association CAS under Grants No. 2023111.

\section*{Ethics Statement}
In this study, all data collection and analysis were conducted in compliance with ethical standards, and no human subjects or animal experiments were involved. The research did not use any sensitive personal data or privacy information, and all experimental designs and data processing methods adhered to relevant legal and ethical guidelines. Regarding dataset releases, we have strictly followed applicable academic ethics regulations, ensuring that all data and code sharing comply with legal and ethical requirements. There are no conflicts of interest in this study, and all funding sources and support have been clearly disclosed. The study ensures transparency, integrity, and fairness, strictly adhering to relevant research ethics and compliance standards.

\section*{Reproducibility statement}
To ensure full reproducibility of our experiments, we provide all code, configurations, and instructions required to replicate our results in the repository: https://github.com/Siaaaaaa1/DIVA-GRPO. The repository includes scripts for training and evaluation, pre-processing routines, and detailed README documentation describing the dependencies and steps to reproduce the reported results.

\bibliography{iclr2026_conference}
\bibliographystyle{iclr2026_conference}

\newpage

\appendix

\setcounter{figure}{0}
\setcounter{table}{0}
\renewcommand{\thefigure}{\thesection.\arabic{figure}}
\renewcommand{\thetable}{\thesection.\arabic{table}}

\section{Detailed Benchmarks and Baselines}
\label{appendix:A}
\subsection{Benchmarks}
\label{appendix:A1}
We evaluate six multimodal mathematical reasoning benchmarks:
\begin{itemize}[left=0pt]
    \item \textbf{MathVista}: \textit{A comprehensive benchmark combining challenges from diverse mathematical and visual tasks. It consists of 6,141 examples, derived from 28 existing multimodal datasets involving mathematics and 3 newly created datasets (i.e., IQTest, FunctionQA, and PaperQA).}
    \item \textbf{MathVerse}: \textit{An all-around visual math benchmark designed for an equitable and in-depth evaluation of Multi-modal Large Language Models (MLLMs). It includes 2,612 high-quality, multi-subject math problems with diagrams, each transformed into six distinct versions, totaling 15K test samples.}
    \item \textbf{MathVision}: \textit{A meticulously curated collection of 3,040 high-quality mathematical problems with visual contexts sourced from real math competitions. Spanning 16 distinct mathematical disciplines and graded across 5 levels of difficulty.}
    \item \textbf{OlypamidBench}: \textit{A dataset comprising 8,476 math and physics problems sourced from International Olympiads, Chinese Olympiads, and the Chinese College Entrance Exam (GaoKao). It features expert-level annotations for step-by-step reasoning.}
    \item \textbf{WeMath}: \textit{Inspired by human-like mathematical reasoning, WeMath is the first benchmark specifically designed to explore problem-solving principles beyond end-to-end performance. It meticulously collects and categorizes 6.5K visual math problems, spanning 67 hierarchical knowledge concepts and 5 layers of knowledge granularity.}
    \item \textbf{MMK12\_test}: \textit{A dataset focusing on subject-specific tasks in physics, chemistry, biology, and mathematics. It includes questions with verified answers and is designed to evaluate subject-specific reasoning capabilities.}
\end{itemize}

\subsection{Baselines}
\label{appendix:A2}
Following the taxonomy used in Table~\ref{tab:multimodal_math_benchmarks}, we group baseline systems into three categories:

\begin{itemize}
    \item \textbf{Closed-source proprietary MLLMs}:  
    Claude3.7-Sonnet, GPT-4o, o1, Gemini2-flash.  

    \item \textbf{Open-source base MLLMs}:  
    InternVL2.5-VL-8B, Qwen-2.5-VL-7B, InternVL2.5-VL-38B, InternVL2.5-VL-78B.  

    \item \textbf{Open-source fine-tuned MLLMs}:  
    InternVL2.5-8B-MPO, InternVL2.5-38B-MPO, QVQ-72B-Preview, Adora-7B,  
    R1-Onevision-7B, OpenVLThinker-7B, MM-Eureka-7B, R1-ShareVL-7B.  
\end{itemize}

Our proposed system, \textbf{DIVA-GRPO-7B}, is instantiated on the same backbone as Qwen2.5-VL-7B for fair comparison.

\section{Theoretical Analysis of Advantage Normalization and Difficulty-Weighted Balancing}
\label{appendix:B}
\begin{theorem}[Gradient Variance Control]
\label{thm:variance-control}
Let $g(\theta)$ be the stochastic policy gradient estimator
\[
g(\theta) = \sum_{t=1}^T A_t \, \nabla_\theta \log \pi_\theta(a_t|s_t),
\]
where $A_t$ denotes the advantage function. Suppose $\mathbb{E}[g(\theta)] = \nabla J(\theta)$ is unbiased and $\text{Var}[g(\theta)] < \infty$. Then for step size $\eta > 0$, the expected squared error satisfies
\begin{align}
\mathbb{E}\big[\|\theta_{t+1} - \theta^*\|^2\big] 
&\le 
\mathbb{E}\big[\|\theta_t - \theta^*\|^2\big] 
- \eta \|\nabla J(\theta_t)\|^2 
+ \eta^2 \, \mathrm{Var}[\tilde{g}(\theta_t)].
\end{align}

This inequality shows that the convergence rate depends critically on the variance of the gradient estimator. Reducing gradient variance improves optimization stability and accelerates convergence.
\end{theorem}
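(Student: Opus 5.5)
The plan is to read the statement as a one-step stochastic gradient \emph{ascent} recursion, $\theta_{t+1}=\theta_t+\eta\,g(\theta_t)$, toward a maximizer $\theta^*$ of $J$. I will identify $\tilde g$ with the (possibly normalized and reweighted) estimator $g$, and add the regularity that the statement leaves implicit. Concretely, I will assume that $J$ is concave and $L$-smooth, that $\theta^*$ is a stationary maximizer ($\nabla J(\theta^*)=0$), and that the step satisfies $\eta\le 1/L$. I will also rescale so that $L\le 1$; this is needed to obtain the coefficient exactly $-\eta$ in front of $\|\nabla J(\theta_t)\|^2$, and otherwise the same argument yields $-\eta/L$. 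Unbiasedness and finite variance are taken as given in the statement.

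\textbf{Step 1 (expand the square).} I would write
\[
\|\theta_{t+1}-\theta^*\|^2=\|\theta_t-\theta^*\|^2+2\eta\langle g(\theta_t),\theta_t-\theta^*\rangle+\eta^2\|g(\theta_t)\|^2,
\]
and then take the expectation conditional on $\theta_t$.

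\textbf{Step 2 (handle the cross and quadratic terms).} By unbiasedness, the conditional expectation of the cross term is $2\eta\langle\nabla J(\theta_t),\theta_t-\theta^*\rangle$. The bias--variance identity gives $\mathbb{E}\|g\|^2=\|\nabla J(\theta_t)\|^2+\mathrm{Var}[g(\theta_t)]$. This is exactly where the variance term $\eta^2\mathrm{Var}[\tilde g(\theta_t)]$ in the statement comes from.

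\textbf{Step 3 (the key inequality).} For a concave $L$-smooth function, co-coercivity of $-\nabla J$ together with $\nabla J(\theta^*)=0$ gives
\[
\langle\nabla J(\theta_t),\theta^*-\theta_t\rangle\ \ge\ \tfrac1L\|\nabla J(\theta_t)\|^2 .
\]
The cross term is therefore at most $-\tfrac{2\eta}{L}\|\nabla J(\theta_t)\|^2$. Combining this with the $\eta^2\|\nabla J(\theta_t)\|^2$ from Step 2 and using $\eta\le 1/L$, I get
\[
-\tfrac{2\eta}{L}+\eta^2\le-\tfrac{\eta}{L}\le-\eta ,
\]
where the last inequality uses $L\le1$. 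Taking total expectation then gives the claimed inequality, and the concluding remark about variance follows by reading off the $\eta^2\mathrm{Var}$ term.

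The main obstacle is Step 3. The statement as written does not supply any structural assumption on $J$; without concavity, or a co-coercivity or Polyak--{\L}ojasiewicz-type condition, the cross term cannot be signed, and the inequality is false in general. I would therefore make the concavity and smoothness assumptions (and the normalization $L\le1$, or equivalently the weaker coefficient $-\eta/L$) explicit. If concavity is deemed too strong for policy objectives, my fallback is to assume only the restricted monotonicity condition $\langle\nabla J(\theta),\theta^*-\theta\rangle\ge\|\nabla J(\theta)\|^2$ along the iterates. This is exactly what Step 3 consumes, and the proof goes through verbatim under it.
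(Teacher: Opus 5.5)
\textbf{The paper gives no proof to compare against.} This theorem is stated in Appendix~B and immediately invoked in Corollary~\ref{cor:normalized-advantage}. The paper gives no argument, no update rule and no definition of $\theta^*$. Also, $\tilde g$ appears in the conclusion although only $g$ appears in the hypotheses.

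\textbf{Your diagnosis is right, and your argument is correct.} As written the claim is false. Even with exact gradients, the one-step identity contains $+\eta^2\|\nabla J(\theta_t)\|^2$, so the inequality fails for large $\eta$ whenever $\nabla J(\theta_t)\neq 0$. A structural assumption on $J$ and a step-size restriction are therefore unavoidable. Your argument is the standard one-step SGD analysis: expand the square, use conditional unbiasedness and $\mathbb{E}\|g\|^2=\|\nabla J\|^2+\mathrm{Var}[g]$, then apply co-coercivity of $-\nabla J$ at a stationary maximizer. It correctly proves the repaired statement, which is the most one can do here.

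\textbf{Points to tighten.}

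\emph{The condition $L\le 1$.} This is not a normalization you can get by rescaling. Multiplying $J$ by a constant rescales $g$ and changes the iterates. Reparametrizing $\theta$ changes both the algorithm and the norm in which the error is measured. It is a genuine hypothesis, and it cannot simply be dropped. Take $J(\theta)=-\tfrac L2\|\theta\|^2$ with exact gradients and $\eta=1/L$: then $\theta_{t+1}=0$, while the right-hand side equals $(1-L)\|\theta_t\|^2<0$ when $L>1$.

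\emph{The sharp step-size condition.} What your computation actually needs is $-\tfrac{2\eta}{L}+\eta^2\le-\eta$, i.e.\ the single condition $\eta\le\tfrac2L-1$. The same quadratic shows this is sharp, and it replaces the pair $\eta\le 1/L$, $L\le 1$. Likewise, your fallback (restricted monotonicity with constant $1$) requires $\eta\le 1$, not $\eta\le 1/L$.

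\emph{Total versus conditional expectation.} Taking total expectation at the end puts $\mathbb{E}\|\nabla J(\theta_t)\|^2$ and $\mathbb{E}\bigl[\mathrm{Var}(g(\theta_t)\mid\theta_t)\bigr]$ on the right. To match the displayed form literally, condition on $\theta_t$ instead. Also, $\mathrm{Var}$ must mean the trace of the covariance, $\mathbb{E}\|g-\nabla J\|^2$, for your bias--variance identity to hold.

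\emph{Identifying $\tilde g$ with $g$.} This step is essential, not cosmetic. The cross-term bound needs the estimator that actually drives the update to have conditional mean $\nabla J(\theta_t)$. If the iterate is driven by the difficulty-reweighted $\tilde g$ of the corollary, that is a separate hypothesis. It does not follow from unbiasedness of $g$, and nothing guarantees it for the weights $\exp\bigl(k(D_i-\bar D)\operatorname{sgn}(\tilde A)\bigr)$.
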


\begin{corollary}[Advantage Normalization and Difficulty-Weighted Balancing]
\label{cor:normalized-advantage}
Consider the adjusted gradient estimator
\[
\tilde{g}(\theta) = \sum_{i=1}^k \hat{A}(y_i) \, \nabla_\theta \log \pi_\theta(y_i),
\]
where the adjusted advantage is defined as
\[
\hat{A}(y_i) = 
\exp\!\Big(k \cdot (D_i - \bar{D}) \cdot \mathrm{sgn}(\tilde{A}(y_i))\Big) \, \tilde{A}(y_i).
\]
Here $\tilde{A}(y_i)$ is the batch-level normalized advantage (either local or global), $D_i$ is the difficulty coefficient of the variant $q^{(i)}$, $\bar{D}$ is the mean difficulty within the group, and $k > 0$ is a sensitivity parameter.

Then:
\begin{enumerate}
    \item \textbf{Unbiasedness:} Normalization enforces zero-mean scaling, so $\mathbb{E}[\tilde{g}(\theta)] = \nabla J(\theta)$.
    \item \textbf{Variance Reduction:} Difficulty-weighted scaling adaptively reweights samples. Hard problems amplify correct signals and soften incorrect ones, while easy problems down-weight correct signals and penalize incorrect ones more strongly. This rebalancing prevents domination by outliers, ensuring
    \[
    \text{Var}[\tilde{g}(\theta)] \;\le\; \text{Var}[g(\theta)].
    \]
\end{enumerate}

Substituting into Theorem~\ref{thm:variance-control} yields

\begin{align}
\mathbb{E}\big[\|\theta_{t+1} - \theta^*\|^2\big] 
&\le 
\mathbb{E}\big[\|\theta_t - \theta^*\|^2\big] 
- \eta \|\nabla J(\theta_t)\|^2 
+ \eta^2 \, \mathrm{Var}[\tilde{g}(\theta_t)].
\end{align}

Hence, the proposed balancing strategy preserves unbiased gradient estimates while reducing update variance, leading to more stable and efficient convergence.
\end{corollary}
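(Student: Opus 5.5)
The plan is to prove the two claims of Corollary~\ref{cor:normalized-advantage} separately and then invoke Theorem~\ref{thm:variance-control} verbatim with $g$ replaced by $\tilde g$. Throughout I will condition on the prompt group $\mathcal{V}_q$, on the current difficulty coefficients $D_i,\bar D$, and on the batch statistics $\mu_{\text{local}},\sigma_{\text{local}},\mu_{\text{global}},\sigma_{\text{global}}$. These quantities are determined before the gradient step: the difficulties come from past rollouts, and the batch statistics I treat as fixed by a large-batch, leave-one-out idealisation. The tool for both claims is the score-function identity $\mathbb{E}_{y\sim\pi_\theta}[\nabla_\theta\log\pi_\theta(y)]=0$.

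\textbf{Unbiasedness.} First I write $\tilde A(y_i)=a\,(r(y_i)-b)$, where $a>0$ and $b$ do not depend on $y_i$. This uses the fact that both the GRPO standardisation and the batch $z$-score are affine in the reward once the group and batch statistics are frozen.

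Second, the baseline $b$ drops out in expectation by the score identity, so $\mathbb{E}[\tilde A(y_i)\nabla\log\pi_\theta(y_i)]=a\,\nabla J_q(\theta)$.

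Third, I handle the difficulty weight $w_i=\exp(k(D_i-\bar D)\,\mathrm{sgn}\,\tilde A(y_i))$. It takes only two values, $w_i^{\pm}$, according to the sign of the advantage. I will split $\hat A=w_i^+\tilde A^+ - w_i^-\tilde A^-$ and show that the expectation equals a positive, prompt-dependent multiple of $\nabla J_q$, plus a correction proportional to $(w_i^+-w_i^-)\,\mathbb{E}[|\tilde A|\nabla\log\pi]$. That correction vanishes when $D_i=\bar D$. For binary rewards it is itself a positive multiple of $\nabla J_q$, since $|\tilde A|$ is then a function of the correctness indicator.

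So the precise statement I expect to prove is unbiasedness up to a positive per-prompt rescaling, i.e.\ a preconditioned gradient. Turning this into exact equality $\mathbb{E}[\tilde g]=\nabla J$ would require redefining $J$ as the correspondingly reweighted objective. I will state that reinterpretation explicitly.

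\textbf{Variance reduction.} I will bound the second moments, which dominate the variances given the matching means. Assume $\|\nabla\log\pi_\theta\|\le G$. Then
\[
\mathbb{E}\|\tilde g\|^2\le G^2\sum_i \mathbb{E}[w_i^2\tilde A(y_i)^2].
\]
Batch $z$-scoring gives $\mathbb{E}[\tilde A^2]=1$ over the batch, and $w_i^2\le e^{2k|D_i-\bar D|}$.

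The comparison with $g$ is the main obstacle. The raw estimator has $\mathbb{E}[A^2]$ of order $1/(\sigma_r+\epsilon)^2$, which blows up on near-degenerate groups (all correct or all wrong), and these are exactly the outliers the corollary refers to. Normalisation caps the per-sample scale at $O(1)$. My first attempt will be to show
\[
e^{2k\max_i|D_i-\bar D|}\le \mathbb{E}[A^2]/\mathbb{E}[\tilde A^2].
\]
This inequality gives $\mathrm{Var}[\tilde g]\le\mathrm{Var}[g]$. It holds whenever $k$ is small relative to the spread of the difficulties, using $|D_i-\bar D|\le D_{\max}-D_{\min}$. I will therefore state the variance claim under an explicit condition of this form on $k$, since it cannot hold for arbitrary $k>0$.

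\textbf{Conclusion.} Finally, I substitute $\tilde g$ into Theorem~\ref{thm:variance-control}, whose hypotheses are finite variance and unbiasedness for the reweighted objective. This yields the displayed recursion with the smaller variance term.
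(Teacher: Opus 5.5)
The paper gives no argument beyond the one-line assertions inside the corollary itself (``normalization enforces zero-mean scaling'', ``rebalancing prevents domination by outliers''). Your unbiasedness analysis is therefore more careful than the source, and you are right that the literal identity $\mathbb{E}[\tilde g]=\nabla J$ fails. Even with all weights equal to $1$, the estimator targets $\nabla J_q$ rescaled by $1/\bigl((\sigma_r+\epsilon)(\sigma_{\text{local}}+\epsilon)\bigr)$.

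Your decomposition $\hat A=\tfrac{w^{+}+w^{-}}{2}\tilde A+\tfrac{w^{+}-w^{-}}{2}|\tilde A|$ is correct. Each group's advantages sum to zero, so the batch mean vanishes and $b=\mu_r\in[0,1]$. For binary rewards this gives $\mathbb{E}[\hat A\,\nabla\log\pi]=a\,[w^{+}(1-b)+w^{-}b]\,\nabla J_q$.

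Two corrections are needed here:
\begin{enumerate}
\item The correction term by itself has coefficient $\tfrac{w^{+}-w^{-}}{2}\,a(1-2b)$. Its sign depends on both $D_i-\bar D$ and $1-2b$, so only the total coefficient is guaranteed positive, not the correction.
\item Your ``reweighted objective'' is not a fixed objective. Its coefficients depend on $\mu_r$, on batch statistics, and on difficulties re-estimated every epoch. Applying Theorem~\ref{thm:variance-control} to $\tilde g$ therefore controls distance to the optimizer of a moving surrogate, not the displayed recursion with the original $J$ and $\theta^*$.
\end{enumerate}

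\textbf{The genuine gap is the variance step.} The inequality $e^{2k\max_i|D_i-\bar D|}\le\mathbb{E}[A^2]/\mathbb{E}[\tilde A^2]$ that you plan to establish holds for no $k\ge 0$, because its premise is wrong.
\begin{itemize}
\item The GRPO advantage is already group-standardized. With $n$ rollouts, $\frac1n\sum_{j=1}^n A(y_j)^2=\sigma_r^2/(\sigma_r+\epsilon)^2\le 1$, and all-correct or all-wrong groups contribute exactly $0$.
\item A near-degenerate group produces one large value (about $\sqrt{n-1}$ for a single success), but its second moment stays near $1$. It does not scale like $(\sigma_r+\epsilon)^{-2}$.
\item Since the batch mean is zero, $\tilde A=A/(\sigma_{\text{local}}+\epsilon)$, where $\sigma_{\text{local}}^2$ is the average of these per-group values. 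Hence $\sigma_{\text{local}}+\epsilon<1$, by a wide margin once the batch contains degenerate groups, which is exactly the regime the paper targets.
\item So $\mathbb{E}[A^2]/\mathbb{E}[\tilde A^2]=(\sigma_{\text{local}}+\epsilon)^2<1$, while the left-hand side is at least $1$.
\end{itemize}
Concretely, with equal difficulties (all weights $1$) you get $\tilde g=g/(\sigma_{\text{local}}+\epsilon)$ and $\mathrm{Var}[\tilde g]>\mathrm{Var}[g]$. For instance, reward groups $[1,0]$ and $[1,1]$ give $\tilde A=\sqrt2\,A$.

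The difficulty weights make this worse, not better. They multiply one sign class by $e^{k|D_i-\bar D|}>1$. In the intended regime (hard variants rarely solved, easy ones rarely failed), that class is precisely the rare one carrying the large advantages. The per-variant second moment therefore grows by roughly a factor $e^{2k|D_i-\bar D|}$.

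The mechanics of the comparison do not close either:
\begin{itemize}
\item You compare the upper bound $G^2\sum_i\mathbb{E}[w_i^2\tilde A^2]$ with $G^2\sum_i\mathbb{E}[A^2]$. The latter is an upper bound on $\mathrm{Var}[g]$, not a lower bound, so ordering the two bounds says nothing about the variances.
\item Passing from second moments to variances via ``matching means'' contradicts your own first part, which shows the means differ by per-prompt factors.
\end{itemize}

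Because variance is quadratic in scale, no condition on $k$ rescues the claim as stated. A meaningful version would have to compare scale-invariant quantities, such as variance relative to $\|\mathbb{E}\tilde g\|^2$. Neither your plan nor the paper provides that.
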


\paragraph{Discussion.}
Theorem~\ref{thm:variance-control} establishes that variance directly governs the convergence behavior of policy gradient methods. Corollary~\ref{cor:normalized-advantage} demonstrates that our difficulty-weighted and normalized advantage balancing reduces variance while maintaining unbiasedness. Intuitively, this balances contributions across problem difficulties and prevents either global or local advantages from dominating optimization, thereby stabilizing training and improving efficiency.

\section{Theoretical Proof of Optimal Reward Balance under Z-Score Normalization}
\label{appendix:C}
To further justify our design choices, we provide a theoretical proof. Our method adaptively assesses the difficulty of each problem and generates variants to keep the overall difficulty moderate for the current model, aiming to maintain an approximately 50/50 ratio of correct and incorrect samples (rather than relying on a single correct or incorrect sample to produce an advantage). For simplicity, we consider a binary-reward setting, where correct answers receive the maximum reward and incorrect answers receive zero. The proof shows that under this balanced setting, optimization is most efficient: when correct and incorrect samples are equally represented, the estimated gradient aligns most closely with the optimal update direction, leading to more effective model improvement.

\label{sec:grpo-binary-reward-en}

\paragraph{Setup and notation.}
Consider a single batch (or mini-batch) of \(n\) rollouts in a GRPO-like procedure. Rewards take only two values \(\{0, R_{\max}\}\). Let \(k\) be the number of rollouts with reward \(R_{\max}\) and set
\[
\mu := \frac{k}{n}\in[0,1].
\]
We compute advantages by z-score normalizing rewards over the batch:
\[
A_i = \frac{r_i - \bar r}{\sigma_r},\qquad
\bar r = \mu R_{\max},\qquad
\sigma_r = \sqrt{\mathrm{Var}(r)} = R_{\max}\sqrt{\mu(1-\mu)}.
\]
Let \(g_i := \nabla_\theta \log\pi(a_i|s_i)\in\mathbb{R}^d\) denote the per-sample score-gradient vector. We assume the parameter update direction (up to a positive constant) follows the common policy-gradient form
\[
\Delta\theta \propto \frac{1}{n}\sum_{i=1}^n A_i\, g_i.
\]
Fix a reference unit vector \(v\in\mathbb{R}^d\) that represents the desired ``optimal'' direction. Define
\[
s_+ := v^\top \mathbb{E}[g\mid r=R_{\max}],\qquad
s_- := v^\top \mathbb{E}[g\mid r=0],
\]
the average projection of class gradients onto \(v\).

\begin{lemma}[advantages for binary rewards]
In the above notation the two advantage values are
\[
A_+ := \frac{R_{\max}-\bar r}{\sigma_r}=\sqrt{\frac{1-\mu}{\mu}},\qquad
A_- := \frac{0-\bar r}{\sigma_r}=-\sqrt{\frac{\mu}{1-\mu}}.
\]
In particular, \(R_{\max}\) cancels and does not affect the dependence on \(\mu\).
\end{lemma}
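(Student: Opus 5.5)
The argument is a direct computation from the definitions in the setup. First, the batch mean is $\bar r = \frac{1}{n}\sum_i r_i = \frac{k R_{\max}}{n} = \mu R_{\max}$, since exactly $k$ of the $n$ rewards equal $R_{\max}$ and the rest are $0$.

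For the (population) variance, note that $r_i^2 = R_{\max} r_i$ for binary rewards in $\{0,R_{\max}\}$. Hence
\[
\frac{1}{n}\sum_i r_i^2 = \mu R_{\max}^2,
\qquad
\mathrm{Var}(r) = \mu R_{\max}^2 - \mu^2 R_{\max}^2 = R_{\max}^2\,\mu(1-\mu),
\]
so $\sigma_r = R_{\max}\sqrt{\mu(1-\mu)}$, which matches the setup.

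Next I substitute into the z-score formula. For a correct rollout,
\[
A_+ = \frac{R_{\max}(1-\mu)}{R_{\max}\sqrt{\mu(1-\mu)}} = \sqrt{\frac{1-\mu}{\mu}},
\]
using $1-\mu = \sqrt{1-\mu}\,\sqrt{1-\mu}$. Symmetrically, for an incorrect rollout,
\[
A_- = \frac{-\mu R_{\max}}{R_{\max}\sqrt{\mu(1-\mu)}} = -\sqrt{\frac{\mu}{1-\mu}}.
\]
The factor $R_{\max}$ cancels in numerator and denominator in both cases, which is the final claim of the lemma.

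The only genuine obstacle is the degenerate case. The formulas require $0<\mu<1$, i.e.\ $0<k<n$; otherwise $\sigma_r=0$ and the advantages are undefined. In practice they are set to zero via the $\epsilon$ regularizer, which is exactly the advantage-vanishing phenomenon. I would state this restriction explicitly, and also note that the square-root simplifications are valid because both $\mu$ and $1-\mu$ are then positive. As a sanity check, I would verify the identities $\mu A_+ + (1-\mu)A_- = 0$ and $\mu A_+^2 + (1-\mu)A_-^2 = 1$.
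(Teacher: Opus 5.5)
Your proposal is correct and follows essentially the paper's route: substitute $\bar r=\mu R_{\max}$ and $\sigma_r=R_{\max}\sqrt{\mu(1-\mu)}$ into the z-score formula and let $R_{\max}$ cancel. Two of your additions are clarifications the paper leaves implicit: deriving the variance via $r_i^2=R_{\max}r_i$, and requiring $0<\mu<1$ (together with $R_{\max}>0$) for the formulas to be defined.
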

\begin{proof}
Direct substitution yields
\[
A_+ = \frac{R_{\max}-\mu R_{\max}}{R_{\max}\sqrt{\mu(1-\mu)}}
=\frac{1-\mu}{\sqrt{\mu(1-\mu)}}=\sqrt{\frac{1-\mu}{\mu}},
\]
\[
A_- = \frac{0-\mu R_{\max}}{R_{\max}\sqrt{\mu(1-\mu)}}
=-\frac{\mu}{\sqrt{\mu(1-\mu)}}=-\sqrt{\frac{\mu}{1-\mu}}.
\]
\end{proof}

\begin{lemma}[batch update projection onto \(v\)]
Under the class-mean approximation (aggregating samples of each reward class into their class means), the projection of the update onto \(v\) satisfies (up to a positive constant)
\[
v^\top \Delta\theta \ \propto\ \mu A_+ s_+ + (1-\mu) A_- s_- = \sqrt{\mu(1-\mu)}\,(s_+ - s_-).
\]
Hence, the absolute projected magnitude is
\[
\big|v^\top \Delta\theta\big|\ \propto\ \sqrt{\mu(1-\mu)}\;|s_+ - s_-|.
\]
\end{lemma}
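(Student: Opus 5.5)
Under the class-mean approximation, I would group the $n$ rollouts by reward class and then do a one-line algebraic simplification.

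\textbf{Step 1 (aggregation).} Split the sum in $\Delta\theta \propto \frac{1}{n}\sum_i A_i g_i$ into the $k=\mu n$ rollouts with reward $R_{\max}$ and the $n-k=(1-\mu)n$ rollouts with reward $0$. Inside each class the advantage is constant, equal to $A_+$ or $A_-$ by the preceding lemma. This gives
\[
\Delta\theta \propto \mu A_+ \Big(\tfrac{1}{k}\sum_{r_i=R_{\max}} g_i\Big) + (1-\mu) A_- \Big(\tfrac{1}{n-k}\sum_{r_i=0} g_i\Big).
\]
The class-mean approximation replaces each bracketed empirical mean by $\mathbb{E}[g\mid r=R_{\max}]$ and $\mathbb{E}[g\mid r=0]$, respectively. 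Projecting onto $v$ by linearity then yields $v^\top\Delta\theta \propto \mu A_+ s_+ + (1-\mu)A_- s_-$, which is the first expression in the statement.

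\textbf{Step 2 (simplification).} Substitute the values from the preceding lemma:
\[
\mu\sqrt{\tfrac{1-\mu}{\mu}}=\sqrt{\mu(1-\mu)}, \qquad (1-\mu)\Big(-\sqrt{\tfrac{\mu}{1-\mu}}\Big)=-\sqrt{\mu(1-\mu)}.
\]
Factoring gives $\sqrt{\mu(1-\mu)}\,(s_+-s_-)$. Taking absolute values, and using $\sqrt{\mu(1-\mu)}\ge 0$, gives the magnitude claim. Because the proportionality constant is positive, taking absolute values does not change it.

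\textbf{Main obstacle.} The only delicate point is the degenerate endpoints $\mu\in\{0,1\}$, where $\sigma_r=0$ and $A_\pm$ are undefined. I would restrict to $\mu\in(0,1)$ for the derivation. I would then note that the right-hand side extends continuously to $0$ there, which matches the vanishing-advantage convention, and that $\epsilon$-regularization yields zero advantages in that case anyway.

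I would also state clearly that replacing empirical class means by conditional expectations is an approximation, not an identity. The exact statement concerns the expectation of the projection conditional on the class counts.
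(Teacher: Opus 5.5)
Your proposal is correct and follows the paper's proof essentially verbatim. Both aggregate the sum by reward class, replace the class averages by $\mathbb{E}[g\mid r=R_{\max}]$ and $\mathbb{E}[g\mid r=0]$, project onto $v$ by linearity, and substitute $A_\pm$ from the preceding lemma to factor out $\sqrt{\mu(1-\mu)}$. Your explicit restriction to $\mu\in(0,1)$, where $\sigma_r>0$, and your note on the approximation's status are sensible additions that the paper leaves implicit.
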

\begin{proof}
Aggregate the sum by reward classes:
\[
\Delta\theta \propto \mu A_+\,\mathbb{E}[g\mid r=R_{\max}] + (1-\mu) A_-\,\mathbb{E}[g\mid r=0].
\]
Projecting onto \(v\) and substituting the expressions for \(A_\pm\) gives
\[
v^\top\Delta\theta \propto \mu A_+ s_+ + (1-\mu) A_- s_-
= \sqrt{\mu(1-\mu)}\,s_+ - \sqrt{\mu(1-\mu)}\,s_-
= \sqrt{\mu(1-\mu)}\,(s_+ - s_-).
\]
Taking absolute value yields the stated expression.
\end{proof}

\begin{theorem}[optimality of \(\mu=\tfrac12\)]
For any fixed \(s_+,s_-\in\mathbb{R}\), the absolute projected update magnitude
\[
F(\mu) := \sqrt{\mu(1-\mu)}\;|s_+ - s_-|
\]
over \(\mu\in[0,1]\) attains its maximum at \(\mu=\tfrac12\). Therefore, under the stated assumptions, a batch with half the samples having reward \(R_{\max}\) and half having reward \(0\) maximizes the expected update magnitude in direction \(v\).
\end{theorem}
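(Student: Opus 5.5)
The plan is to reduce the claim to maximizing a single concave function of \(\mu\). The preceding lemma (batch update projection onto \(v\)) already gives \(|v^\top\Delta\theta| \propto \sqrt{\mu(1-\mu)}\,|s_+-s_-| = F(\mu)\). Since \(|s_+-s_-|\) does not depend on \(\mu\), there are two cases. If \(s_+=s_-\), then \(F\equiv 0\), so every \(\mu\), and in particular \(\mu=\tfrac12\), is a maximizer; the statement then holds in the weak sense. If \(s_+\neq s_-\), it suffices to maximize \(h(\mu):=\sqrt{\mu(1-\mu)}\) on \([0,1]\).

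For the maximization, I will avoid calculus where possible and use the identity \(\mu(1-\mu)=\tfrac14-(\mu-\tfrac12)^2\). This shows \(\mu(1-\mu)\le\tfrac14\), with equality if and only if \(\mu=\tfrac12\). Because the square root is strictly increasing on \([0,\infty)\), we get \(h(\mu)\le\tfrac12\) with equality only at \(\mu=\tfrac12\). Consequently, when \(s_+\neq s_-\), \(F(\mu)\le \tfrac12|s_+-s_-|\) and \(\mu=\tfrac12\) is the unique maximizer. As a cross-check, note that \(h'(\mu)=(1-2\mu)/(2\sqrt{\mu(1-\mu)})\) vanishes only at \(\tfrac12\) and that \(h(0)=h(1)=0\).

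The only delicate point is the endpoints \(\mu\in\{0,1\}\). There \(\sigma_r=0\), so the advantages \(A_\pm\) from the first lemma are undefined, and the projection formula of the second lemma was derived only for \(\mu\in(0,1)\). I will handle this in one of two equivalent ways. The first is to define \(F\) at the endpoints by its continuous extension, \(F(0)=F(1)=0\). The second is to note that in an all-correct or all-wrong batch every \(r_i-\bar r\) is zero, so the numerator of every advantage is zero; this matches the paper's "advantage vanishing" convention, where the \(\epsilon\) in the denominator gives zero advantages. Either way \(F=0\) at the endpoints, which does not affect the maximizer.

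Finally, I will translate the result back into the stated conclusion. Since \(\mu=k/n\), the value \(\mu=\tfrac12\) corresponds to half the rollouts having reward \(R_{\max}\) and half having reward \(0\). When \(n\) is odd this is not attainable exactly, and the best achievable values are \(k=\lfloor n/2\rfloor\) and \(k=\lceil n/2\rceil\), by the symmetry and unimodality of \(h\).
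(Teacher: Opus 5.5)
Your proof is correct and follows the paper's argument: both drop the \(\mu\)-independent factor \(|s_+-s_-|\) and maximize \(\mu(1-\mu)\). The paper does this by calling \(\mu(1-\mu)\) a concave quadratic with vertex at \(\mu=\tfrac12\), and you use the equivalent completed-square identity \(\mu(1-\mu)=\tfrac14-(\mu-\tfrac12)^2\) together with monotonicity of the square root. Your extra remarks are sound refinements the paper omits rather than a different route: the degenerate case \(s_+=s_-\), the undefined advantages at \(\mu\in\{0,1\}\), uniqueness of the maximizer, and odd \(n\).
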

\begin{proof}
Since \(|s_+ - s_-|\) is independent of \(\mu\), it suffices to maximize \(g(\mu):=\sqrt{\mu(1-\mu)}\) on \([0,1]\). Note
\[
g(\mu)^2 = \mu(1-\mu) = -\mu^2 + \mu,
\]
a concave quadratic with vertex at \(\mu=\tfrac12\), where it attains the maximum value \(\tfrac14\). Hence \(g(\mu)\) is maximized at \(\mu=\tfrac12\), and so is \(F(\mu)\).
\end{proof}

\paragraph{Two representative geometric cases.}

Here we discuss two extreme yet representative cases separately.

\begin{corollary}[Case A: opposite-class gradients]
Assume that the average class gradients are exactly opposite along the reference direction, i.e.
\[
s_- = -s_+.
\]
Then the projected update satisfies (up to a positive constant)
\[
v^\top\Delta\theta \ \propto\ \sqrt{\mu(1-\mu)}\,(s_+ - s_-)=2\sqrt{\mu(1-\mu)}\,s_+,
\]
and hence the absolute projected magnitude is proportional to
\[
\big|v^\top\Delta\theta\big|\ \propto\ 2|s_+|\sqrt{\mu(1-\mu)}.
\]
Consequently this magnitude is maximized at \(\mu=\tfrac12\).
\end{corollary}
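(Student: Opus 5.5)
The plan is to specialize the preceding batch-projection lemma to the hypothesis $s_- = -s_+$ and then invoke the optimality theorem for $\mu=\tfrac12$; nothing new beyond substitution is needed.

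\textbf{Step 1 (projected update).} By the batch-update projection lemma, under the class-mean approximation we already have, up to a positive constant,
\[
v^\top\Delta\theta \propto \sqrt{\mu(1-\mu)}\,(s_+-s_-).
\]
Substituting $s_-=-s_+$ gives $s_+-s_- = 2s_+$. Hence
\[
v^\top\Delta\theta\propto 2\sqrt{\mu(1-\mu)}\,s_+,
\]
which is the first display of the corollary. Taking absolute values, and using that $\sqrt{\mu(1-\mu)}\ge 0$ on $[0,1]$, yields $|v^\top\Delta\theta|\propto 2|s_+|\sqrt{\mu(1-\mu)}$.

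\textbf{Step 2 (maximization).} This quantity is exactly $F(\mu)$ from the optimality theorem with $|s_+-s_-|=2|s_+|$. Since this factor does not depend on $\mu$, the theorem applies directly and the maximum is attained at $\mu=\tfrac12$. If $s_+=0$, the function is identically zero, so $\mu=\tfrac12$ is still (weakly) a maximizer; I will remark on this degenerate case.

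\textbf{Endpoints.} The only point requiring care is $\mu\in\{0,1\}$, where the advantages $A_\pm$ from the first lemma are undefined because $\sigma_r=0$. There I will adopt the convention, consistent with the $\epsilon$-regularized GRPO advantage, that all advantages vanish. Then the projected update is $0$, matching $\sqrt{\mu(1-\mu)}=0$, so the formula extends continuously to the closed interval. I expect no genuine obstacle here: the argument is a direct specialization of the two preceding results.
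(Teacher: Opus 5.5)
Your proposal is correct and matches the paper's proof: substitute $s_-=-s_+$ into the projection lemma to get $2\sqrt{\mu(1-\mu)}\,s_+$, take absolute values, and use that $\sqrt{\mu(1-\mu)}$ is maximized at $\mu=\tfrac12$. Your extra remarks on the degenerate case $s_+=0$ and the endpoints $\mu\in\{0,1\}$ (where the $\epsilon$-regularized advantages vanish) are sound refinements that the paper omits.
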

\begin{proof}
Substitute \(s_-=-s_+\) into the general expression \(v^\top\Delta\theta\propto\sqrt{\mu(1-\mu)}(s_+ - s_-)\). This yields the displayed expression, which is a scalar multiple of \(\sqrt{\mu(1-\mu)}\). The factor \(\sqrt{\mu(1-\mu)}\) is maximized at \(\mu=\tfrac12\), hence the result.
\end{proof}

\begin{corollary}[Case B: orthogonal-class gradients]
Suppose the class-mean gradient for the negative class is orthogonal to the reference direction, i.e.
\[
s_- = 0,
\]
while \(s_+\neq 0\). Then
\[
v^\top\Delta\theta \ \propto\ \sqrt{\mu(1-\mu)}\,s_+,
\]
so the absolute projected magnitude is
\[
\big|v^\top\Delta\theta\big|\ \propto\ |s_+|\sqrt{\mu(1-\mu)},
\]
which is again maximized at \(\mu=\tfrac12\).

Moreover, consider the stronger geometric picture where the mean gradient of the positive class lies along \(v\) with norm \(\|g_+\| = \alpha\) and the mean gradient of the negative class is orthogonal with norm \(\|g_-\|=\beta\). If the normalized advantage-weighted contributions from the two classes have equal amplitudes (i.e. \(\alpha\sqrt{\mu(1-\mu)}=\beta\sqrt{\mu(1-\mu)}\), equivalently \(\alpha=\beta\)), then the resulting update vector is the sum of two orthogonal vectors of equal magnitude, so the angle \(\phi\) between the update and \(v\) satisfies \(\cos\phi = 1/\sqrt{2}\) (independent of \(\mu\)), while the absolute magnitude of the projection onto \(v\) still scales as \(\sqrt{\mu(1-\mu)}\) and is maximized at \(\mu=\tfrac12\).
\end{corollary}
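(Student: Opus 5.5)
The plan is to derive both claims of the corollary from the batch-projection lemma above, which gives $v^\top\Delta\theta \propto \sqrt{\mu(1-\mu)}\,(s_+-s_-)$ under the class-mean approximation. Setting $s_-=0$ there immediately yields $v^\top\Delta\theta\propto\sqrt{\mu(1-\mu)}\,s_+$. Taking absolute values and noting that $|s_+|>0$ does not depend on $\mu$, maximizing the projected magnitude reduces to maximizing $\sqrt{\mu(1-\mu)}$ over $[0,1]$. That maximization is exactly what the optimality theorem for $\mu=\tfrac12$ already settles, so we can cite it directly. This first part is routine.

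For the stronger geometric statement, we work at the vector level rather than only with projections. Following the proof of the projection lemma, the update satisfies
\[
\Delta\theta\propto \mu A_+\,g_+ + (1-\mu)A_-\,g_- ,
\]
where $g_\pm$ denote the class-mean gradients. The binary-advantage lemma gives
\[
\mu A_+ = \sqrt{\mu(1-\mu)}, \qquad (1-\mu)A_- = -\sqrt{\mu(1-\mu)},
\]
so that
\[
\Delta\theta\propto\sqrt{\mu(1-\mu)}\,(g_+-g_-).
\]
With $g_+=\alpha v$ and $g_-\perp v$ of norm $\beta$, the update splits into an orthogonal pair: a component $\sqrt{\mu(1-\mu)}\,\alpha v$ along $v$ and a component $-\sqrt{\mu(1-\mu)}\,g_-$ orthogonal to $v$. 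Their norms are $\alpha\sqrt{\mu(1-\mu)}$ and $\beta\sqrt{\mu(1-\mu)}$. By Pythagoras,
\[
\|\Delta\theta\|\propto\sqrt{\mu(1-\mu)}\sqrt{\alpha^2+\beta^2}, \qquad
\cos\phi=\frac{\alpha}{\sqrt{\alpha^2+\beta^2}} .
\]
This angle is independent of $\mu$ because the common factor $\sqrt{\mu(1-\mu)}$ cancels. Imposing $\alpha=\beta$, the equal-amplitude condition, gives $\cos\phi=1/\sqrt2$. The projection $v^\top\Delta\theta\propto\alpha\sqrt{\mu(1-\mu)}$ is then again maximized at $\mu=\tfrac12$ by the same theorem.

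The main obstacle is the edge cases, not the algebra. At $\mu\in\{0,1\}$ the z-score is undefined because $\sigma_r=0$, and $\Delta\theta=0$ there, so $\cos\phi$ is undefined. We will therefore state the angle claim for $\mu\in(0,1)$ and interpret the values at the endpoints by continuity, where $F=0$. We should also stress that the equal-amplitude condition is just $\alpha=\beta$, since the weights $\sqrt{\mu(1-\mu)}$ on the two classes coincide; it is a hypothesis on the gradient geometry and places no constraint on $\mu$.
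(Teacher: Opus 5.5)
Your proposal is correct and follows essentially the same route as the paper: substitute $s_-=0$ into the projection lemma for the first claim, then write the update as two orthogonal components, one along $v$ and one perpendicular to it, which share the factor $\sqrt{\mu(1-\mu)}$; that common factor cancels in $\cos\phi$ and governs the projected magnitude. Your version is only more explicit than the paper's — you compute $\cos\phi=\alpha/\sqrt{\alpha^2+\beta^2}$ in general before setting $\alpha=\beta$, and you restrict the angle claim to $\mu\in(0,1)$ — but the argument is the same.
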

\begin{proof}
The first statement follows immediately from substituting \(s_-=0\) into the general projection formula. For the geometric remark, if the two (orthogonal) class-mean vectors have equal weighted amplitudes, their vector sum has length \(\sqrt{2}\) times one amplitude, and the projection onto \(v\) equals that amplitude; dividing by the total norm gives \(\cos\phi=1/\sqrt{2}\). The dependence on \(\mu\) appears only through the common amplitude factor \(\sqrt{\mu(1-\mu)}\), which is maximized at \(\mu=\tfrac12\).
\end{proof}

\paragraph{Remarks.}
\begin{enumerate}
  \item The conclusion does not depend on \(R_{\max}\): z-score standardization cancels the reward scale.
  \item If \(s_+>s_-\) then \(v^\top\Delta\theta>0\) and the update moves toward \(v\); if \(s_+<s_-\) the update moves opposite to \(v\). In either case the absolute magnitude of the projection is maximal at \(\mu=\tfrac12\).
  \item The proof uses a simplifying class-mean approximation. In practice each class has internal variance; nonetheless the leading-order dependence of the expected projection on \(\mu\) is governed by \(\sqrt{\mu(1-\mu)}\). Under mild conditions on within-class variance, the qualitative conclusion (maximum at \(\mu=\tfrac12\)) remains valid in expectation.
  \item For non-binary rewards the algebra changes: one must analyse \(\tfrac{r_i-\bar r}{\sigma_r}\) for the full reward distribution. The binary case provides a clear analytic baseline and shows the key role of z-score normalization in making the effective signal scale proportional to \(\sqrt{\mu(1-\mu)}\).
\end{enumerate}

\paragraph{Conclusion.}
This theoretical analysis rigorously justifies the core motivation of our method. When batch advantages are computed via z-score normalization and rewards are binary \(\{0,R_{\max}\}\), the expected magnitude of the update projected onto any fixed reference direction \(v\) is proportional to \(\sqrt{\mu(1-\mu)}\), which attains its maximum when \(\mu=\tfrac12\). In other words, a batch balanced with equal numbers of correct and incorrect samples provides the strongest expected directional signal for optimization. This result supports our design choice of adaptively generating problem variants to maintain an approximately 50/50 ratio of correct and incorrect samples, ensuring that gradient updates most efficiently align with the optimal update direction and thereby maximizing model improvement.

\begin{table}
\centering
\resizebox{0.8\textwidth}{!}{ 
\begin{tabular}{cl}
\toprule
\textbf{Variant} & \textbf{Description} \\
\midrule
1 & Gaussian (intensity = 0.45) and Rotate (multiples of 1°) + Variant text / Add text to image \\
2 & Gaussian (intensity = 0.45) or Rotate (multiples of 30°) + Variant text / Add text to image \\
3 & Gaussian (intensity = 0.3) or Rotate (multiples of 45°) + Variant text / Add text to image \\
4 & Gaussian (intensity = 0.3) or Rotate (multiples of 90°) \\
5 & Variant text \\
6 & Original query / Variant text \\
7 & 1-think step \\
8 & 2-think steps \\
9 & 3-think steps \\
\bottomrule
\end{tabular}
}
\caption{Variants from Hard to Easy.}
\label{tab:variants}
\end{table}

\begin{table}
\centering
\caption{Performance comparison across different base model.}
\label{tab:qwen3}
\resizebox{0.7\textwidth}{!}{%
\begin{tabular}{lccccc}
\toprule
\textbf{Model} & \textbf{Mathematics} & \textbf{Physics} & \textbf{Chemistry} & \textbf{Biology} & \textbf{Avg.} \\
\midrule
\multicolumn{6}{l}{\textbf{Qwen3-VL-4B}} \\
Qwen3-VL-4B-Instruct & 65.5 & 55.5 & 58.5 & 56.1 & 58.9 \\
+GRPO & 75.5 & 68.5 & 71.0 & 68.6 & 70.9 \\
\textbf{+DIVA-GRPO (Ours)} & 78.3 & 70.7 & 72.7 & 71.9 & 73.4 \\
\textbf{improv.} & \red{+2.8} & \red{+2.2} & \red{+1.7} & \red{+3.3} & \red{+2.5} \\
\midrule
\multicolumn{6}{l}{\textbf{Qwen3-VL-8B}} \\
Qwen3-VL-8B-Instruct & 71.5 & 64.5 & 66.5 & 65.5 & 67.0 \\
+GRPO & 77.5 & 72.5 & 74.0 & 73.2 & 74.3 \\
\textbf{+DIVA-GRPO (Ours)} & 82.2 & 74.3 & 78.2 & 79.3 & 78.5 \\
\textbf{improv.} & \red{+4.7} & \red{+1.8} & \red{+4.2} & \red{+6.1} & \red{+4.2} \\
\bottomrule
\end{tabular}%
}
\end{table}

\section{Supplementary experiments}

\label{appendix:keta}

\subsection{Parameter \texorpdfstring{$k$}{k}}
\label{appendixD:k}
In the same experimental setup as the ablation study, we assess the performance of the proposed method across different values of the parameter \(k\). As shown in Figure~\ref{fig:val_8}, the model achieves optimal performance when \(k \approx 0.1\), with a peak accuracy difference of only 0.5\%. When \(k = 0\) (i.e., difficulty weighting is disabled) or \(k = 1\) (i.e., excessive difficulty weighting), the algorithm exhibits similar performance in the early training stages due to the lack of distinction in sample difficulties. However, as training progresses, the differences in sample difficulties become more pronounced, and overly extreme weighting results in degraded performance or training instability. When \(k = 0.1\), the maximum weight applied is roughly 1.5 times the original advantage, while the maximum reduction is about 0.67 times the original value. From the weighting formula, we derive $k = \frac{\ln(1.5)}{\max \left( D_q^{(i)} - \bar{D}_q \right)}$.

\subsection{Parameter \texorpdfstring{$\eta$}{eta} and Sample Sampling}

\label{appendixD:eta}
In the same experimental setup as the ablation study, we evaluate the performance of the proposed method across different values of the parameter $\eta$. The parameter $\eta$ represents the magnitude of change in the difficulty assessment when a question is answered correctly or incorrectly. Specifically, when $\eta=2$, the sample requires four consecutive correct (or incorrect) answers before its difficulty level is updated to the easiest (or hardest) level. When $\eta=4$, two correct (or incorrect) answers are required for the update, and when $\eta=8$, only one correct (or incorrect) answer is needed for the update.

We observe that when the difficulty assessment changes too rapidly, the inherent randomness in the model may result in inaccurate difficulty assessments. Conversely, when the difficulty assessment changes too slowly, the model's performance improvement may not be reflected in the difficulty evaluations. As shown in Figure~\ref{fig:val_9}, the model achieves the best performance when $\eta=4$. Based on the difficulty update formula, we suggest that $\eta$ should be set as $\frac{D_{\text{max}} - D_{\text{min}}}{2}$.

Additionally, regarding sample sampling, we configure a total of nine difficulty level variants, as detailed in Table \ref{tab:variants}. For sampling, we use a normal distribution to sample from the difficulty level adjacent to the target difficulty.

\subsection{Models of Different Scales}
\label{appendixD:scales}
As shown in Table \ref{tab:qwen3}, we evaluate the performance of GRPO and our method, DIVA-GRPO, on different model sizes and architectures, with the experimental setup consistent with the one described earlier. The experimental results on Qwen3-VL-4B-Instruct and Qwen3-VL-8B-Instruct show that, compared to the Base model, our method improves by 14.5 and 11.5, respectively. Additionally, our method achieves an average improvement of 2.5 and 4.2 over GRPO. These results demonstrate that our approach is effective across various model sizes and architectures.

\begin{figure}
    \centering
    \begin{minipage}{0.23\textwidth}
        \centering
        \includegraphics[width=\textwidth]{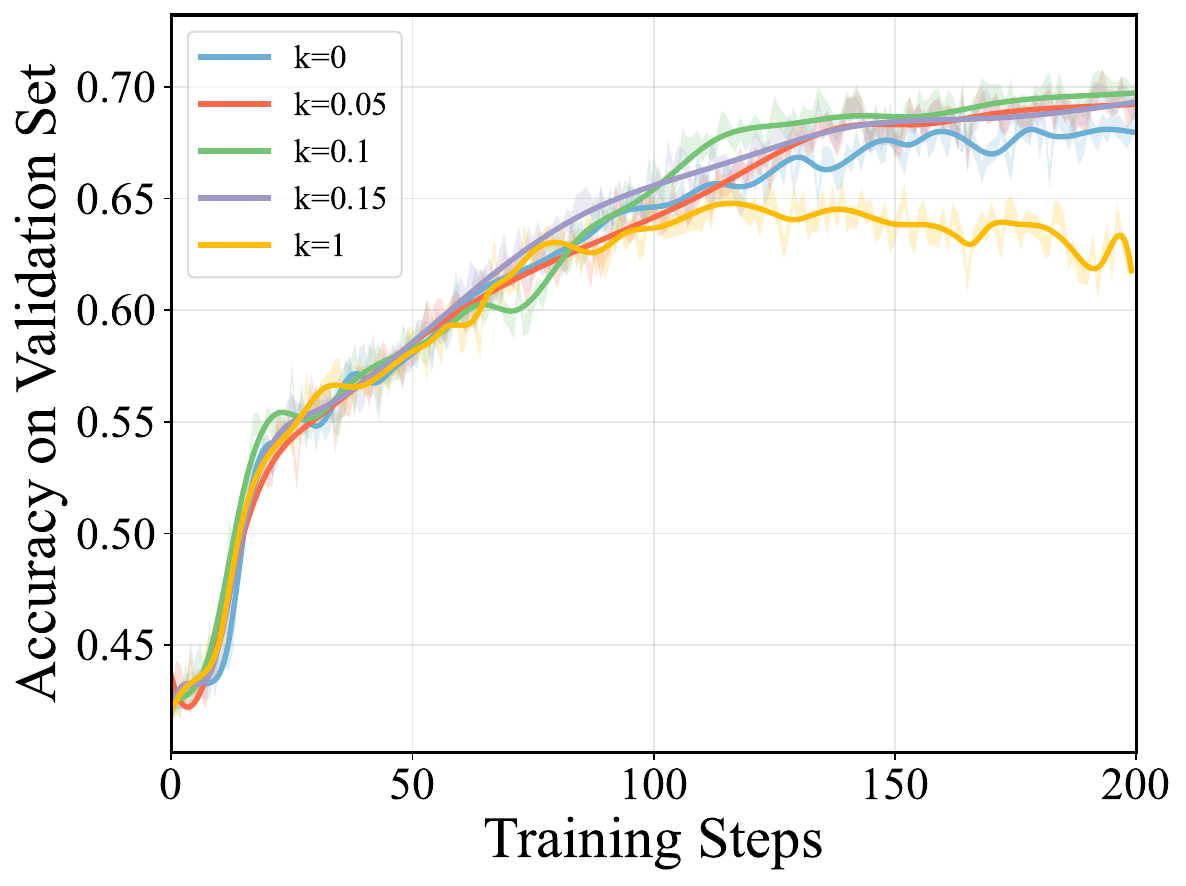}
        \subcaption{\tiny Under different values of $k$} 
        \label{fig:val_8}
    \end{minipage}
    \begin{minipage}{0.23\textwidth}
        \centering
        \includegraphics[width=\textwidth]{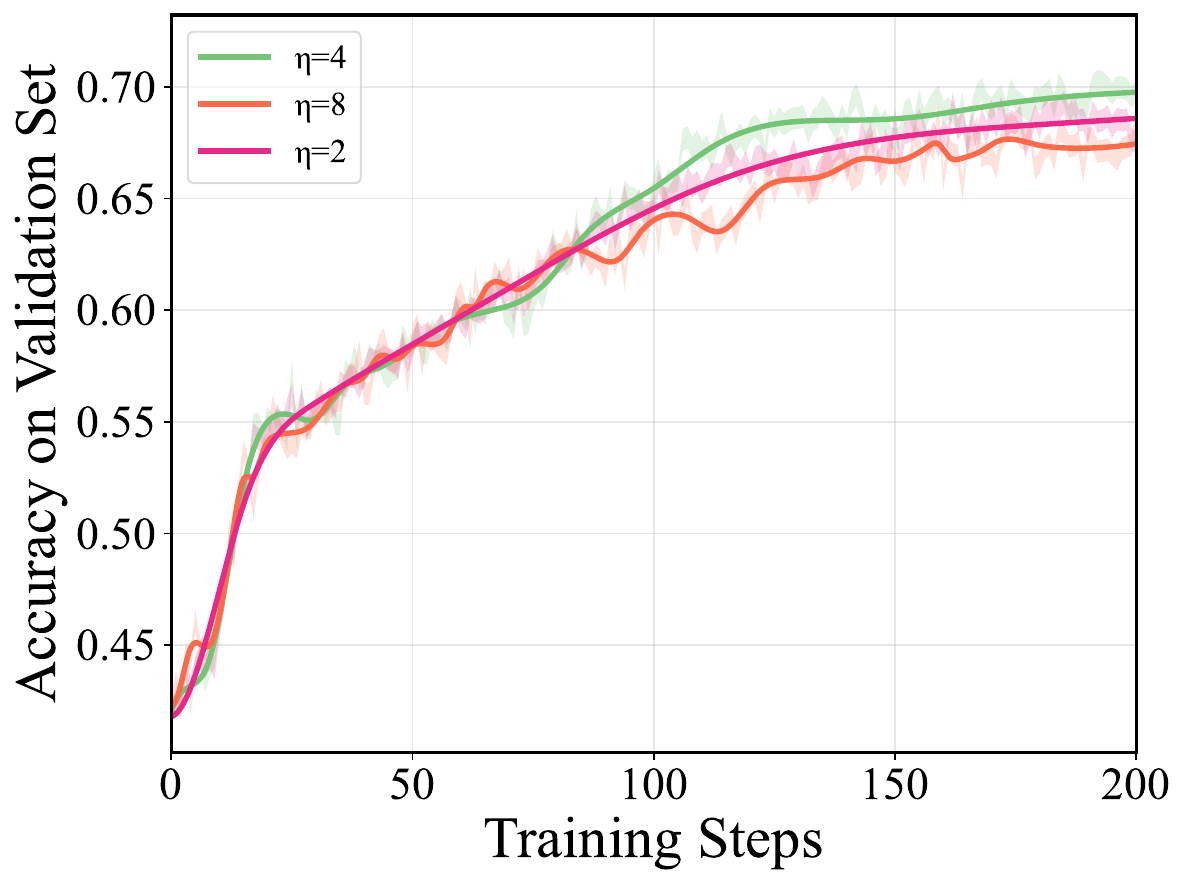}
        \subcaption{\tiny Under different values of $\eta$} 
        \label{fig:val_9}
    \end{minipage}
    \begin{minipage}{0.23\textwidth}
        \centering
        \includegraphics[width=\textwidth]{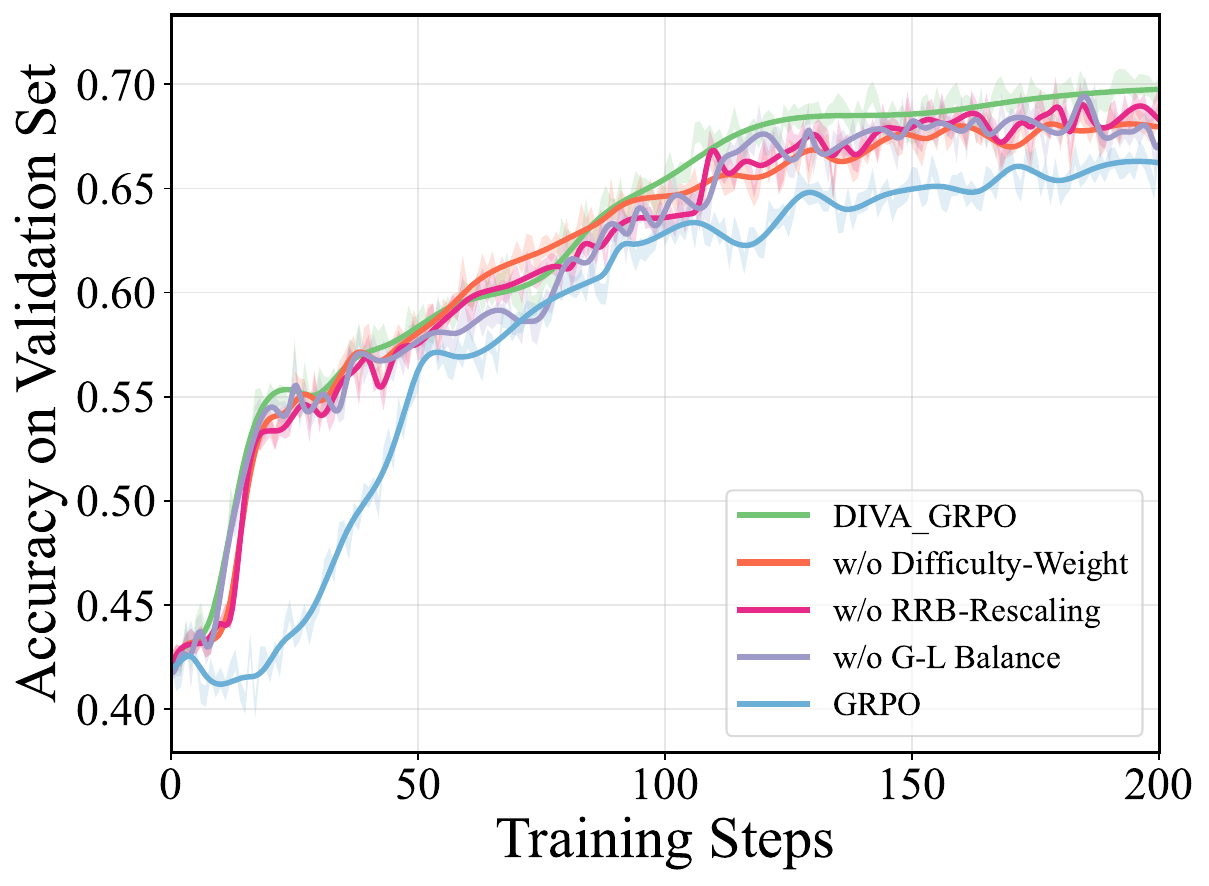}
        \subcaption{\tiny Ablation study of components} 
        \label{fig:val_10}
    \end{minipage}
    \begin{minipage}{0.23\textwidth}
        \centering
        \includegraphics[width=\textwidth]{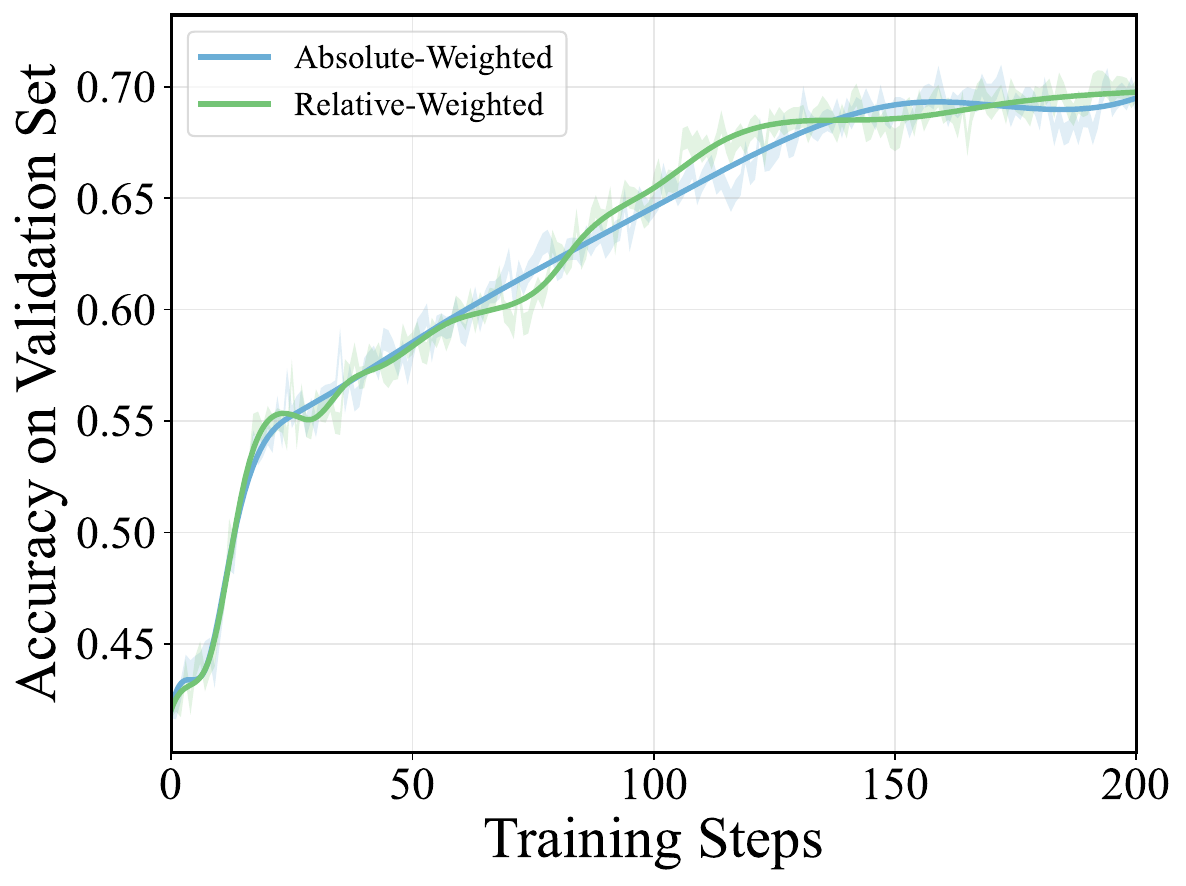}
        \subcaption{\tiny Ablation study of components} 
        \label{fig:val_11}
    \end{minipage}
    \caption{Effect of different hyperparameters on model performance on the validation set.}
    \label{fig:hyperparams}
\end{figure}

\begin{table}
\centering
\caption{Performance comparison across different external models on Qwen-2.5-VL-7B.}
\label{tab:extra}
\resizebox{0.60\textwidth}{!}{%
\begin{tabular}{lccccc}
\toprule
\textbf{Model} & \textbf{Mathematics} & \textbf{Physics} & \textbf{Chemistry} & \textbf{Biology} & \textbf{Avg.} \\
\midrule
\multicolumn{6}{l}{\textbf{Qwen-2.5-VL-7B}} \\
+ Qwen-Plus & 78.1 & 61.5 & 69.1 & 70.9 & 69.8 \\
+ GPT-o3& 78.3 & 62.2 & 69.6 & 70.7 & 70.2 \\
\textbf{improv.} & \red{+0.2} & \red{+0.7} & \red{+0.5} & \blue{-0.2} & \red{+0.3} \\
\bottomrule
\end{tabular}%
}
\end{table}

\subsection{Generation Using Different External Models}
\label{appendixD:external}
As shown in Table \ref{tab:extra}, we generate variants and think steps using Qwen-plus and GPT-o3, and test the performance they achieve under the same training conditions. The results show that the impact of using a strong or weaker external model on the final model training performance is minimal, with GPT-o3 achieving only a 0.3 accuracy improvement over Qwen-plus. This can be attributed to our use of result supervision, where only samples that can correctly answer the final result are selected for training, rather than training on all samples. Additionally, we validate the consistency of the generated question variants with the original questions using Qwen-plus. We sample 1,000 question variants for the experiment and find that 98.8\% of the variants are consistent with the original questions. This further demonstrates the reliability of our method. The prompts used in the experiments are shown in Section \ref{app:prompts}.

\subsection{Absolute Difficulty Weighting vs. Relative Difficulty Weighting}
\label{appendixD:absolute}
As described in Section \ref{tex:weighted}, our method uses relative difficulty weighting instead of absolute difficulty weighting. This is because the difficulty of a problem and its variants differ across two dimensions. For example, a problem with difficulty 1 plus a variant with difficulty 8 is not directly comparable to a problem with difficulty 9. The variant with increased difficulty simply adds some visual noise, while the problem with inherent difficulty is challenging due to its complex logic. Therefore, applying absolute difficulty weighting intuitively does not make sense. However, for variants of the same original problem, the difficulty ranking is clear: image-noise variant $>$ original problem $>$ variant with added think-steps. As a result, we adopt relative difficulty weighting, where the difficulty within the same problem can be compared. Additionally, as shown in Figure~\ref{fig:val_11}, we test both absolute and relative difficulty weighting and find that their impact on the model's final performance is minimal. However, from the perspective of logical consistency, we ultimately choose relative difficulty weighting.

\subsection{Analysis of the Effectiveness of the Method}
\label{appendixD:effective}
\paragraph{Proportion of Samples with Non-zero Advantage.} Our method dynamically estimates problem difficulty and samples variants of appropriate difficulty, effectively mitigating reward sparsity and improving sample efficiency. As shown in Figure~\ref{fig:analyse_2}, the proportion of non-zero advantage samples in GRPO and GSPO rises rapidly at the beginning of training, indicating a quick increase in training accuracy. However, as the model becomes stronger, most problems become easy, causing a sharp drop in the proportion of non-zero advantage samples and limiting late-stage optimization. In contrast, our method adaptively adjusts problem difficulty, keeping the proportion of non-zero advantage samples stable, ensuring the model continually encounters challenging variants, maintains a balanced reward distribution, and sustains effective training throughout.

\paragraph{Analysis of Origin Local \& Global Advantages.} We first apply z-score normalization within each group for both Local and Global Advantage. Due to the higher sparsity and variance in the Global group compared to the Local group, the range of advantages,$\max(\mathrm{adv})-\min(\mathrm{adv})$ can still differ significantly between groups even after standard normalization, causing the group with the larger range to dominate (see Figure~\ref{fig:analyse_1}). To mitigate this effect, we further apply batch normalization separately to the Local and Global advantages, followed by standard normalization within each batch. This additional normalization effectively reduces the discrepancy between the two distributions.

\paragraph{Analysis of Variant Difficulty.}
To validate that the difficulty of the variants generated by our method aligns with expectations, we randomly sample 500 examples from the MMK12\_test dataset. For each example, we generate one variant at each difficulty level, with the specific variant settings detailed in Table~\ref{tab:variants}. This results in 500 samples per difficulty level. For each sample, we perform 10 rollouts using Qwen-2.5-VL-7B and compute the final accuracy. The distribution of the final accuracy is shown in Figure~\ref{fig:analyse_3}. Although the increase in accuracy is not perfectly linear, the overall trend indicates that our sample variants exhibit the expected pattern: easier variants have higher accuracy, and more difficult variants have lower accuracy. This demonstrates the effectiveness of our method.

\begin{figure}
    \centering
    \begin{minipage}{0.30\textwidth}
        \centering
        \includegraphics[width=\textwidth]{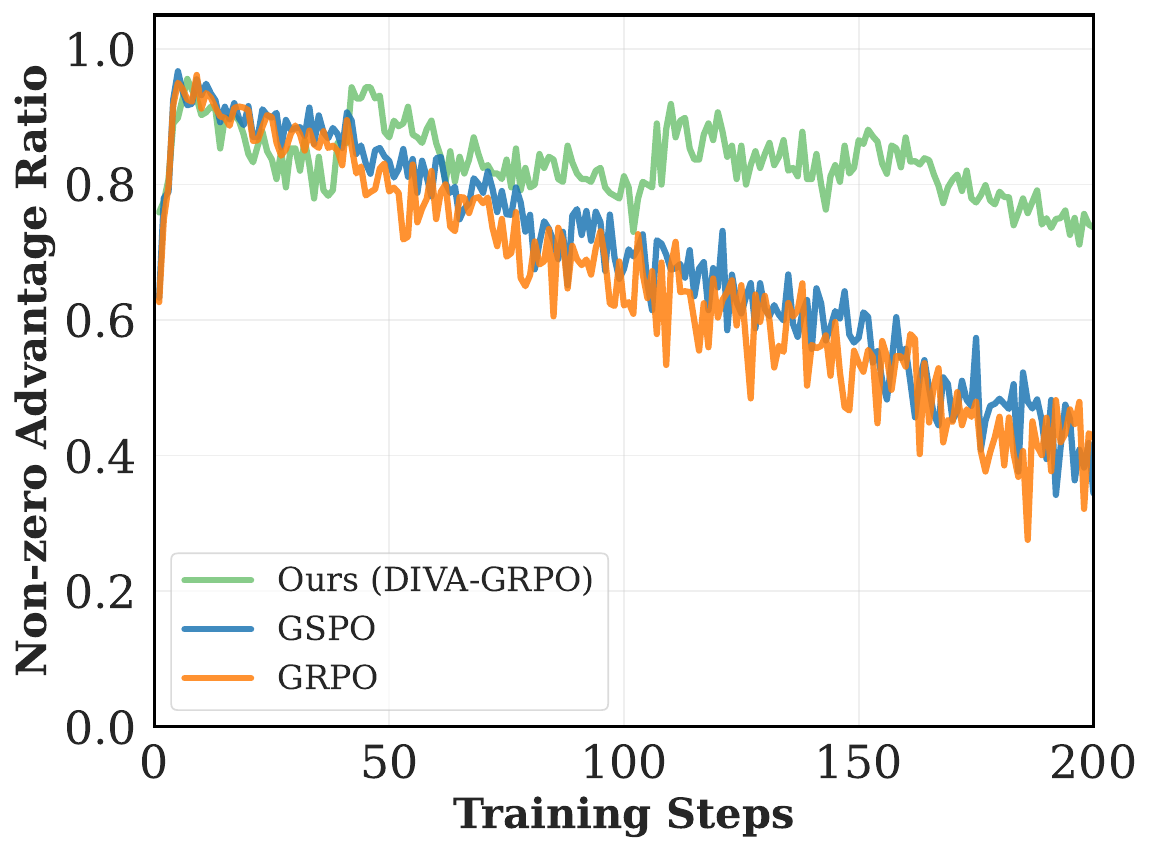}
        \subcaption{Proportion of samples with non-zero advantages} 
        \label{fig:analyse_2}
    \end{minipage}
    \begin{minipage}{0.30\textwidth}
        \centering
        \includegraphics[width=\textwidth]{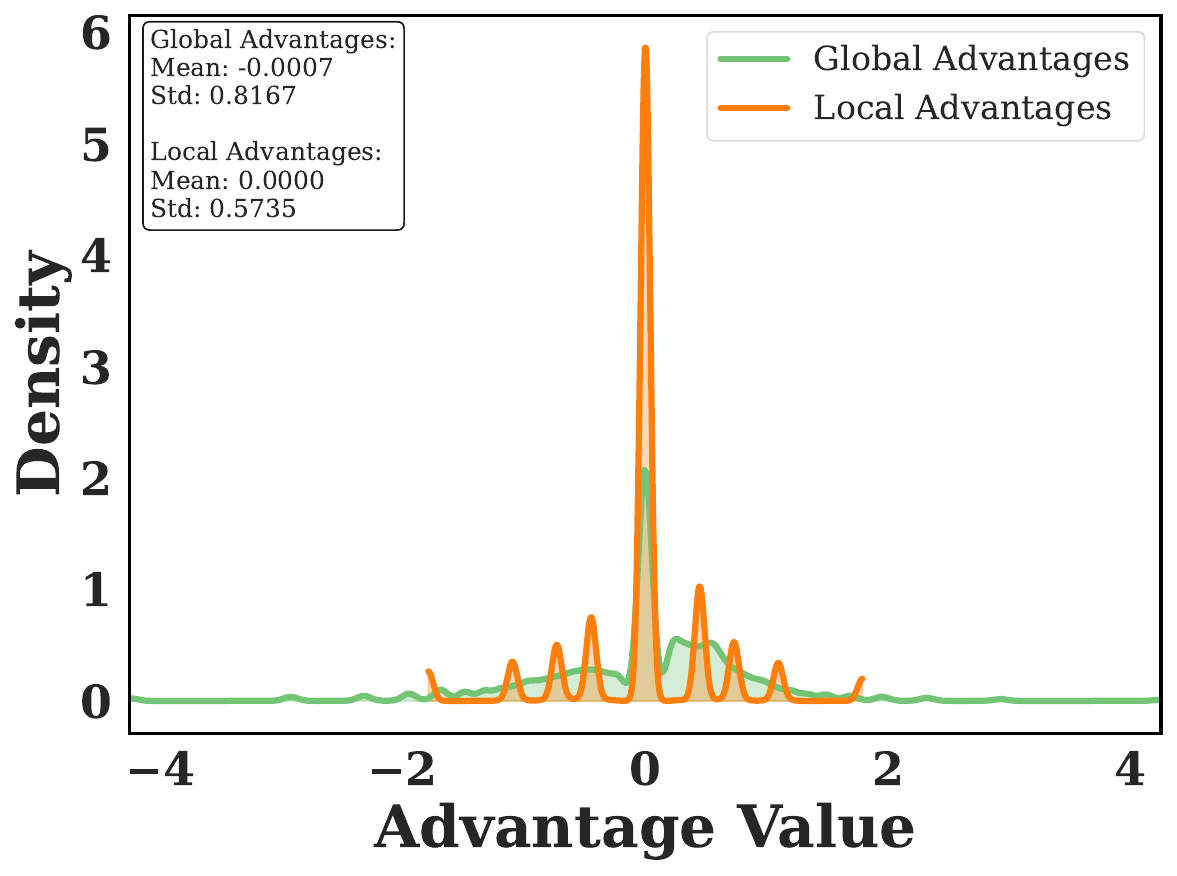}
        \subcaption{Origin Global \& Local Advantages Distribution} 
        \label{fig:analyse_1}
    \end{minipage}
    \begin{minipage}{0.30\textwidth}
        \centering
        \includegraphics[width=\textwidth]{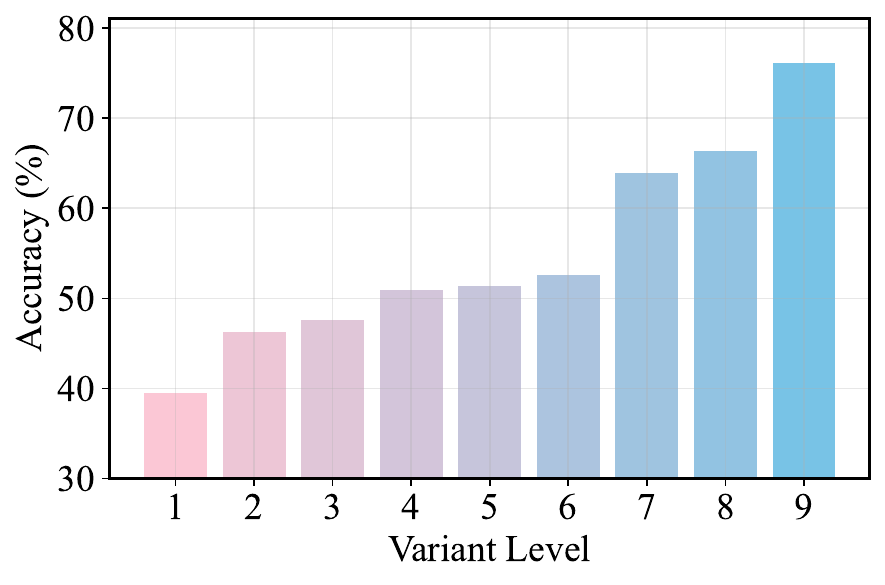}
        \subcaption{The accuracy distribution of samples with different difficulty levels.} 
        \label{fig:analyse_3}
    \end{minipage}
    \caption{Analysis of the Effectiveness of DIVA-GRPO.}
    \label{fig:analysis}
\end{figure}

\section{Effect of Question Difficulty on GRPO Optimization}
\noindent
\begin{minipage}{0.55\textwidth}
To further validate that providing the model with moderately difficult questions is more conducive to optimization via GRPO, we conducted a new experiment. Two sets of 5,000 samples were drawn from the MMK12 training data: the first set consisted of randomly sampled instances, while the second set comprised instances of moderate difficulty. Both sets were used to train the model under standard GRPO with a rollout parameter of $k=5$. As shown in Figure~\ref{fig:rq5_difficulty}, training on the moderately difficult data enabled faster and more effective model optimization.
\end{minipage}%
\hfill
\begin{minipage}{0.4\textwidth}
    \centering
    \includegraphics[width=\linewidth]{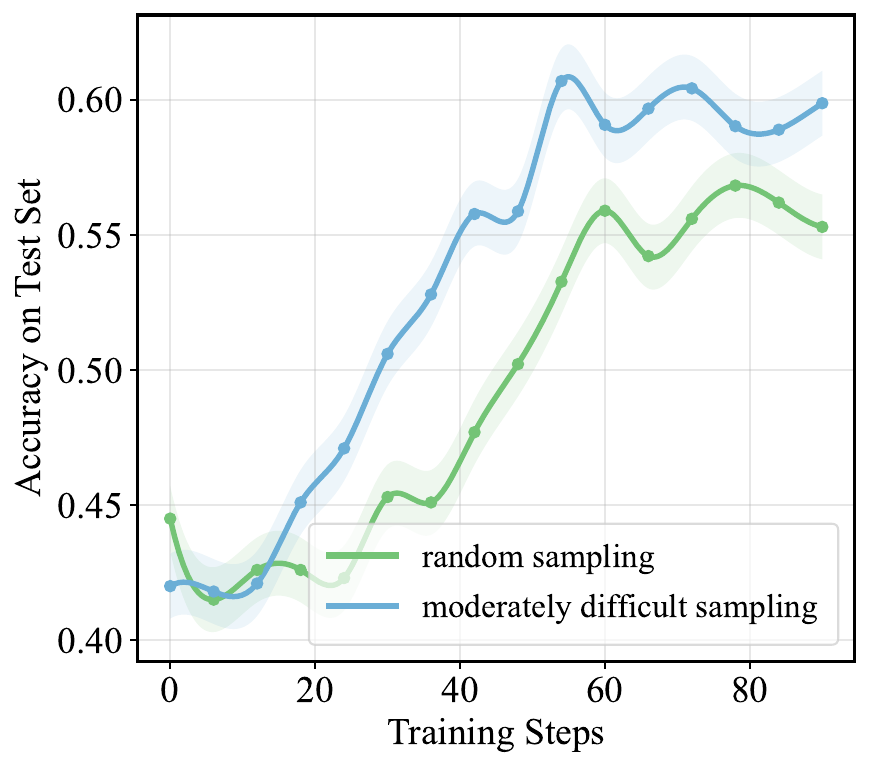}
    \captionof{figure}{Comparison of GRPO optimization on randomly sampled versus moderately difficult questions.}
    \label{fig:rq5_difficulty}
\end{minipage}

\section{Cross-Disciplinary Performance on MMK12}

\begin{table}
\centering
\caption{Performance comparison across different disciplines in MMK12. Bold denotes the best performance among 7B models, and underline marks the best overall performance.}
\label{tab:mmk12_performance}
\resizebox{0.8\textwidth}{!}{%
\begin{tabular}{lccccc}
\toprule
\textbf{Model} & \textbf{Mathematics} & \textbf{Physics} & \textbf{Chemistry} & \textbf{Biology} & \textbf{Avg.} \\
\midrule

\multicolumn{6}{l}{\textbf{Closed-Source Models}} \\
Claude3.7-Sonnet & 57.4 & 53.4 & 55.4 & 55.0 & 55.3 \\
GPT-4o & 55.8 & 41.2 & 47.0 & 55.4 & 49.9 \\
o1 & \underline{81.6} & \underline{68.8} & \underline{71.4} & \underline{74.0} & \underline{73.9} \\
Gemini2-flash & 76.8 & 53.6 & 64.6 & 66.0 & 65.2 \\

\midrule
\multicolumn{6}{l}{\textbf{Open-Source General Models}} \\
InternVL2.5-VL-8B & 46.8 & 35.0 & 50.0 & 50.8 & 45.6 \\
Qwen-2.5-VL-7B & 58.4 & 45.4 & 56.4 & 54.0 & 53.6 \\
InternVL2.5-VL-38B & 61.6 & 49.8 & 60.4 & 60.0 & 58.0 \\
Qwen-2.5-VL-32B & 71.6 & 59.4 & 69.6 & 66.6 & 66.8 \\
InternVL2.5-VL-78B & 59.8 & 53.2 & 68.0 & 65.2 & 61.6 \\
Qwen-2.5-VL-72B & 75.6 & 64.8 & 69.6 & 72.0 & 70.5 \\

\midrule
\multicolumn{6}{l}{\textbf{Open-Source Reasoning Models}} \\
InternVL2.5-8B-MPO & 26.6 & 25.0 & 42.4 & 44.0 & 34.5 \\
InternVL2.5-38B-MPO & 41.4 & 42.8 & 55.8 & 53.2 & 48.3 \\
QVQ-72B-Preview & 61.4 & 57.4 & 62.6 & 64.4 & 61.5 \\
Adora & 63.6 & 50.6 & 59.0 & 59.0 & 58.1 \\
R1-Onevision & 44.8 & 33.8 & 39.8 & 40.8 & 39.8 \\
OpenVLThinker & 63.0 & 53.8 & 60.6 & 65.0 & 60.6 \\
MM-Eureka-7B & 71.2 & 56.2 & 65.2 & 65.2 & 64.5 \\
\textbf{DIVA-GRPO-7B (Ours)} & \textbf{78.3} & \textbf{62.2} & \textbf{69.6} & \textbf{70.7} & \textbf{70.2} \\
\bottomrule
\end{tabular}%
}
\end{table}

For the MMK12 dataset, we observed an interesting phenomenon: although our model is primarily trained on mathematics problems, it also demonstrates improved reasoning capabilities in physics, chemistry, and biology. The experimental results are summarized in Table \ref{tab:mmk12_performance}. Our model achieves the best performance among 7B models across all four disciplines and even surpasses many closed-source commercial and open-source general large models. Comparative results for other models can be found in \cite{meng2025mm}.

\section{Prompts Used in Experiments}
\label{app:prompts}

In this section, we provide the detailed prompts used in our pipeline, including variant generation, reasoning, and verification.

\subsection{Part I: Variant Generation}
\label{sec:variant_gen}

\textbf{Purpose:} Generate 5 distinct variants of a problem while preserving the exact same correct answer. Variants must include the token \texttt{<image>} and be wrapped in \texttt{<variant*>...</variant*>} tags.

\subsubsection*{Prompt 1: Variant Generation Template}
\label{prompt:variant_gen}

\begin{lstlisting}[style=promptstyle]
Generate 5 distinct variants of the following problem that:
1. Preserve the **exact same correct answer** as the original.
2. Use **significantly different wording, sentence structure**.
3. You can adjust the sentence length—either making it concise (using streamlined language) or extending it (by explaining the question content in detail or complicating it with advanced language to increase difficulty)—but must ensure correctness.
4. Format of variants as, include <image> in the variants:
<variant1>[First variant's full text]</variant1>
<variant2>[Second variant's full text]</variant2>
...
<variant5>[Fifth variant's full text]</variant5>
**Original Problem:**
{Original_Problem_Text}
\end{lstlisting}

\noindent\textbf{Expected output (example):}
\begin{lstlisting}[style=promptstyle]
<variant1>[... text ... <image> ...]</variant1>
<variant2>[... text ... <image> ...]</variant2>
...
<variant5>[... text ... <image> ...]</variant5>
\end{lstlisting}

\subsection{Part II: Reasoning and Verification}

\subsubsection*{Prompt 2: Step-by-step Reasoning (\texttt{prompt\_format\_think\_step})}
\textbf{Purpose:} Generate a stepwise reasoning chain (3--5 steps) with detailed diagram observations, ending with the final answer boxed.

\begin{lstlisting}[style=promptstyle]
You are a mathematician, statistician, and geometer. Below, I will present you with a math problem along with its accompanying diagram. Please carefully observe the details in the image.
Given the text, images, generate a step-by-step reasoning process that logically leads to the correct result in the \boxed{}. Requirements: Flexible step count (3-5 steps)...
[full problem text and image here]
\end{lstlisting}

\noindent\textbf{Expected output (example):}
\begin{lstlisting}[style=promptstyle]
<step1> ... observation ... </step1>
<step2> ... inference ... </step2>
<step3> ... inference ... </step3>
<step4> ... conclusion ... </step4>
\boxed{[final answer]}
\end{lstlisting}

\subsubsection*{Prompt 3: Answer Comparison}
\textbf{Purpose:} Compare two numerical answers (\texttt{answer1}, \texttt{answer2}) ignoring formatting. Returns \texttt{<answer>True</answer>} if equal, else \texttt{False}.

\begin{lstlisting}[style=promptstyle]
Compare these two answers numerically, ignoring any formatting differences:
Answer 1: {result['answer1']}
Answer 2: {result['answer2']}

Extract just the numerical values from each answer and compare them. If the numerical values are the same, return True. Otherwise return False. Reply with <answer>True</answer> or <answer>False</answer>
\end{lstlisting}

\subsubsection*{Prompt 4: Reflection and Revision (\texttt{prompt\_format\_think\_step\_2})}
\textbf{Purpose:} Refine reasoning steps after receiving correctness feedback, without directly inserting the correct answer in the steps or reverse-engineering from it.

\begin{lstlisting}[style=promptstyle]
You are a mathematician, statistician, and geometer. Please carefully observe the details in the image.
You have already provided the reasoning steps and correct answer above.
Do you think your answer is correct? Revise or improve your reasoning steps based on the correct answer.
Emphasize!
- DO NOT include the correct answer in <step>.
- DO NOT reverse-engineer from the answer.
[full problem text and image here]
\end{lstlisting}

\section{Case Study of Train Dataset}
\label{appendix:casestudy}

\subsection{Case 1}
\paragraph{Problem:}
The figure \texttt{<image>} below shows the graphs of the functions: $y=x^{2}-1$, $y=x^{2}+6x+8$, $y=x^{2}-6x+8$, $y=x^{2}-12x+35$ in the same Cartesian coordinate system. The most likely graph for $y=x^{2}-6x+8$ is \_\_\_.

\begin{figure}[h!]
    \centering
    \includegraphics[width=0.5\textwidth]{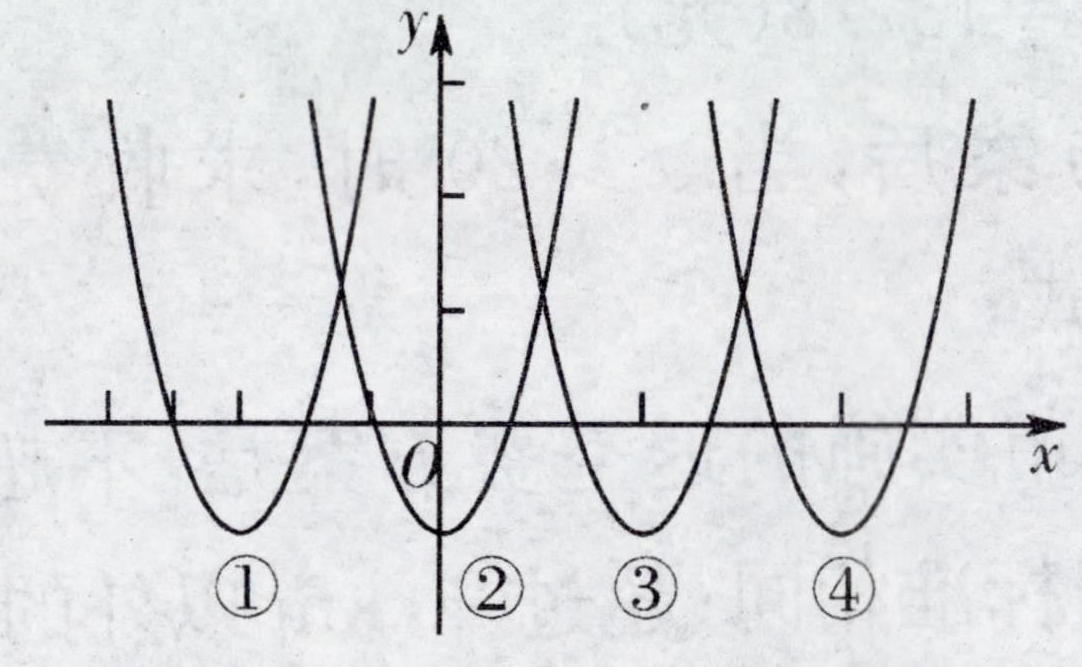}
    \caption{Original Problem}
\end{figure}

\subsubsection*{Difficult Variant}

\paragraph{Text and Image Variant:}
Embed both the original problem description and the question directly into the image, and replace the prompt text with the following fixed instruction:
\begin{quote}
\textit{As shown in the \texttt{<image>}, answer the question according to the figure.}
\end{quote}

\begin{figure}[h!]
    \centering
    \includegraphics[width=0.5\textwidth]{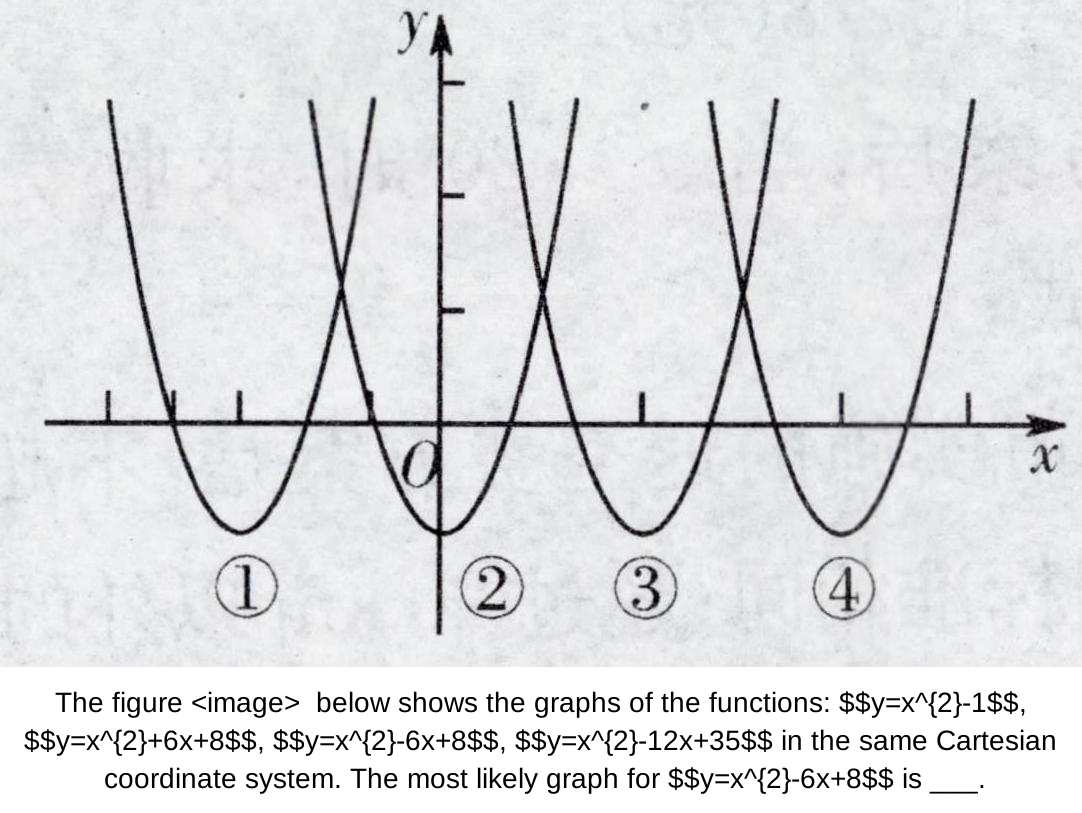}
    \caption{Vision-Dominant Problem}
\end{figure}

\paragraph{Image Variants:}
\begin{itemize}
    \item \textbf{Salt Noise:} The image is perturbed with salt noise to increase complexity in recognizing the object.
    \item \textbf{Blur:} The image is blurred, making it harder to focus on specific details.
    \item \textbf{Gauss Noise:} The image is perturbed with Gaussian noise.
    \item \textbf{Rotate:} The image is rotated, increasing difficulty in spatial reasoning.
\end{itemize}

\begin{figure}[h!]
    \centering
    \begin{subfigure}{0.45\textwidth}
        \centering
        \includegraphics[width=\linewidth]{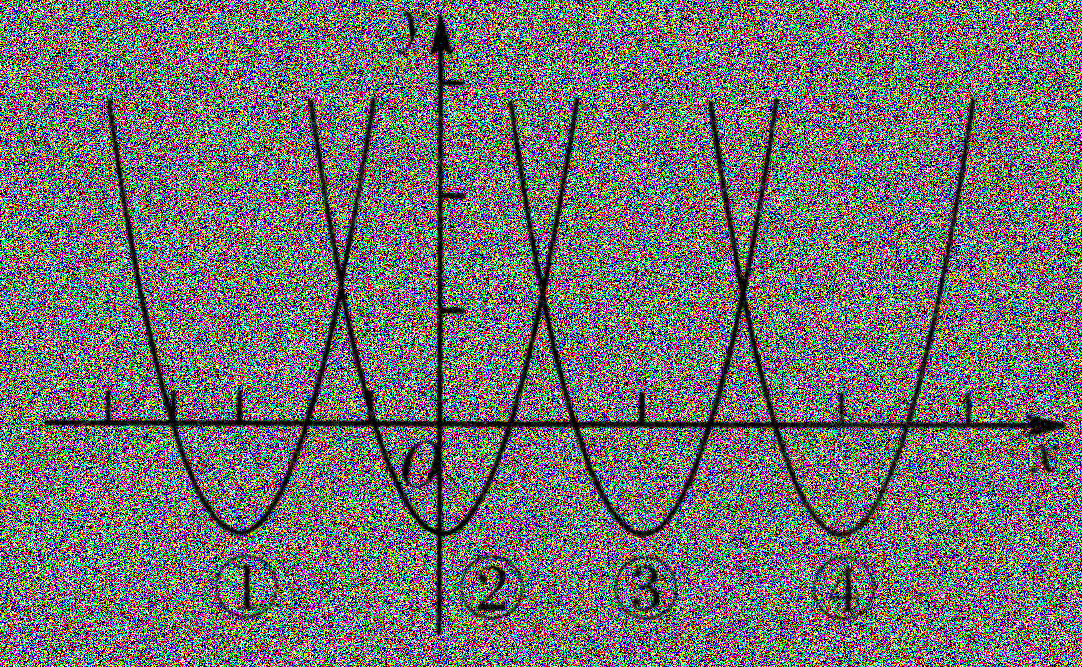}
        \caption{Salt Noise}
    \end{subfigure}
    \hfill
    \begin{subfigure}{0.45\textwidth}
        \centering
        \includegraphics[width=\linewidth]{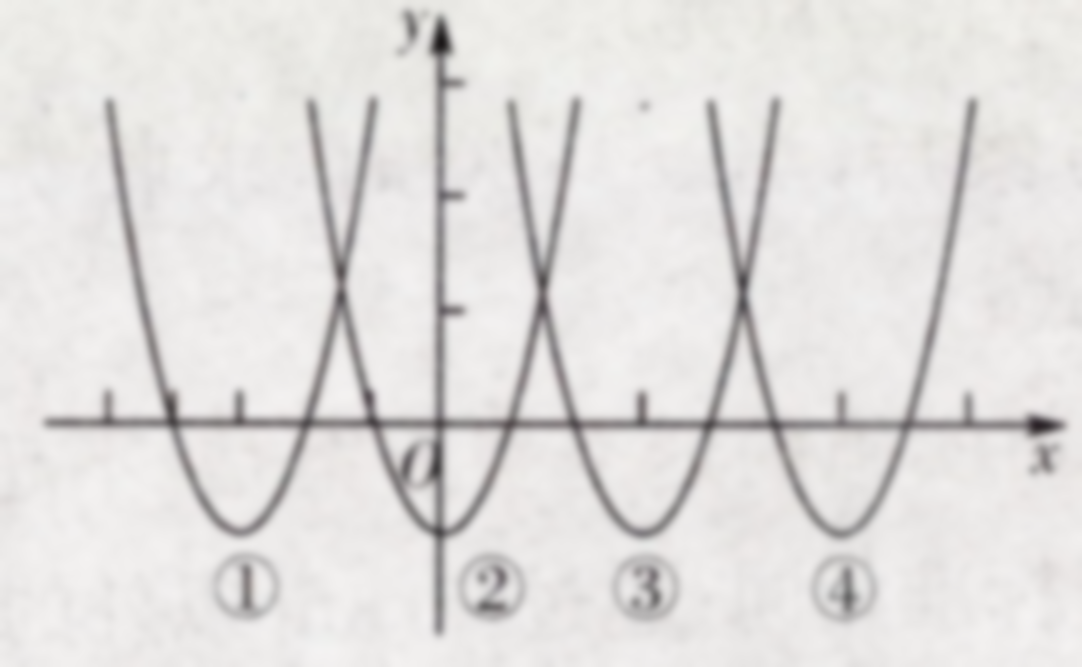}
        \caption{Blur}
    \end{subfigure}
    
    \vspace{1em} 
    
    \begin{subfigure}{0.45\textwidth}
        \centering
        \includegraphics[width=\linewidth]{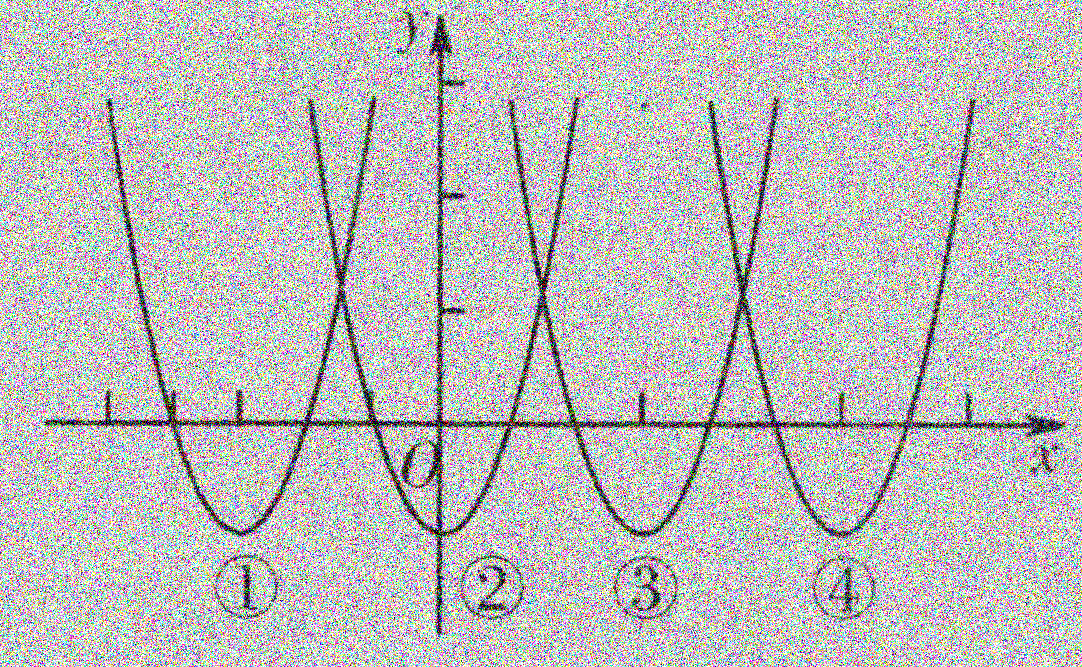}
        \caption{Gauss Noise}
    \end{subfigure}
    \hfill
    \begin{subfigure}{0.45\textwidth}
        \centering
        \includegraphics[width=0.6\linewidth]{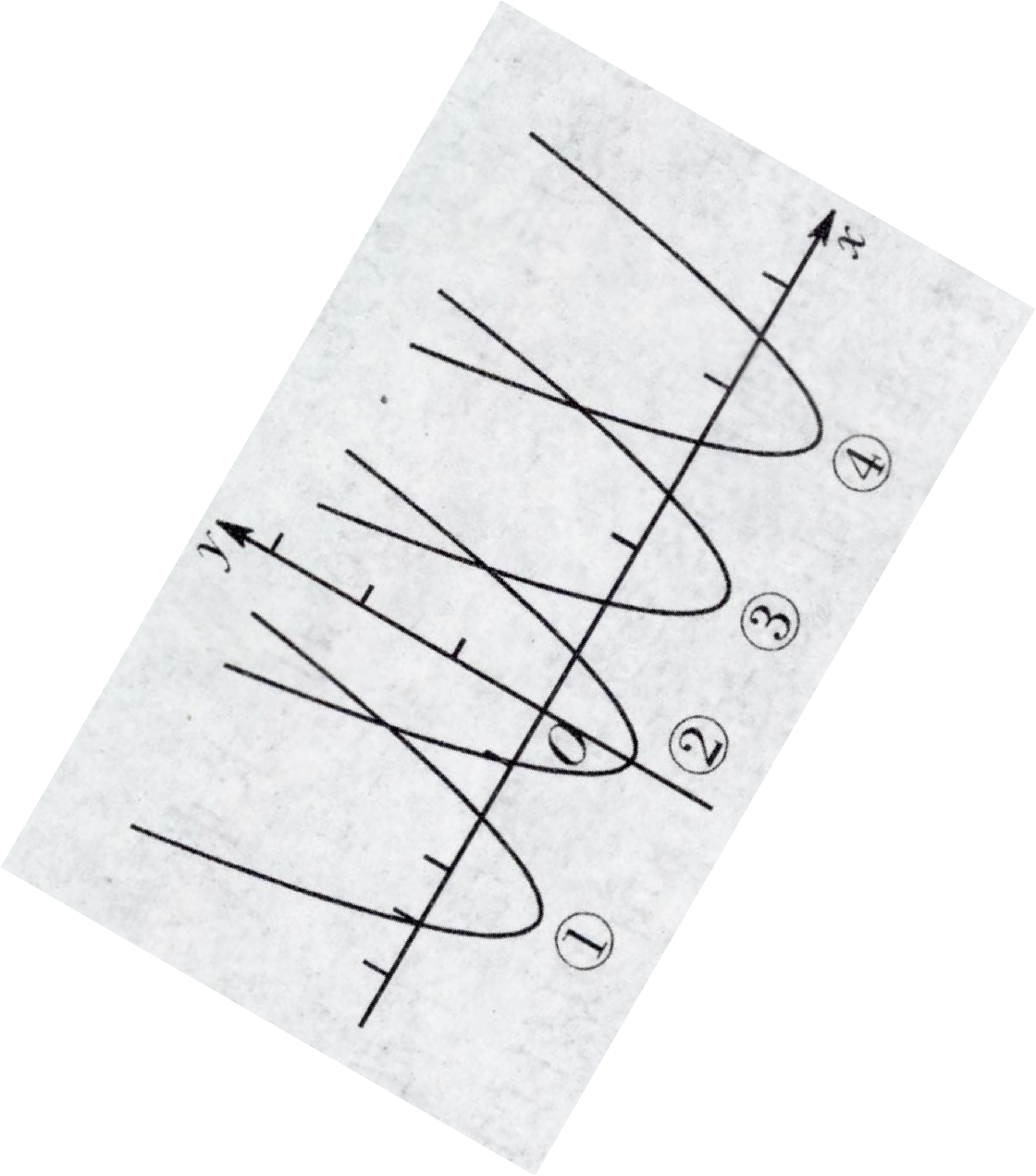}
        \caption{Rotate}
    \end{subfigure}

    \caption{Image Variants: Salt Noise, Blur, Gauss Noise, and Rotate.}
    \label{fig:image_variants1}
\end{figure}

\paragraph{Text Variants:}
Semantically equivalent reformulations of the original problem, preserving both meaning and the correct answer.

\paragraph{Problem Variant 1:} 
\begin{quote}
In the Cartesian plane depicted in the figure \texttt{<image>}, four distinct parabolic curves are illustrated, each corresponding to one of the following equations: $y=x^{2}-1$, $y=x^{2}+6x+8$, $y=x^{2}-6x+8$, and $y=x^{2}-12x+35$. Your challenge is to discern and point out the specific curve that is generated by the equation $y=x^{2}-6x+8$.
\end{quote}

\paragraph{Problem Variant 2:}
\begin{quote}
Among the provided graphs in the figure \texttt{<image>}, identify which one corresponds to the quadratic function $y=x^{2}-6x+8$. This function, when compared to the others—$y=x^{2}-1$, $y=x^{2}+6x+8$, and $y=x^{2}-12x+35$—displays a unique vertex and orientation. Your task is to select the graph that accurately represents $y=x^{2}-6x+8$.
\end{quote}

\subsubsection*{Easier Variant}

\paragraph{Think Step:}
\begin{itemize}
    \item \textbf{Step 1:} 
    The image displays four parabolic graphs labeled ①, ②, ③, and ④. Each graph represents a quadratic function of the form $y = ax^2 + bx + c$. The general characteristics of these parabolas include their orientation (all open upwards, indicating a positive leading coefficient) and their respective vertices.
    
    \item \textbf{Step 2:} 
    To identify which graph corresponds to the function $y = x^2 - 6x + 8$, we first rewrite this function in vertex form. Completing the square:
    \[ y = x^2 - 6x + 8 = (x^2 - 6x + 9) - 9 + 8 = (x-3)^2 - 1 \]
    This reveals that the vertex of the parabola is at the point $(3, -1)$.
    
    \item \textbf{Step 3:} 
    Next, we analyze the positions of the vertices for each graph:
    \begin{itemize}
        \item Graph ① has its vertex to the left of the y-axis.
        \item Graph ② has its vertex at the origin or very close to it.
        \item Graph ③ has its vertex at $(3, -1)$, as determined from the vertex form.
        \item Graph ④ has its vertex to the right of the y-axis but not at $(3, -1)$.
    \end{itemize}
    
    \item \textbf{Step 4:} 
    Thus, the graph with the vertex at $(3, -1)$ is graph ③. Given the analysis, the function $y = x^2 - 6x + 8$ corresponds to the graph labeled ③, as it is the only graph with the correct vertex coordinates.
\end{itemize}

\subsection{Case 2}
\paragraph{Problem:}
\begin{quote}
\texttt{<image>} Subtract all brown cubes. Subtract all blue cylinders. How many cubes are left?
\end{quote}

\begin{figure}[h]
    \centering
    \includegraphics[width=0.5\textwidth]{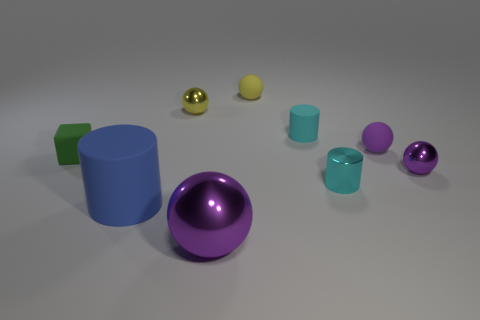}
    \caption{Original Problem}
\end{figure}

\subsubsection*{Difficult Variant}

\paragraph{Text and Image Variant:}
Embed both the original problem description and the question directly into the image, and replace the prompt text with the following fixed instruction:
\begin{quote}
\textit{As shown in the \texttt{<image>}, answer the question according to the figure.}
\end{quote}

\begin{figure}[h!]
    \centering
    \includegraphics[width=0.5\textwidth]{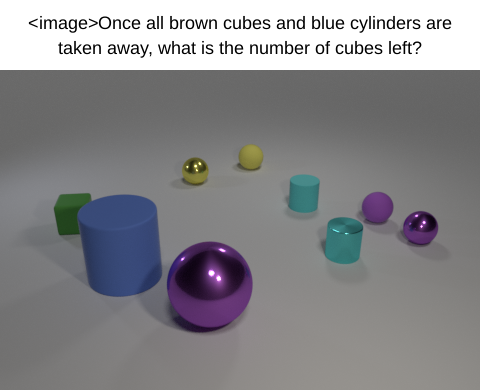}
    \caption{Vision-Dominant Problem}
\end{figure}

\paragraph{Image Variants:}
\begin{itemize}
    \item \textbf{Salt Noise:} The image is perturbed with salt noise to increase complexity in recognizing the object.
    \item \textbf{Blur:} The image is blurred, making it harder to focus on specific details.
    \item \textbf{Gauss Noise:} The image is perturbed with Gaussian noise.
    \item \textbf{Rotate:} The image is rotated, increasing difficulty in spatial reasoning.
\end{itemize}

\begin{figure}[h!]
    \centering
    \begin{subfigure}{0.45\textwidth}
        \centering
        \includegraphics[width=\linewidth]{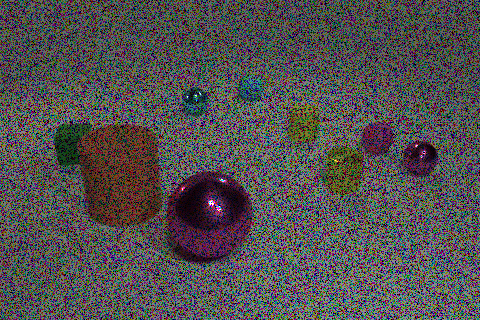}
        \caption{Salt Noise}
    \end{subfigure}
    \hfill
    \begin{subfigure}{0.45\textwidth}
        \centering
        \includegraphics[width=\linewidth]{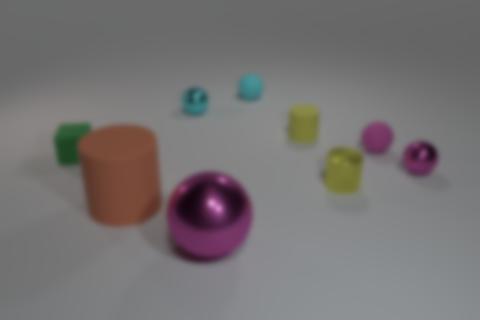}
        \caption{Blur}
    \end{subfigure}
    
    \vspace{1em} 
    
    \begin{subfigure}{0.45\textwidth}
        \centering
        \includegraphics[width=\linewidth]{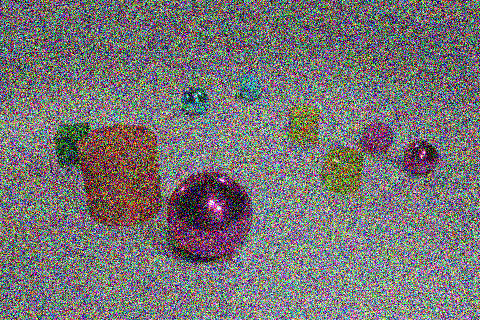}
        \caption{Gauss Noise}
    \end{subfigure}
    \hfill
    \begin{subfigure}{0.45\textwidth}
        \centering
        \includegraphics[width=0.9\linewidth]{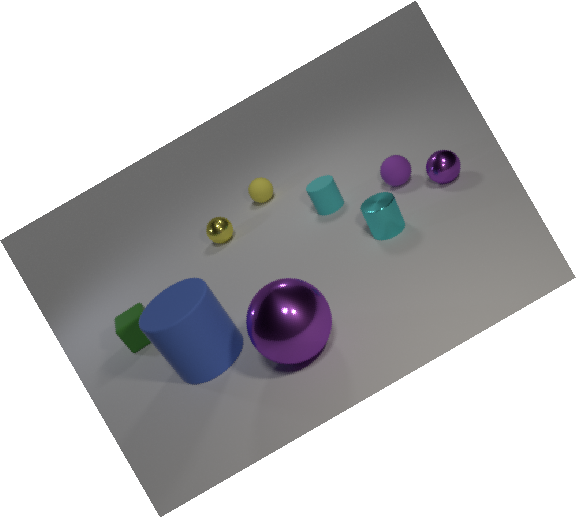}
        \caption{Rotate}
    \end{subfigure}

    \caption{Image Variants: Salt Noise, Blur, Gauss Noise, and Rotate.}
    \label{fig:image_variants2}
\end{figure}

\paragraph{Text Variants:}
Semantically equivalent reformulations of the original problem, preserving both meaning and the correct answer.

\paragraph{Problem Variant 1:}
\begin{quote}
\texttt{<image>} Once all brown cubes and blue cylinders are taken away, what is the number of cubes left?
\end{quote}

\paragraph{Problem Variant 2:}
\begin{quote}
\texttt{<image>} Remove all brown cubes and all blue cylinders. How many cubes remain?
\end{quote}

\subsubsection*{Easier Variant}

\paragraph{Think Step:}
\begin{itemize}
    \item \textbf{Step 1:} 
    Leftmost object: a single cube, distinctly green.
    \begin{itemize}
        \item Immediately to the right of that cube: a large, dark-blue cylinder.
        \item Farther right: two smaller cylinders of a lighter, cyan-blue hue.
        \item Several spheres (purple, gold, yellow) appear, but these are not cubes or cylinders.
        \item No object in the scene exhibits a brown color.
    \end{itemize}
    Thus, only one cube (green) is present, and there are three blue-family cylinders (one dark, two cyan).

    \item \textbf{Step 2:} 
    Interpret the problem instructions with the observed inventory.
    \begin{itemize}
        \item ``Subtract all brown cubes.'' Because none exist, this operation leaves the cube count unaffected.
        \item ``Subtract all blue cylinders.''
    \end{itemize}

    \item \textbf{Step 3:} 
    Although three cylinders satisfy the color condition, removing cylinders changes only the cylinder inventory; cubes remain untouched. Perform the mandated subtractions.
    \begin{itemize}
        \item After eliminating zero brown cubes, the cube inventory is the same as noted in Step 1.
    \end{itemize}

    \item \textbf{Step 4:} 
    Removing the three blue cylinders affects only the cylinder category; the cube category remains exactly as before. Summarize the effect of both operations.
    \begin{itemize}
        \item None of the specified subtractions alters the original cube count, so the final cube tally equals the initial cube tally recorded in Step 1.
    \end{itemize}
\end{itemize}

\section{Case Study of Test Dataset}

\subsection{Case 1}

\paragraph{Problem.}
As shown in the figure, $\triangle ABC \cong \triangle ADE$. From the figure we have $\angle B = 70^\circ$, $\angle C = 30^\circ$, and $\angle DAC = 35^\circ$. Find $\angle EAC$.

\noindent\textbf{Choices:} \quad A: $40^\circ$ \quad B: $45^\circ$ \quad C: $35^\circ$ \quad D: $25^\circ$

\begin{figure}[htbp]
    \centering
    \includegraphics[width=0.35\textwidth]{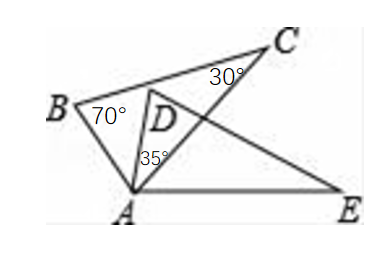} 
    \caption{Geometry problem: MathVerse 7.}
\end{figure}

\subsubsection*{Our Model — Final Answer: \(\boxed{\text{B: }45^\circ}\)}

\paragraph{Complete derivation (external, step-by-step).}
\begin{enumerate}[leftmargin=*, noitemsep, topsep=2pt]
  \item In $\triangle ABC$, the triangle angle-sum gives $\angle BAC = 180^\circ - \angle B - \angle C = 180^\circ - 70^\circ - 30^\circ = 80^\circ$.
  \item By congruence $\triangle ABC \cong \triangle ADE$, corresponding angles are equal; hence $\angle DAE = \angle BAC = 80^\circ$.
  \item At vertex $A$, the angle decomposition yields $\angle DAE = \angle DAC + \angle CAE$. Given $\angle DAC = 35^\circ$, we have $\angle CAE = 80^\circ - 35^\circ = 45^\circ$.
  \item Therefore $\angle EAC = \angle CAE = 45^\circ$, corresponding to choice B.
\end{enumerate}

\subsubsection*{Other Model (Baseline) — Reported Answer: \(\boxed{\text{A: }40^\circ}\)}

\paragraph{Reported derivation (reconstructed from the baseline output).}
\begin{enumerate}[leftmargin=*, noitemsep, topsep=2pt]
  \item The baseline asserts that by congruence $\angle DAE = \angle BAC$ (this assertion is geometrically correct in form).
  \item The baseline then (incorrectly) reads or treats $\angle BAC$ as $35^\circ$ and reads $\angle CAD$ (or $\angle DAC$) as $30^\circ$ (this is a misreading of the labels in the figure / given data).
  \item Using the (incorrect) equality $\angle BAC = \angle CAD + \angle EAC$, the baseline computes $\angle EAC = 35^\circ - 30^\circ = 5^\circ$.
  \item The baseline then presents an alternative arithmetic attempt: it treats some combination as an ``exterior'' or summed angle, computing $35^\circ + 30^\circ = 65^\circ$, and then (without a valid geometric justification) performs $65^\circ - 70^\circ$ and reports $40^\circ$ as the final choice.
\end{enumerate}

\paragraph{Explicit error analysis.}
\begin{itemize}[leftmargin=*, noitemsep, topsep=2pt]
  \item \textbf{Label/misreading error.} The baseline's step that sets $\angle BAC = 35^\circ$ is inconsistent with the given values $\angle B = 70^\circ$ and $\angle C = 30^\circ$, which imply $\angle BAC = 80^\circ$. Misidentifying which labeled angle corresponds to $\angle BAC$ is the root cause.
  \item \textbf{Invalid arithmetic and unjustified operations.} The baseline mixes different angle-sum operations (interior vs. exterior) without geometric justification and carries out arithmetic (e.g., $65^\circ - 70^\circ \rightarrow 40^\circ$ in the reported text) that is both algebraically incorrect and lacks geometric meaning in context.
  \item \textbf{Broken logical chain.} Because the baseline's premises are incorrect, subsequent deductions (such as subtracting or summing those incorrect angles) do not follow from valid geometric facts (triangle angle-sum, congruence correspondence, or correct angle decomposition).
\end{itemize}

\paragraph{Summary — Why Our Model Is Preferable in This Case.}
\begin{itemize}[leftmargin=*, noitemsep, topsep=2pt]
  \item \textbf{Accurate visual interpretation:} Our model correctly identifies the labels and reads $\angle B$, $\angle C$, and $\angle DAC$ from the figure, avoiding the label-confusion that the baseline exhibits.
  \item \textbf{Correct and verifiable reasoning:} Each step follows a basic geometric rule (angle-sum in a triangle, congruence $\Rightarrow$ corresponding angles equal, and decomposition of an angle at a vertex), and the arithmetic is straightforward and checkable.
  \item \textbf{Transparent presentation:} The derivation above is externalized and suitable for inclusion in an appendix so that reviewers can trace the reasoning and confirm correctness.
\end{itemize}

\subsection{Case 2}

\paragraph{Problem.}
Write the set of numbers represented on the number line in interval notation.

\begin{figure}[htbp]
    \centering
    \includegraphics[width=0.45\textwidth]{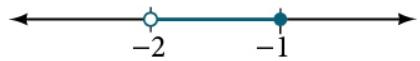}
    \caption{Interval notation problem: MathVerse 474.}
\end{figure}

\subsubsection*{Our Model — Final Answer: \(\boxed{(-2, -1]}\)}

\paragraph{Complete derivation (external, step-by-step).}
\begin{enumerate}[leftmargin=*, noitemsep, topsep=2pt]
  \item Inspect the endpoints shown on the number line: one endpoint is at $-2$ and the other is at $-1$.
  \item Determine inclusion/exclusion from the marker style:
    \begin{itemize}[leftmargin=*, topsep=0pt]
      \item $-2$ is marked with an \emph{open} (hollow) circle $\Rightarrow$ $-2$ is \emph{not} included.
      \item $-1$ is marked with a \emph{solid} (filled) circle $\Rightarrow$ $-1$ \emph{is} included.
    \end{itemize}
  \item Translate inclusion/exclusion to interval notation:
    \begin{itemize}[leftmargin=*, topsep=0pt]
      \item Excluding $-2$ uses a parenthesis `(' at the left.
      \item Including $-1$ uses a bracket `]' at the right.
    \end{itemize}
  \item Therefore the interval is $(-2, -1]$.
  \item In inequality form this corresponds to $-2 < x \le -1$.
\end{enumerate}

\subsubsection*{Other Model (Baseline) — Reported Answer: \(\boxed{(-2, -1)}\)}

\paragraph{Reported derivation (reconstructed from the baseline output).}
\begin{enumerate}[leftmargin=*, noitemsep, topsep=2pt]
  \item The baseline identifies the endpoints as $-2$ and $-1$.
  \item It (incorrectly) interprets both endpoint markers as \emph{open} circles, concluding that neither endpoint is included.
  \item From that (mis)interpretation it writes the interval using parentheses on both sides and reports $(-2, -1)$.
\end{enumerate}

\paragraph{Explicit error analysis.}
\begin{itemize}[leftmargin=*, noitemsep, topsep=2pt]
  \item \textbf{Visual misinterpretation error.} The baseline's central mistake is reading the marker at $-1$ as open when it is actually solid (closed). This single perceptual/mapping error changes `$\le$` to `$<$` at the right endpoint and thus changes the bracket type.
  \item \textbf{Notation consequence.} Interval notation is sensitive to endpoint inclusion: confusing a closed endpoint for an open one converts a bracket `]' to a parenthesis `)', producing a different set. Here that error changes the set by including or excluding the point $-1$.
  \item \textbf{Downstream impact.} Because the baseline misreads the marker, any inequality-form statement or set-membership statement it produces (e.g., $-2 < x < -1$) will be incorrect for points equal to $-1$.
\end{itemize}

\paragraph{Summary — Why Our Model Is Preferable in This Case.}
\begin{itemize}[leftmargin=*, noitemsep, topsep=2pt]
  \item \textbf{Correct visual-to-symbol mapping:} Our model correctly maps hollow vs.\ filled endpoint markers to exclusion vs.\ inclusion and therefore uses the appropriate parenthesis/bracket combination.
  \item \textbf{Clear, verifiable steps:} Each step is an elementary, checkable rule: identify endpoints, read marker style, convert to interval notation, and (optionally) supply the equivalent inequality. This transparency makes verification straightforward for reviewers.
  \item \textbf{Precise final expression:} The answer $(-2, -1]$ cleanly and unambiguously communicates the set of real numbers greater than $-2$ and less than or equal to $-1$, matching the figure where $-1$ is solid.
\end{itemize}

\subsection{Case 3}

\paragraph{Problem.}
Given the figure below, find the slope of the line.

\begin{figure}[htbp]
    \centering
    \includegraphics[width=0.4\textwidth]{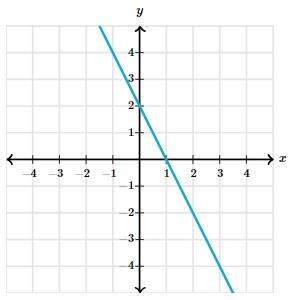} 
    \caption{Line slope problem: MathVerse 591.}
\end{figure}

\subsubsection*{Our Model — Final Answer: \(\boxed{-1}\)}

\paragraph{Complete derivation (external, step-by-step).}
\begin{enumerate}[leftmargin=*, noitemsep, topsep=2pt]
  \item The slope of a line passing through two points \((x_1, y_1)\) and \((x_2, y_2)\) is given by:
  \[
    \text{slope} = \frac{y_2 - y_1}{x_2 - x_1}.
  \]
  \item From the graph, identify two points on the line: \((-1, 2)\) and \((1, 0)\).
  \item Substitute these points into the slope formula:
  \[
    \text{slope} = \frac{0 - 2}{1 - (-1)} = \frac{-2}{2} = -1.
  \]
  \item Therefore, the slope of the line is \(-1\).
\end{enumerate}

\subsubsection*{Other Model (Baseline) — Reported Answer: \(\boxed{-2}\)}

\paragraph{Reported derivation (reconstructed from the baseline output).}
\begin{enumerate}[leftmargin=*, noitemsep, topsep=2pt]
  \item The baseline identifies two points on the line as \((1, 2)\) and \((0, 4)\).
  \item Using the slope formula:
  \[
    \text{slope} = \frac{2 - 4}{1 - 0} = \frac{-2}{1} = -2.
  \]
  \item The baseline reports the final slope as \(-2\).
\end{enumerate}

\paragraph{Explicit error analysis.}
\begin{itemize}[leftmargin=*, noitemsep, topsep=2pt]
  \item \textbf{Incorrect point selection:} The baseline misreads the coordinates of points on the line. Using \((1, 2)\) and \((0, 4)\) does not correspond to the actual line in the figure.
  \item \textbf{Consequent incorrect slope:} Because the points are wrong, the computed slope \(-2\) does not match the true slope of the line, which is \(-1\).
  \item \textbf{Logical chain broken:} All subsequent reasoning is mathematically consistent with the chosen points, but the premise (point selection) is flawed, leading to an incorrect final answer.
\end{itemize}

\paragraph{Summary — Why Our Model Is Preferable in This Case.}
\begin{itemize}[leftmargin=*, noitemsep, topsep=2pt]
  \item \textbf{Accurate visual interpretation:} Our model correctly identifies the actual points on the line from the figure.
  \item \textbf{Correct reasoning:} Step-by-step application of the slope formula leads directly to the correct answer \(-1\).
  \item \textbf{Transparent derivation:} The reasoning is fully externalized, making it easy for reviewers to verify correctness and consistency with the figure.
\end{itemize}
\section{Cases That Fail to Generate an Advantage}

\subsection{Cases That Are Consistently Answered Correctly}

\subsubsection{Case 1}
\paragraph{Problem.}
As shown in the figure, point $P$ is the intersection of the angle bisectors of the interior angle $\angle ABC$ and the exterior angle $\angle ACD$ of $\triangle ABC$. The distance from point $P$ to the line $AC$ is $6 \, \text{cm}$. What is the distance from point $P$ to the line $AB$?

\begin{figure}[htbp]
    \centering
    \includegraphics[width=0.4\textwidth]{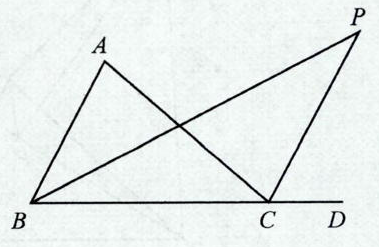}
    \caption{Geometry problem involving angle bisectors.}
\end{figure}

\noindent\textbf{Answer:} \(\boxed{6}\)

\subsubsection{Case 2}
\paragraph{Problem.}
As shown in the figure, in rhombus $ABCD$, the diagonals $AC$ and $BD$ intersect at point $O$, and $OE \parallel DC$ intersects $BC$ at point $E$. If $AD = 8 \, \text{cm}$, then the length of $OE$ is \underline{\hspace{0.7cm}} $\text{cm}$.

\begin{figure}[htbp]
    \centering
    \includegraphics[width=0.4\textwidth]{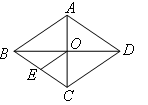}
    \caption{Geometry problem involving a rhombus.}
\end{figure}

\noindent\textbf{Answer:} \(\boxed{4}\)

\subsubsection{Case 3}
\paragraph{Problem.}
As shown in the figure, in $\triangle ABC$, points $D$ and $E$ are on $AB$ and $AC$, respectively, and $DE \parallel BC$. If $AD = 1$ and $AB = 4$, then $\frac{DE}{BC} = \underline{\hspace{0.7cm}}$.

\begin{figure}[htbp]
    \centering
    \includegraphics[width=0.35\textwidth]{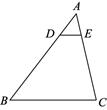}
    \caption{Geometry problem involving similar triangles.}
\end{figure}

\noindent\textbf{Answer:} \(\boxed{\frac{1}{4}}\)

\subsubsection{Case 4}
\paragraph{Problem.}
For any non-zero real numbers $a$, $b$, if the operation principle of $a \otimes b$ is represented by the flowchart shown in the figure, then $3 \otimes 2 = \underline{\hspace{0.7cm}}$.

\begin{figure}[htbp]
    \centering
    \includegraphics[width=0.4\textwidth]{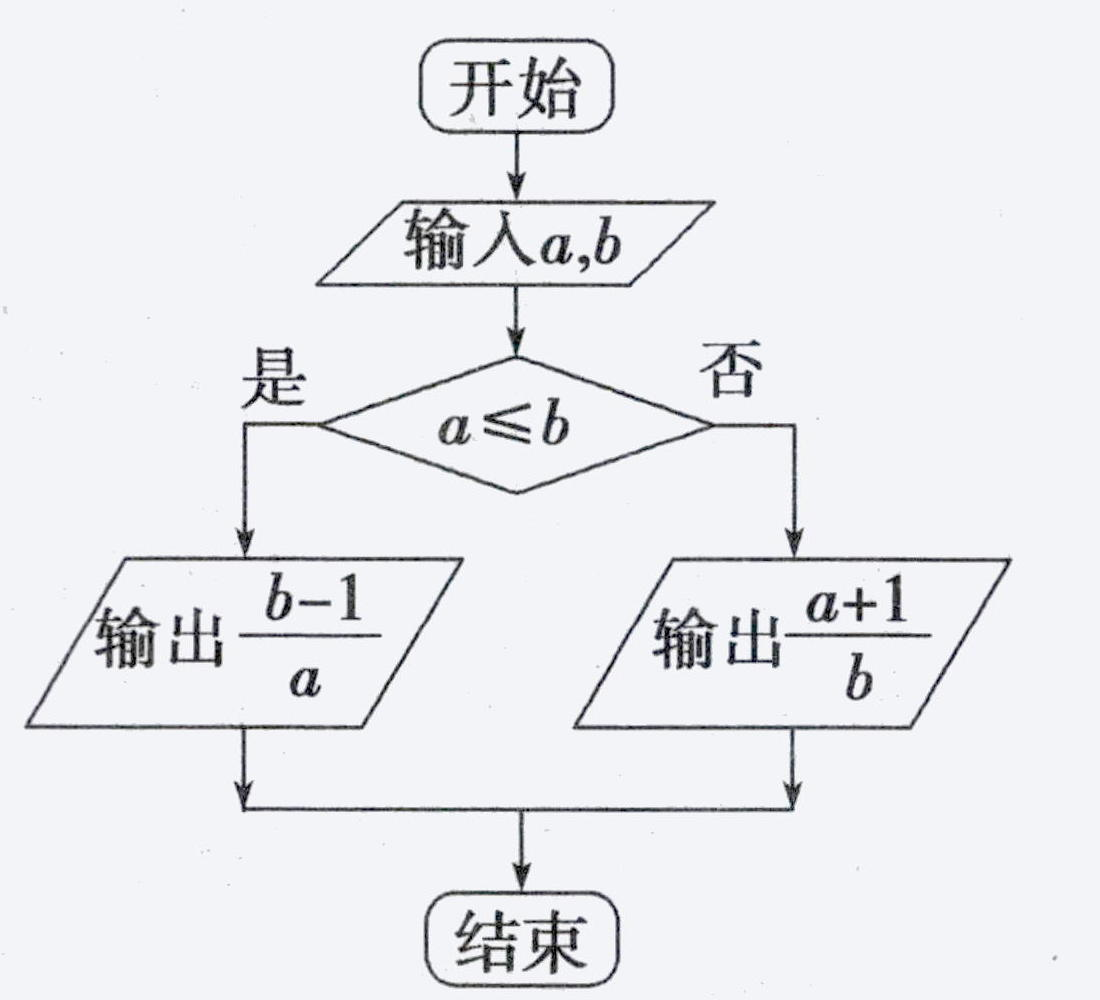}
    \caption{Flowchart operation problem.}
\end{figure}

\noindent\textbf{Answer:} \(\boxed{2}\)

\subsection{Cases That Are Consistently Answered Incorrectly}

\subsubsection{Case 1}
\paragraph{Problem.}
Figure A is a histogram showing the frequency distribution of the monthly income of the staff and faculty of a school, where it is known that the frequency of the first group from left to right is 80. In the sample, the frequencies of the monthly income (unit: yuan) in the intervals $[1000,1500)$, $[1500,2000)$, $[2000,2500)$, $[2500,3000)$, $[3000,3500)$, $[3500,4000]$ are denoted as $A_1, A_2, \dots, A_6$ respectively. Figure B is the flowchart of the algorithm used to estimate the average monthly income of the school's staff and faculty. What is the output of $S$? (Answer with a number.)

\begin{figure}[htbp]
    \centering
    \includegraphics[width=0.55\textwidth]{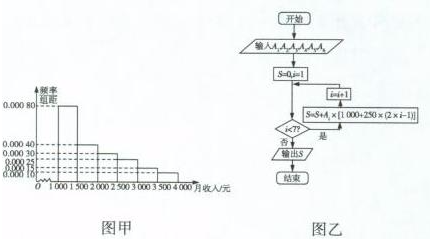}
    \caption{Complex statistics and flowchart problem.}
\end{figure}

\noindent\textbf{Answer:} \(\boxed{1962.5}\)

\subsubsection{Case 2}
\paragraph{Problem.}
To process a part with the shape shown in the figure, calculate the degree of the inclined angle $\alpha$ based on the dimensions indicated (unit: mm). (Use a calculator, accurate to 1 second).

\begin{figure}[htbp]
    \centering
    \includegraphics[width=0.4\textwidth]{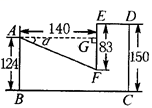}
    \caption{Industrial geometry calculation.}
\end{figure}

\noindent\textbf{Answer:} \(\boxed{22^\circ \, 9' \, 12''}\)

\subsubsection{Case 3}
\paragraph{Problem.}
Black and white hexagonal floor tiles are arranged according to the pattern shown in the figure; then the number of white floor tiles in the $n$-th pattern is \underline{\hspace{0.7cm}}.

\begin{figure}[htbp]
    \centering
    \includegraphics[width=0.4\textwidth]{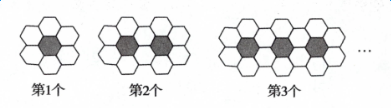}
    \caption{Pattern recognition sequence problem.}
\end{figure}

\noindent\textbf{Answer:} \(\boxed{a_{n} = 6n - 2(n-1) = 4n+2}\)

\subsubsection{Case 4}
\paragraph{Problem.}
As shown in the figure, there is a geometric solid composed of a hemisphere and a square pyramid, as shown in the three views. The volume of the solid is \underline{\hspace{0.7cm}}.

\begin{figure}[htbp]
    \centering
    \includegraphics[width=0.4\textwidth]{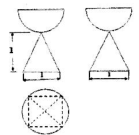}
    \caption{Solid geometry from three-view drawing.}
\end{figure}

\noindent\textbf{Answer:} \(\boxed{\frac{1}{3} + \frac{\sqrt{2}}{6}\pi}\)

\subsection{Case Analysis}

Analyzing samples that yield consistently correct or incorrect responses reveals distinct patterns.

\paragraph{Consistently Correct Cases:}
We observe that our approach does not offer a significant advantage for problems characterized by clear images, simple logic, and well-defined results that are easy to express and verify. This is primarily because the visual information is unambiguous, and the solution path is sufficiently straightforward for the base Vision-Language Model. Furthermore, the final answers are often simple integers or fractions, which simplifies verification.

\paragraph{Consistently Incorrect Cases:}
Conversely, for problems involving highly complex image content or those where the results are difficult to describe or verify, achieving a correct answer remains challenging even when reasoning steps are provided. This difficulty arises from the inherent complexity of inferring the correct answer through reasoning alone, which is exacerbated by the difficulty in verifying the intermediate and final results, thereby complicating the optimization task.

\section{GenAI Usage Disclosure}
In this work, Large Language Models were utilized to aid in refining and polishing the writing. The author(s) affirm that none of the data analysis, methodology development, or theoretical contributions in this paper involved content generation or research assistance from the GenAI tool. All aspects of the study, including code and data, were developed independently.

\section{Limitations}
Although DIVA-GRPO effectively alleviates reward sparsity and improves performance on multimodal reasoning tasks, some limitations remain. First, the variant generation process for difficult problems requiring intermediate reasoning steps relies on external models, which may introduce potential biases. Second, difficulty-weighted scaling involves hyperparameters that need careful tuning, which reduces out-of-the-box usability. Finally, DIVA-GRPO has been primarily evaluated on multimodal reasoning tasks, and its generalization to non-multimodal tasks or tasks with fundamentally different reward structures remains to be explored.

\end{document}